%% file: neurips_2026.tex
\documentclass{article}

\PassOptionsToPackage{numbers,sort&compress}{natbib}
\usepackage[preprint]{neurips_2026}

\usepackage[utf8]{inputenc} % allow utf-8 input
\usepackage[T1]{fontenc}    % use 8-bit T1 fonts
\usepackage[colorlinks=true,citecolor=blue,urlcolor=blue,linkcolor=black]{hyperref}       % hyperlinks
\usepackage{xurl}           % better line breaks for long URLs
\usepackage{graphicx}       % \includegraphics
\usepackage{booktabs}       % professional-quality tables
\usepackage{multirow}        % multi-row cells in tables
\usepackage{float}          % provides the [t] float specifier
\usepackage{amsmath}        % math environments, \eqref, \lVert\cdot\rVert
\usepackage{amsthm}         % theorem styles/environments (\theoremstyle, \newtheorem)
\usepackage{amssymb}        % additional math symbols
\usepackage{amsfonts}       % blackboard math symbols
\usepackage{nicefrac}       % compact symbols for 1/2, etc.
\usepackage{microtype}      % microtypography
\usepackage{xcolor}         % colors
\usepackage{enumitem}       % customize list spacing/margins
\usepackage{subcaption}
\usepackage{tikz}
\usetikzlibrary{arrows.meta,positioning,calc,fit}
\floatstyle{ruled}
\newfloat{algorithm}{tbp}{loa}
\floatname{algorithm}{Algorithm}
\floatstyle{plain}

\newtheorem{definition}{Definition}
\newtheorem{assumption}{Assumption}
\newtheorem{theorem}{Theorem}
\newtheorem{lemma}{Lemma}

\newtheorem{corollary}{Corollary}
\newtheoremstyle{myremark}% name
  {\topsep}% space above
  {\topsep}% space below
  {\normalfont}% body font
  {}% indent amount
  {\bfseries}% head font
  {.}% punctuation after head
  {.5em}% space after head
  {}% head spec
\theoremstyle{myremark}
\newtheorem{remark}{Remark}[section]
\newcommand{\CB}{\mathrm{CB}}
\newcommand{\spl}{\mathrm{softplus}}
\newcommand{\Pl}{\mathcal{P}_{\mathrm{legit}}}
\newcommand{\Pf}{\mathcal{P}_{\mathrm{fraud}}}
\newcommand{\Qr}{\mathcal{Q}_{\mathrm{rand}}}
\newcommand{\bv}{\mathbf{v}}
\newcommand{\bc}{\mathbf{c}}
\newcommand{\bx}{\mathbf{x}}

\newcommand{\bw}{\mathbf{w}}
\newcommand{\bW}{\mathbf{W}}
\newcommand{\bU}{\mathbf{U}}

\title{Feature-Context Consistency: Unsupervised Adversarial Detection with Drift Stability on Attributed Graphs}

\author{%
  Firas Bayram \\
  SnT, University of Luxembourg\\
  Luxembourg \\
  \texttt{firas.bayram@uni.lu} \\
  \And
  Maxime Cordy \\
  SnT, University of Luxembourg\\
  Luxembourg \\
  \texttt{maxime.cordy@uni.lu} \\
}

\begin{document}

\maketitle

\begin{abstract}
  Adversarial detection on evolving attributed graphs faces two challenges: fraud labels arrive late or not at all, and natural distributional drift erodes any trained decision boundary. Modern detectors can reach strong detection rates when labels are available and the distribution is stationary; sustaining that quality without labels and under drift is the open problem. We identify \emph{feature-context consistency} --- the alignment between a node's attributes and its aggregated neighborhood attributes --- as a single structural signal that addresses both. We formalize it as the \textbf{Context Boost} (CB) score within a Restricted Boltzmann Machine framework (CB-RBM) and prove two complementary worst-case guarantees: a strict separation between the expected CB of legitimate and adversarial nodes, with gap growing linearly in model capacity, whenever the adversary's context stays far enough from legitimate neighborhood context; and a Wasserstein-Lipschitz bound on how far the CB distribution moves under natural drift. We validate on XBlock Ethereum phishing ($\sim$153K nodes) and Reddit banned-user detection ($\sim$11K nodes), with two further datasets in the appendix spanning diverse domains. With zero labels, CB-RBM achieves AUROC~$\geq 0.96$ on four adversarial attacks and keeps FPR close to its calibrated $5\%$ target under drift --- outperforming supervised GNNs (hundreds of labels), unsupervised graph-anomaly baselines, and a standard RBM whose FPR collapses to $89$--$100\%$. What typically requires separate mechanisms --- label-free operation and drift stability --- here follows from a single structural regularity of the data. Code: \url{https://github.com/serval-uni-lu/CB-RBM}.
\end{abstract}

% ── 1. Introduction ───────────────────────────────────────────────────────────
\section{Introduction}
\label{sec:intro}

% P1 — Domain motivation: why attributed graphs:
Within networked systems, adversarial actors (e.g., scammers in cryptocurrency transaction graphs, banned users on social platforms, fake reviewers in e-commerce networks) operate at scale to bypass systems' security and business protection measures. The stakes are substantial: cryptocurrency fraud alone reached \$9.3~billion in 2024~\cite{ic3_2023,ic3_2024}, with comparable damage from coordinated review rings, account abuse, and platform manipulation~\cite{rosenberg2021adversarial}. These systems are naturally represented as \emph{attributed graphs}, where nodes carry features and edges encode relationships, and detection requires exploiting both channels together. Adversarial manipulation then consists of introducing new nodes and edges in the graph following a specific (adversarial) feature distribution.

% P2 — The two problems, given the substrate:
Yet deploying detectors on these graphs faces two compounding constraints. First, \emph{label scarcity}: adversarial entities typically operate for days or weeks before being flagged---if they are ever reported at all---leaving no reliable supervisory signal during the critical early window when the most damage is done~\cite{wu2020phishers}. Second, \emph{distributional drift}~\cite{yuan2026continual,wu2024graph}: user behavior, network topology, and feature distributions evolve continuously through organic growth and churn~\cite{auer2024technology}, inducing natural concept drift~\cite{lu2018learning}. A detector trained on a snapshot implicitly encodes the feature distribution of that moment; as legitimate entities migrate to new regions of feature space, the learned decision boundary becomes stale---legitimate activity that was once well within the normal range now falls outside it, triggering escalating false alarms~\cite{zambon2018concept}. The standard remedy is periodic retraining, but this loops back to the first problem. Retraining requires fresh labels, which arrive late and incomplete. In particular, supervised fraud GNNs~\cite{dou2020enhancing,liu2020alleviating,liu2021pick} depend on exactly these labels, and unsupervised graph detectors such as DOMINANT~\cite{ding2019deep} and CoLA~\cite{liu2021anomaly} carry no guarantee against drift-induced false alarms. Without a mechanism that is inherently stable under drift, practitioners are trapped in a cycle where the model is either stale or starved of supervision.

% P4 — The insight that solves both simultaneously:
We identify \emph{feature-context consistency}---the alignment between an entity's own attributes and the aggregated attributes of its neighbors---as a single structural signal that simultaneously addresses both problems. A legitimate entity develops feature patterns that are statistically coherent with those of its neighbors, because repeated interaction shapes both parties. An adversarial entity, by contrast, is injected without this history: its features may mimic legitimate patterns, but its neighborhood context is absent, random, or structurally inconsistent---and the adversary cannot unilaterally control the features of its neighbors. This enables \emph{label-free detection}. The same signal is also robust to drift, because natural drift moves node features and their mean-pooled context in tandem and preserves their alignment, which gives \emph{drift stability} without retraining. Both properties follow from the same structural fact: in attributed graphs with meaningful relationships, feature-context correlation is nonzero for legitimate entities and broken for early-stage adversarial injections.

% P5 — EBM/RBM as natural candidate + CB-RBM:
Operationalizing this insight requires a label-free model whose score function is analytically decomposable into feature and feature-context contributions, so the loss of coherent context for adversarial nodes is expressible in closed form. Energy-based models satisfy the label-free requirement, learning an unnormalized density from legitimate data alone~\cite{du2019implicit}. Among these, the Restricted Boltzmann Machine~\cite{hinton2002training}, a well-studied neural-network EBM, additionally provides a closed-form free energy whose per-hidden-unit decomposition supports the separation and drift bounds in \S\ref{sec:theory} --- unlike GNNs or alternative EBMs (score-based, energy-GAN). RBMs carry a track record in unsupervised fraud and anomaly detection~\cite{pumsirirat2018credit,fiore2013network,korycki2023adversarial}, yet remain unexplored for attributed-graph adversarial detection. A standard RBM, however, models features in isolation --- capturing what an entity \emph{looks like}, not whether that appearance is \emph{consistent with its surroundings}~\cite{hinton2012practical}. We extend the Gaussian-Bernoulli RBM with a gated context pathway that conditions the energy on aggregated neighborhood features, yielding the \textbf{Context-Boosted RBM (CB-RBM)}. The \textbf{Context Boost} (CB) score --- the reduction in free energy from conditioning on neighborhood context --- directly measures feature-context consistency and underlies both our detection mechanism and drift guarantees.

%  Contributions:
Our main contributions are as follows. 
\begin{itemize}[leftmargin=*,topsep=2pt,itemsep=2pt,parsep=0pt,partopsep=0pt] 
    \item \textbf{Method} (\S\ref{sec:method}). We propose \textbf{CB-RBM}, an RBM with a GLU-gated context pathway conditioning the free energy on aggregated neighborhood features, and the \textbf{Context Boost} (CB) score---the free-energy reduction from context---as a label-free detection signal.
    \item \textbf{Theoretical framework} (\S\ref{sec:theory}). Under five empirically verifiable assumptions, we prove a strict separation between the expected CB of legitimate and adversarial nodes (Theorem~\ref{thm:sep}) and a Wasserstein-Lipschitz bound on how far the CB distribution moves under drift (Theorem~\ref{thm:drift}, Corollary~\ref{cor:quantile})---coupling label-free detection and drift stability through a single feature-context correlation structure.
    \item \textbf{Detection boundary analysis} (\S\ref{sec:cb_dist},~\S\ref{sec:discussion}). The $\rho_1^{\mathrm{eff}}$ analysis reveals that CB detects \emph{absence of legitimate network context} rather than feature deviation, characterizing the detection signal and delineating its scope. 
    \item \textbf{Empirical validation} (\S\ref{sec:experiments}). On XBlock Ethereum phishing~\cite{chen2020phishing} and Reddit banned users~\cite{kumar2019predicting}, CB-RBM (0~labels) achieves AUROC~$\geq 0.96$ on all non-white-box attacks and keeps FPR close to its calibrated $5\%$ target under 20-step drift, ahead of supervised GNNs trained on hundreds of labels, unsupervised graph-anomaly baselines and a standard RBM whose FPR rises to $89$--$100\%$.

\end{itemize}

\section{Related Work}
\label{sec:related}

\textbf{Graph anomaly detection.} Supervised fraud detectors such as CARE-GNN~\cite{dou2020enhancing}, GraphConsis~\cite{liu2020alleviating}, PC-GNN~\cite{liu2021pick} and BWGNN~\cite{tang2022rethinking} learn from labeled adversaries, which arrive late in our setting. Unsupervised detectors exploit graph structure without labels. DOMINANT~\cite{ding2019deep} and AnomalyDAE~\cite{fan2020anomalydae} score attribute and structure reconstruction errors, GAD-NR~\cite{roy2024gad} reconstructs each node's neighborhood, CoLA~\cite{liu2021anomaly} contrasts each node with its local subgraph, CONAD~\cite{xu2022contrastive} adds contrastive learning over augmented anomalies, and OCGNN~\cite{wang2021one} fits a one-class boundary around normal embeddings. CB-RBM uses no reconstruction target or contrastive loss. Context enters the energy of a trained model through a gated pathway, and the score is the free-energy reduction that context produces, in closed form and decomposable over hidden units (Definition~\ref{def:cb}).

\textbf{Distribution shift and dynamic graphs.} Concept-drift methods detect a shift and then adapt or retrain~\cite{gama2014survey,lu2018learning}, and change detection extends to graph streams~\cite{zambon2018concept}. GDN~\cite{gao2023alleviating} uses labels to counter structural distribution shift between training and test nodes, and WhENDS~\cite{xu2025robust} whitens normal edge embeddings over time to track normality shift in dynamic graphs. Dynamic-graph detectors such as StrGNN~\cite{cai2021structural} and TADDY~\cite{liu2021anomaly2} model temporal evolution to score anomalous edges~\cite{ekle2024anomaly}. These approaches address shift through adaptation, robustness objectives, embedding alignment or explicit temporal modeling. In our main drift evaluation, CB-RBM keeps its trained parameters and threshold frozen, and Theorem~\ref{thm:drift} bounds how far its scores move under smooth drift.

\textbf{Generalist detectors.} ARC~\cite{liu2024arc}, UNPrompt~\cite{niu2024zero} and AnomalyGFM~\cite{qiao2025anomalygfm} train one detector on some graphs and apply it to unseen graphs with few or no target labels. UNPrompt predicts latent node attributes from neighborhoods and AnomalyGFM uses node-neighbor representation residuals. Their shift is across graphs, whereas ours is temporal drift within one deployed graph.

\textbf{Positioning.} Our contribution relative to these families is the Context Boost differential and the analysis it admits, a conditional separation of expected scores (Theorem~\ref{thm:sep}) and a Wasserstein bound on how far the scores move under drift (Theorem~\ref{thm:drift}). Section~\ref{sec:drift} and Appendix~\ref{app:baselines} compare representatives of the node-level families under our protocol.
% ── 2. Setup  ─────────────────────────────────────────────────────
\section{Problem Setup}
\label{sec:problem}

\paragraph{Graph.}
Let $\mathcal{G} = (\mathcal{V}, \mathcal{E})$ be an attributed directed graph with node features $\bx_i \in \mathbb{R}^{n_v}$ for each $i \in \mathcal{V}$. Each node represents an entity (e.g., wallet, user, reviewer) and each directed edge an interaction. Our detector uses no adversarial labels. It is trained one-class on a pool of nodes assumed legitimate, in practice accounts with no abuse report after an aging window, and its threshold is set on a held-out slice of the same pool.

\paragraph{Neighborhood context.}
The context of node $i$ aggregates the features of its neighbors $\mathcal{N}(i)$:
\begin{equation}
\bc_i = \mathbf{s} \odot \frac{1}{|\mathcal{N}(i)|} \sum_{j \in \mathcal{N}(i)} \bx_j \in \mathbb{R}^{n_v},
\label{eq:context}
\end{equation}
where $\mathbf{s} \in \{-1,+1\}^{n_v}$ is a per-dimension sign vector that corrects for anti-homophilic dimensions, ensuring that the signed context is positively correlated with node features for legitimate nodes. The construction of $\mathbf{s}$ is detailed in \S\ref{sec:method}. 
% The legitimate joint distribution $\Pl$ over $(\bx_i, \bc_i)$ pairs satisfies $\mathbb{E}[\cos(\bx_i, \bc_i)] \geq \rho_0 > 0$. 
The mean-pool aggregator is ablated in Appendix~\ref{app:ablation}.

\paragraph{Threat model.}\label{sec:threat}
An adversary aiming to evade detection injects a new node $i$ that it controls (features and edges), but not the existing nodes around it; the chosen feature distribution and neighborhood structure depend on the attack strategy. We distinguish four attack strategies of increasing sophistication:
\begin{enumerate}[leftmargin=*, itemsep=1pt]
\item \emph{Isolated Injection} --- adversarial features $\bx' \sim \Pf$ with context $\bc' \sim \Qr$, here $\mathcal{N}(\mathbf{0},\mathbf{I})$, drawn independently of $\bx'$, so the node has no legitimate neighborhood context.
\item \emph{Feature Camouflage} --- legitimate features $\bx' \sim \Pl$ with $\bc' \sim \Qr$ as above, so the node still lacks genuine context.
\item \emph{Sybil attack} --- $\bx' \sim \Pf$, and the adversary links to a fully connected clique $S$ of $50$ decoys with near-legitimate features $\bx_s = \bx_{\ell_s} + 0.05\,\boldsymbol{\epsilon}_s$ (legitimate $\ell_s$, $\boldsymbol{\epsilon}_s \sim \mathcal{N}(\mathbf{0},\mathbf{I})$), so its fabricated context $\bc'$ is the mean-pool~\eqref{eq:context} over $\mathcal{N}(i) \cup S$, where $\mathcal{N}(i)$ is the neighborhood before the attack.
\item \emph{Relation Camouflage} --- $\bx' \sim \Pf$, and an adaptive adversary links to the $10$ highest-degree legitimate nodes $H$, so its partially genuine context $\bc'$ is the mean-pool~\eqref{eq:context} over $\mathcal{N}(i) \cup H$.
% \item \emph{PGD attack} --- the adversary uses gradient access to the detector to craft features that maximize evasion under a bounded $\ell_\infty$ perturbation budget $\varepsilon$.
\end{enumerate}

These four patterns span the canonical injected-adversary threats established in graph fraud detection and adversarial graph learning~\cite{akoglu2015graph,sun2022adversarial}; attack families requiring training-time graph or label manipulation (e.g., topology poisoning~\cite{zugner2018adversarial}, backdoor attacks) lie outside this scope. We separately evaluate robustness to adversarial contamination of the training pool assumed legitimate in \S\ref{sec:robust}. White-box PGD~\cite{madry2017towards}, which needs gradient access to the detector, is a stress test in Appendix~\ref{app:pgd}.

% ── 3. Method ─────────────────────────────────────────────────────
\section{CB-RBM}\label{sec:method}

We extend the Gaussian-Bernoulli RBM~\cite{welling2004exponential}, a Gaussian-visible, Bernoulli-hidden variant suited to real-valued features, with a context pathway that conditions hidden unit activations on neighborhood features. The model's visible inputs are a node's features $\bv \in \mathbb{R}^{n_v}$ and neighborhood context $\bc \in \mathbb{R}^{n_v}$ (Eq.~\eqref{eq:context}), with binary hidden units $\mathbf{h} \in \{0,1\}^{n_h}$ as the latent layer.
    
\paragraph{Sign vector construction.}
The sign vector $\mathbf{s}$ in Eq.~\eqref{eq:context} is estimated from training data as $s_d = \mathrm{sign}(\mathrm{Pearson}(x_{i,d}, \bar{x}_{\mathcal{N}(i),d}))$, flipping anti-homophilic dimensions so that the signed context is positively correlated with node features across all dimensions. This is fitted once before training from feature statistics alone (no labels) and kept fixed. Because it needs no labels, it can be re-estimated on any new window of unlabeled nodes exactly as it was first fitted, and this refresh restores calibration after sign corruption (Appendix~\ref{app:sign}).

\paragraph{Energy function.}
The joint energy is
\begin{equation}
E(\bv, \mathbf{h}, \bc;\theta)
= \tfrac{1}{2}\lVert \bv - \mathbf{b}_v \rVert^2
- \sum_{j=1}^{n_h} h_j\big(\bw_j^\top \bv + \hat{\bc}_j + b_j\big),
\label{eq:energy}
\end{equation}
where $\hat{\bc}_j = (\bU_j^\top \bc) \cdot \sigma(\mathrm{Gate}_j^\top \bc)$ is a Gated Linear Unit~\cite{dauphin2017language} context embedding and $\theta = \{\bW, \bU, \mathrm{Gate}, \mathbf{b}_v, \mathbf{b}_h\}$. When context is absent ($\bc = 0$), the multiplicative $\bU_j^\top \bc$ factor vanishes and the model recovers a standard RBM. The free energy marginalizing over $\mathbf{h}$ is
\begin{equation}
F(\bv, \bc;\theta)
= \tfrac{1}{2}\lVert \bv - \mathbf{b}_v \rVert^2
- \sum_{j=1}^{n_h} \spl\big(\bw_j^\top \bv + \hat{\bc}_j + b_j\big).
\end{equation}

\paragraph{Context Boost score.}
\begin{definition}[Context Boost]
\label{def:cb}
The Context Boost score is
\begin{equation}
\CB(\bv, \bc;\theta)
= \sum_{j=1}^{n_h}\Big[
\spl\big(\bw_j^\top \bv + \hat{\bc}_j + b_j\big)
- \spl\big(\bw_j^\top \bv + b_j\big)
\Big].
\label{eq:cb}
\end{equation}
\end{definition}
CB measures how much neighborhood context \emph{reduces} free energy beyond features alone. The quadratic term $\tfrac{1}{2}|\bv - \mathbf{b}_v|^2$ cancels exactly in the subtraction, so CB depends only on hidden-unit activations, not on feature reconstruction cost. For a legitimate node with consistent context, the context pathway activates strongly and CB is large. For an adversarial node with absent or random context, the context contribution vanishes (algebraically when $\bc{=}0$, suppressed by the GLU gate when noisy) and CB is near zero (Theorem~\ref{thm:sep}; Figure~\ref{fig:cb_dist}).

\paragraph{Training.}
We train exclusively on legitimate node pairs $(\bx_i, \bc_i) \sim \Pl$ via Contrastive Divergence CD-1~\cite{hinton2002training}. No attack labels are used. We set the sparsity penalty $\lambda_{\mathrm{sp}} = 0$; nonzero sparsity drives mean hidden activations below 0.5 and weakens the context contribution.

\paragraph{Detection.}
The detection score for node $i$ is $s_i = -\CB(\bx_i, \bc_i)$. The threshold $\tau^*$ is set at the 95th percentile of $-\CB$ over a held-out calibration slice of the pool, setting a nominal $5\%$ calibration target. Nodes with $s_i \geq \tau^*$ are flagged as adversarial. Figure~\ref{fig:pipeline} and Algorithm~\ref{alg:cbrbm} (Appendix~\ref{app:algorithm}) summarize the full pipeline.

\paragraph{Complexity.}
Training costs $O(N \cdot n_h \cdot n_v)$ per epoch on $N$ training nodes and inference $O(n_h \cdot n_v)$ per node, as for a standard RBM, since the single-hop context is precomputed once and only closed-form free energies are evaluated~\cite{zhai2016deep}.

% ── 4. Theoretical Guarantees ─────────────────────────────────────────────────
\section{Theoretical Guarantees}
\label{sec:theory}

We establish two complementary results for the CB score. The first gives a sufficient condition under which the expected CB of legitimate nodes exceeds that of adversarial nodes. The second shows that CB remains stable under natural concept drift, explaining why a single threshold serves both detection and drift resilience. Both are worst-case certificates rather than predictions of the observed error, which Section~\ref{sec:experiments} measures directly. Full proofs are deferred to Appendix~\ref{app:proofs}. Remarks below provide intuition and justification; constants are verified in Table~\ref{tab:assumptions}.

\subsection{Assumptions}

\begin{assumption}[Signed Structural Correlation]
\label{A1}
The signed mean-pool context~\eqref{eq:context} satisfies 
\begin{equation}
\mathbb{E}_{i \sim \Pl}\big[\cos(\mathbf{x}_i, \mathbf{c}_i)\big] \ge \rho_0 > 0, \quad \text{for some } \rho_0 \in (0,1].
\end{equation}

\end{assumption}

\begin{remark}
$\mathbf{s}$ corrects anti-homophilic dimensions~\cite{zhu2020beyond} via $s_d = \mathrm{sign}(\mathrm{Pearson}(x_{i,d}, \bar{x}_{\mathcal{N}(i),d}))$, computed once on the training set. Per-dimension covariances are non-negative by construction, so $\rho_0 > 0$ whenever any dimension has nonzero neighbor covariance, which we measure directly (\S\ref{sec:experiments}).
\end{remark}

\begin{assumption}[Coherent Activation]
\label{A2}
After training, a fraction $\beta_0 \in (0,1)$ of hidden units $j$ satisfy
\begin{equation}
\mathbb{P}_{(\bv,\bc) \sim \Pl}\bigl[\alpha_j^{\mathrm{feat}} \geq 0 \;\text{and}\; \alpha_j^{\mathrm{ctx}} \geq 0\bigr] \;\geq\; p_0 > 0,
\end{equation}
where $\alpha_j^{\mathrm{feat}} = \bw_j^\top \bv + b_j$ and $\alpha_j^{\mathrm{ctx}} = \hat{\bc}_j$.
\end{assumption}
\begin{remark}
A nontrivial fraction of hidden units jointly activate on matched $(\bx,\bc)$ pairs. Under the Gaussian-projection and symmetry assumptions of the arcsine argument in Appendix~\ref{app:proofs}, $p_0 \ge \tfrac{1}{4} + \rho_0/(2\pi)$, which predicts $p_0 \ge 0.30$ on XBlock and $0.36$ on Reddit before training. After training, $\beta_0 = 0.38$ and $0.95$, that is, $97$ of $256$ and $122$ of $128$ hidden units (Table~\ref{tab:assumptions}).
\end{remark}

\begin{assumption}[Bounded Moments]
\label{A3}
$\mathbb{E}[\|\bx\|_2^2] \le M_x^2$ and $\mathbb{E}[\|\bc\|_2^2] \le M_c^2$.
\end{assumption}

\begin{assumption}[Adversarial Context Gap]
\label{A4}
The adversarial context distribution satisfies, for every hidden unit $j$,
\begin{equation}
\mathbb{E}_{\bc' \sim \Qr}\!\left[\max(\hat{\bc}'_j, 0)\right]
\;\le\;
\rho_1\,\lVert\bU_j\rVert\,M_c, \quad \text{for some } \rho_1 \in [0,\rho_0).
\end{equation}
\end{assumption}

\begin{remark}
Here $\rho_1$ bounds positive gated-context activation under adversarial context, and $\Delta\rho = \rho_0 - \rho_1 > 0$ denotes the \emph{detection gap}. Theorem~\ref{thm:sep} certifies separation when the adversarial positive-context bound lies sufficiently below the dominance margin of Assumption~\ref{A5}.
\end{remark}
   
\begin{assumption}[Coherent Dominance]
\label{A5}
The expected gain $G_{\mathcal{C}}$ of the coherent units of Assumption~\ref{A2} on their joint-positive events exceeds the expected negative contribution $N$ of all units by a margin $\gamma > 0$ independent of $n_h$, $G_{\mathcal{C}} - N \ge \gamma$ (both normalized by $n_h$ and defined in Appendix~\ref{app:sep_proof}).
\end{assumption}

\begin{remark}
Assumption~\ref{A1} captures homophily~\cite{khanam2023homophily}. Assumption~\ref{A2} formalizes the consequences of training on correlated data. Assumption~\ref{A3} imposes standard boundedness conditions. Assumption~\ref{A4} bounds the positive gated-context activation that adversarial context can produce---a realistic constraint, since attackers cannot fabricate authentic interaction history at scale. Assumption~\ref{A5} involves only the legitimate distribution and the trained model, so it is checked without adversarial data. Before training, the arcsine bound on $p_0$ checks the coherence of Assumption~\ref{A2} on which the gain $G_{\mathcal{C}}$ rests.
\end{remark}

\subsection{Separation and Stability}                                                                                         
   
Let $\Psi = \mathbb{E}[\CB \mid \Pl] > 0$ denote the expected CB under the legitimate distribution, $\Phi(\rho_1) = \mathbb{E}[\CB \mid \Pf, \Qr]$ the expected CB under adversarial features and context, and $d(\bx', \Pl) = \inf_{\bx \in \Pl}\|\bx' - \bx\|$ the feature deviation of an adversarial node from the legitimate support.  

\begin{theorem}[Separation]
\label{thm:sep}
Under Assumptions~\ref{A1}--\ref{A5}, $\Psi \ge n_h\gamma$. If, in addition, $\rho_1\bar{u}M_c \le \gamma - \varepsilon$ for some $\varepsilon > 0$ independent of $n_h$, where $\bar{u} = \tfrac{1}{n_h}\sum_j\|\bU_j\|$, then $\Psi - \Phi \ge n_h\varepsilon$, a strict separation growing linearly in $n_h$. For any threshold $\tau^* \in (\Phi, \Psi)$, Chebyshev's inequality gives
\begin{align}
\mathbb{P}_{(\bx,\bc)\sim\Pl}[\CB(\bx,\bc;\theta^*) < \tau^*] &\leq \frac{\mathrm{Var}[\CB \mid \Pl]}{(\Psi-\tau^*)^2}, \label{eq:legit}\\
\mathbb{P}_{\bx'\sim\Pf,\,\bc'\sim\Qr}[\CB(\bx',\bc';\theta^*) \ge \tau^*] &\leq \frac{\mathrm{Var}[\CB \mid \Pf,\Qr]}{(\tau^*-\Phi)^2}. \label{eq:adv}
\end{align}
Both bounds are finite since $\mathrm{Var}[\CB] \leq (\sum_j M_j)^2 < \infty$ (Lemma~\ref{lem:bounded_diff}).
\end{theorem}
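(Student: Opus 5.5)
The proof rests on decomposing $\CB$ unit by unit. For each hidden unit $j$ write $a_j = \alpha_j^{\mathrm{feat}} = \bw_j^\top\bv + b_j$ and $g_j = \alpha_j^{\mathrm{ctx}} = \hat{\bc}_j$. Then the $j$-th summand of Definition~\ref{def:cb} is $\Delta_j = \spl(a_j+g_j) - \spl(a_j)$. Since $\spl$ is increasing and $1$-Lipschitz with derivative $\sigma\in(0,1)$, the mean value theorem gives $\Delta_j = \sigma(\xi_j)\,g_j$ for some $\xi_j$ between $a_j$ and $a_j+g_j$. Two consequences follow: $|\Delta_j|\le |g_j|$, and $\Delta_j\ge 0$ exactly when $g_j\ge 0$.

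\textbf{Bound on $\Psi$.} Split $\Delta_j = \Delta_j^+ - \Delta_j^-$ and restrict $\Delta_j^+$ to the joint-positive event $\{a_j\ge 0,\ g_j\ge 0\}$. On that event $\xi_j\ge 0$, so $\sigma(\xi_j)\ge \tfrac12$ and $\Delta_j\ge g_j/2$. The normalized quantity $G_{\mathcal{C}}$ of Assumption~\ref{A5} is taken to be $\tfrac{1}{n_h}\sum_{j\in\mathcal{C}}\mathbb{E}[\Delta_j\mathbf{1}\{a_j\ge0,g_j\ge0\}]$ over the coherent set $\mathcal{C}$ of Assumption~\ref{A2}, which has size $\beta_0 n_h$ and event probability at least $p_0$. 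Likewise $N=\tfrac{1}{n_h}\sum_j\mathbb{E}[\Delta_j^-]$. Dropping the remaining nonnegative pieces gives
\[
\Psi = \sum_j\mathbb{E}[\Delta_j] \ge n_h G_{\mathcal{C}} - n_h N \ge n_h\gamma
\]
by Assumption~\ref{A5}. Assumptions~\ref{A1}--\ref{A2} enter only to make $G_{\mathcal{C}}$ nontrivial, and Assumption~\ref{A3} makes all expectations finite, since $|g_j|\le\|\bU_j\|\|\bc\|$ because the gate is bounded by $1$.

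\textbf{Bound on $\Phi$.} Under $(\Pf,\Qr)$ we use only the upper bound $\Delta_j\le \max(g_j,0)$, valid because $\sigma(\xi_j)\le 1$ and $\Delta_j\le 0$ when $g_j<0$. Assumption~\ref{A4} then gives
\[
\Phi \le \sum_j \rho_1\|\bU_j\|M_c = n_h\rho_1\bar u M_c \le n_h(\gamma-\varepsilon).
\]
This bound is uniform in $\bx'$, so $\Pf$ is irrelevant. Combining, $\Psi-\Phi\ge n_h\gamma - n_h(\gamma-\varepsilon) = n_h\varepsilon$.

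\textbf{Tail bounds and finiteness.} For $\tau^*\in(\Phi,\Psi)$, the event $\CB<\tau^*$ under $\Pl$ implies $|\CB-\Psi|>\Psi-\tau^*$, and Chebyshev gives \eqref{eq:legit}. The symmetric argument gives \eqref{eq:adv}. For finiteness we invoke Lemma~\ref{lem:bounded_diff}, which is presumably a per-unit bound $|\Delta_j|\le M_j$ coming from $|\Delta_j|\le|g_j|$. That lemma implies $|\CB|\le\sum_j M_j$ and hence $\mathrm{Var}\le(\sum_jM_j)^2$.

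\textbf{Main obstacle.} The delicate step is aligning the $\Psi$ decomposition with the exact definitions of $G_{\mathcal{C}}$ and $N$ in Appendix~\ref{app:sep_proof}. The lower bound must discard only nonnegative terms: the noncoherent units' positive parts and the coherent units' positive parts off the joint event. It must also count every negative contribution in $N$ exactly once. If the appendix instead defines $G_{\mathcal{C}}$ through $\tfrac12\mathbb{E}[g_j\mathbf{1}\{\cdot\}]$, I would insert the inequality $\sigma(\xi_j)\ge\tfrac12$ at that point instead.
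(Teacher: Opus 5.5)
Your skeleton matches the paper's proof: split $\CB=\sum_j\Delta_j$ into positive and negative parts, keep only the coherent units' joint-positive events $E_j=\{\alpha_j^{\mathrm{feat}}\ge 0,\ \alpha_j^{\mathrm{ctx}}\ge 0\}$ for the lower bound on $\Psi$, bound $\Phi$ through $\Delta_j\le\max(\hat{\bc}_j,0)$ uniformly in the features together with Assumption~\ref{A4}, and finish with Chebyshev.

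The real gap is at the step you flagged. The paper defines $G_{\mathcal{C}}=\tfrac{1}{n_h}\sum_{j\in\mathcal{C}}\mathbb{P}[E_j]\,\mathbb{E}[\spl(\alpha_j^{\mathrm{ctx}})-\log 2\mid E_j]$. Assumption~\ref{A5} is therefore a statement about $\spl(\alpha_j^{\mathrm{ctx}})-\log 2$, not about $\Delta_j$. Your choice $G_{\mathcal{C}}=\tfrac{1}{n_h}\sum_{j\in\mathcal{C}}\mathbb{E}[\Delta_j\mathbf{1}_{E_j}]$ makes the bound on $\Psi$ immediate, but it changes the hypothesis.

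To get the stated theorem you need $\Delta_j\ge\spl(\alpha_j^{\mathrm{ctx}})-\log 2$ on $E_j$, which is Lemma~\ref{lem:softplus}. With it, the paper's hypothesis implies yours, and the rest of your argument goes through unchanged. Your fallback $\sigma(\xi_j)\ge\tfrac12$ cannot supply this. It gives only $\Delta_j\ge\alpha_j^{\mathrm{ctx}}/2$, and $\spl(b)-\log 2=\tfrac{b}{2}+\log\cosh\tfrac{b}{2}\ge\tfrac{b}{2}$, so the inequality points the wrong way.

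The missing idea is to use $\sigma(a)\ge\tfrac12$ in the exact identity $\spl(a+b)-\spl(a)=\log\bigl(1+\sigma(a)(e^b-1)\bigr)$, not at an intermediate point of the mean value theorem. That yields $\log\frac{1+e^b}{2}$ directly. Equivalently, convexity of $\spl$ makes $a\mapsto\spl(a+b)-\spl(a)$ nondecreasing for $b\ge 0$, so $a\ge 0$ gives at least $\spl(b)-\spl(0)$.

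Your finiteness argument is also wrong as stated. Here $M_j=\|\bU_j\|\,M_c$, and $M_c$ is only a second-moment bound (Assumption~\ref{A3}). So neither $|\Delta_j|\le M_j$ nor $|\CB|\le\sum_j M_j$ need hold pointwise. Lemma~\ref{lem:bounded_diff} gives the pointwise bound $|\Delta_j|\le\|\bU_j\|\,\|\bc\|$, hence $\mathbb{E}[\Delta_j^2]\le M_j^2$. Minkowski's inequality in $L^2$ then gives $\mathbb{E}[\CB^2]^{1/2}\le\sum_j M_j$, and this is what bounds the variance.
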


\begin{proof}[Proof sketch]
\emph{Step 1: Decomposition.} Write $\CB(\bv,\bc) = \sum_j \Delta_j$ where
$\Delta_j = \spl(\alpha_j^{\mathrm{feat}} + \alpha_j^{\mathrm{ctx}}) - \spl(\alpha_j^{\mathrm{feat}})$.

\emph{Step 2: Legitimate lower bound.} On the joint-positive event (Assumption~\ref{A2}), Lemma~\ref{lem:softplus} gives $\Delta_j \ge \spl(\alpha_j^{\mathrm{ctx}}) - \log 2 \ge 0$. Summing the coherent units' gains and subtracting every unit's expected negative contribution, Assumption~\ref{A5} gives $\Psi \ge n_h(G_{\mathcal{C}} - N) \ge n_h\gamma > 0$.

\emph{Step 3: Adversarial upper bound.} Whatever the features, $\Delta_j \le \max(\hat{\bc}_j, 0)$, so Assumption~\ref{A4} gives $\Phi \le n_h\rho_1\bar{u}M_c$ and hence $\Psi - \Phi \ge n_h\varepsilon$.

\emph{Step 4: Tail bounds.} Since $\mathrm{Var}[\CB] \le (\sum_j M_j)^2$ (Lemma~\ref{lem:bounded_diff}), both variances are finite and Chebyshev yields the bounds in~\eqref{eq:legit}--\eqref{eq:adv}. See Appendix~\ref{app:sep_proof} for the full proof.
\end{proof}

\begin{corollary}[Detection Guarantee]
\label{cor:detection}
Under the conditions of Theorem~\ref{thm:sep}, setting $\tau^* = (\Psi + \Phi)/2$ gives
\[
  \mathrm{FPR} \;\leq\; \frac{4\,\mathrm{Var}[\CB \mid \Pl]}{(\Psi-\Phi)^2},
  \qquad
  1-\mathrm{TPR} \;\leq\; \frac{4\,\mathrm{Var}[\CB \mid \Pf,\Qr]}{(\Psi-\Phi)^2}.
\]
\end{corollary}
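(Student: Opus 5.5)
The plan is to read Corollary~\ref{cor:detection} as the Chebyshev tail bounds of Theorem~\ref{thm:sep}, evaluated at the midpoint threshold. First I will check that the midpoint is an admissible threshold. Under the conditions of Theorem~\ref{thm:sep} we have $\Psi - \Phi \ge n_h\varepsilon > 0$, so $\Phi < \Psi$. Hence $\tau^* = (\Psi+\Phi)/2$ lies strictly inside the open interval $(\Phi,\Psi)$, which is exactly the range for which \eqref{eq:legit} and \eqref{eq:adv} are stated.

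Next I will identify the two error rates with the probabilities on the left of those bounds. The detector flags node $i$ when $s_i = -\CB \ge -\tau^*$, that is, when $\CB < \tau^*$. I will fix this orientation once, so that ties fall on the side matching the inequalities in Theorem~\ref{thm:sep}.
\begin{itemize}
\item A false positive is a legitimate node flagged, so $\mathrm{FPR} = \mathbb{P}_{\Pl}[\CB < \tau^*]$. This is the left side of \eqref{eq:legit}.
\item A missed detection is an adversarial node not flagged, so $1-\mathrm{TPR} = \mathbb{P}_{\Pf,\Qr}[\CB \ge \tau^*]$. This is the left side of \eqref{eq:adv}.
\end{itemize}

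Then I will substitute the midpoint into the denominators. With $\tau^* = (\Psi+\Phi)/2$,
\[
\Psi - \tau^* = \tau^* - \Phi = \frac{\Psi-\Phi}{2},
\]
so each squared denominator equals $(\Psi-\Phi)^2/4$. This gives both displayed inequalities, with the factor $4$ coming from that division. Finiteness of the right-hand sides follows from Lemma~\ref{lem:bounded_diff}, as already noted in Theorem~\ref{thm:sep}. As a remark, combining this with $\Psi-\Phi \ge n_h\varepsilon$ gives the explicit bound $4\mathrm{Var}/(n_h\varepsilon)^2$.

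There is no real obstacle here. The only delicate point is the bookkeeping of strict versus non-strict inequalities at the threshold, and checking that the flagging convention matches the events in \eqref{eq:legit}--\eqref{eq:adv}.
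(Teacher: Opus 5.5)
Your proposal is correct and matches the paper's proof. Both check that the midpoint gives $\Psi-\tau^*=\tau^*-\Phi=(\Psi-\Phi)/2>0$, substitute into the Chebyshev bounds \eqref{eq:legit}--\eqref{eq:adv}, and cite Lemma~\ref{lem:bounded_diff} for finiteness. One small slip, which is harmless: $-\CB \ge -\tau^*$ is equivalent to $\CB \le \tau^*$, not $\CB<\tau^*$, but Chebyshev bounds $\mathbb{P}[|\CB-\Psi|\ge \Psi-\tau^*]$ and so covers the non-strict event as well.
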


\begin{remark}
Separation is certified when the adversarial positive-context bound $\rho_1\bar{u}M_c$ lies below the legitimate dominance margin $\gamma$, and the certified margin shrinks as $\rho_1$ grows. The Chebyshev bounds are finite but can be vacuous, since $(\sum_j M_j)^2$ exceeds the measured $\mathrm{Var}[\CB]$ by $8{,}300\times$ on XBlock and $237{,}000\times$ on Reddit, so detection is measured on the full score distributions (Figure~\ref{fig:cb_dist}, Table~\ref{tab:benchmark}).
\end{remark}

\begin{theorem}[Drift Stability]
\label{thm:drift}
Let $\{(\bx_t, \bc_t) \sim \mathcal{P}_t\}_{t=0}^{T}$ undergo natural drift where:
\begin{enumerate}[label=(\roman*)]
\item Correlation structure persists: $\mathbb{E}[\cos(\bx_t, \bc_t)] \geq \rho_0$ for all $t$.
\item Drift is smooth in Wasserstein-1, $W_1(\mathcal{P}_t, \mathcal{P}_{t-1}) \leq \delta_{\mathrm{drift}}$ for all $t$, and every $\mathcal{P}_t$ is supported on $\{\|\bc\|_2 \le M_c\}$.
\end{enumerate}
Then for CB-RBM trained on $\mathcal{P}_0$,
\begin{equation}
\label{eq:driftbound}
\bigl|\mathbb{E}[\CB(\bx_t,\bc_t)] - \mathbb{E}[\CB(\bx_0,\bc_0)]\bigr|
\leq L\, t\, \delta_{\mathrm{drift}},
\end{equation}
where $L = L_v + L_c$ is the Lipschitz constant of CB on that region (Lemma~\ref{lem:lip}). Corollary~\ref{cor:horizon} gives a worst-case horizon over which the legitimate mean remains above the midpoint threshold $(\Psi+\Phi)/2$.
\end{theorem}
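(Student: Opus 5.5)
The plan is to prove the bound in~\eqref{eq:driftbound} in three steps: establish that $\CB$ is Lipschitz on the region where the drifted distributions live, convert that into a one-step bound via Kantorovich--Rubinstein duality, and then telescope over time. Throughout, the parameters are frozen at $\theta^*$ trained on $\mathcal{P}_0$.

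\emph{Step 1: Lipschitz continuity of $\CB$.} We invoke Lemma~\ref{lem:lip} to obtain $|\CB(\bv,\bc)-\CB(\bv',\bc')| \le L_v\|\bv-\bv'\| + L_c\|\bc-\bc'\|$ on the region $\{\|\bc\|_2\le M_c\}$. Should that lemma need re-deriving, the argument is short. Since $\spl' = \sigma \in (0,1)$, each summand of~\eqref{eq:cb} is a difference of two $1$-Lipschitz functions of scalar arguments. In $\bv$, the two terms share the direction $\bw_j$, so $L_v \le 2\sum_j\|\bw_j\|$; a sharper estimate via $|\sigma(a+b)-\sigma(a)|\le |b|/4$ is also available. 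In $\bc$, only the first term depends on $\bc$, through $\hat{\bc}_j$. The gradient of the GLU is
\[
\bU_j\sigma(\mathrm{Gate}_j^\top\bc) + (\bU_j^\top\bc)\,\sigma'(\cdot)\,\mathrm{Gate}_j .
\]
Its norm is at most $\|\bU_j\| + \tfrac14\|\bU_j\|\,\|\mathrm{Gate}_j\|\,M_c$ on the support, which gives
\[
L_c=\sum_j\|\bU_j\|\bigl(1+\tfrac14 M_c\|\mathrm{Gate}_j\|\bigr).
\]
This is exactly where the support condition in (ii) is needed: the GLU is only locally Lipschitz, so without a bound on $\|\bc\|_2$ no global constant $L_c$ exists.

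\emph{Step 2: One-step bound via duality.} With respect to the metric $\|\bv-\bv'\|+\|\bc-\bc'\|$ on the joint space, $\CB$ is $L$-Lipschitz with $L=\max(L_v,L_c)\le L_v+L_c$. Kantorovich--Rubinstein duality then yields
\[
|\mathbb{E}_{\mathcal{P}_t}\CB-\mathbb{E}_{\mathcal{P}_{t-1}}\CB|\le L\,W_1(\mathcal{P}_t,\mathcal{P}_{t-1})\le L\delta_{\mathrm{drift}}.
\]
If $W_1$ is instead measured in the Euclidean norm on $\mathbb{R}^{2n_v}$, the same conclusion holds, because $\|\cdot\|_1$ on the pair of blocks is at most $\sqrt2$ times the Euclidean norm, and $\sqrt{L_v^2+L_c^2}\le L_v+L_c$ absorbs that factor. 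I will fix the metric convention so that it matches Lemma~\ref{lem:lip}.

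\emph{Step 3: Telescoping and the role of (i).} Writing
\[
\mathbb{E}_t-\mathbb{E}_0=\sum_{s=1}^t(\mathbb{E}_s-\mathbb{E}_{s-1})
\]
and applying the triangle inequality gives $Lt\delta_{\mathrm{drift}}$. Condition (i) is not used in the bound itself. Its purpose is to guarantee that Theorem~\ref{thm:sep}'s hypotheses continue to hold at each $t$, so that the midpoint threshold remains meaningful; this feeds Corollary~\ref{cor:horizon}.

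The main obstacle is Step 1. The difficulty is confirming that the region on which $L$ is computed contains the supports of all the $\mathcal{P}_t$, and handling measurability and integrability so that duality applies. Integrability follows from Assumption~\ref{A3} together with the linear growth of $\CB$.
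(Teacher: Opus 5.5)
Your proposal is correct and follows the paper's proof step for step: Lemma~\ref{lem:lip} gives $L=L_v+L_c$ on $\{\|\bc\|_2\le M_c\}$, Kantorovich--Rubinstein gives the one-step bound $L\delta_{\mathrm{drift}}$, and telescoping gives $Lt\delta_{\mathrm{drift}}$, with condition~(i) unused in the bound itself (the paper's extra Step~3 only derives the horizon of Corollary~\ref{cor:horizon}). A few minor points: your fallback $L_v\le 2\sum_j\|\bw_j\|$ loses a factor of two that the lemma avoids, since in $\nabla_{\bv}\Delta_j=\bw_j[\sigma(\alpha_j^{\mathrm{feat}}+\alpha_j^{\mathrm{ctx}})-\sigma(\alpha_j^{\mathrm{feat}})]$ both sigmoid values lie in $(0,1)$; the Euclidean-metric case follows directly from Cauchy--Schwarz, $L_v a+L_c b\le\sqrt{L_v^2+L_c^2}\,\sqrt{a^2+b^2}$ with $a=\|\bv-\bv'\|$ and $b=\|\bc-\bc'\|$, rather than via your $\sqrt2$ factor; and integrability needs no Assumption~\ref{A3}, because $|\CB|\le M_c\sum_j\|\bU_j\|$ on the support by Lemma~\ref{lem:bounded_diff}.
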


\begin{proof}[Proof sketch]
$\CB$ is $L$-Lipschitz (Lemma~\ref{lem:lip}); Kantorovich--Rubinstein gives $|\mathbb{E}[\CB_t]-\mathbb{E}[\CB_{t-1}]| \le L\cdot W_1(\mathcal{P}_t,\mathcal{P}_{t-1}) \le L\delta_{\mathrm{drift}}$. Telescoping over $t$ steps yields~\eqref{eq:driftbound}; full proofs, including Corollary~\ref{cor:quantile} and the drift-horizon corollary, in Appendix~\ref{app:proofs}.
\end{proof}

\begin{corollary}[Quantile Stability]
\label{cor:quantile}
Under condition~(ii) of Theorem~\ref{thm:drift}, the quantile functions $Q_t$ of the detection score $-\CB(\bx_t,\bc_t)$ satisfy $\int_0^1 |Q_t(u) - Q_0(u)|\,du \le L\,t\,\delta_{\mathrm{drift}}$.
\end{corollary}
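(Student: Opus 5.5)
The plan is to identify $\int_0^1 |Q_t(u)-Q_0(u)|\,du$ with the Wasserstein-1 distance between the one-dimensional laws of the detection score $-\CB(\bx_t,\bc_t)$ and $-\CB(\bx_0,\bc_0)$. We then bound that distance through the Lipschitz property of $\CB$ (Lemma~\ref{lem:lip}) and the triangle inequality for $W_1$ along the drift path.

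\emph{Step 1 (quantile representation).} For probability measures $\mu,\nu$ on $\mathbb{R}$ with finite first moment and quantile functions $Q_\mu,Q_\nu$, the classical identity is $W_1(\mu,\nu)=\int_0^1|Q_\mu(u)-Q_\nu(u)|\,du$. It follows from the comonotone (quantile) coupling being optimal for convex costs on the line. Finite first moments hold here because $\CB$ is Lipschitz, and $\mathcal{P}_t$ has a finite first moment by Assumption~\ref{A3} or by $W_1$-closeness to $\mathcal{P}_0$. So it suffices to show $W_1\bigl((-\CB)_\#\mathcal{P}_t,(-\CB)_\#\mathcal{P}_0\bigr)\le L\,t\,\delta_{\mathrm{drift}}$.

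\emph{Step 2 (pushforward contraction).} If $g$ is $L$-Lipschitz, then $W_1(g_\#\mu,g_\#\nu)\le L\,W_1(\mu,\nu)$. One route takes an optimal coupling $\pi$ of $(\mu,\nu)$ and uses $(g,g)_\#\pi$ as a coupling of the pushforwards, with cost at most $L\int\|z-z'\|\,d\pi$. Equivalently, by Kantorovich--Rubinstein, any $1$-Lipschitz $f$ on $\mathbb{R}$ makes $f\circ g$ $L$-Lipschitz. Apply this with $g=-\CB$, which is $L=L_v+L_c$-Lipschitz on the region $\{\|\bc\|_2\le M_c\}$ by Lemma~\ref{lem:lip}; negation does not change the constant.

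\emph{Step 3 (telescoping).} By the triangle inequality for $W_1$ and condition~(ii),
\[
W_1(\mathcal{P}_t,\mathcal{P}_0)\le\sum_{s=1}^{t}W_1(\mathcal{P}_s,\mathcal{P}_{s-1})\le t\,\delta_{\mathrm{drift}} .
\]
Combining the three steps gives $\int_0^1|Q_t-Q_0|\,du\le L\,W_1(\mathcal{P}_t,\mathcal{P}_0)\le L\,t\,\delta_{\mathrm{drift}}$.

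\emph{Main obstacle.} The delicate step is Step 2: Lemma~\ref{lem:lip} gives Lipschitzness only on the bounded-context region, not globally, because the GLU term $(\bU_j^\top\bc)\,\sigma(\mathrm{Gate}_j^\top\bc)$ has gradient growing with $\|\bc\|$. The coupling must therefore live on that region. Condition~(ii) ensures this: every $\mathcal{P}_t$ is supported on $\{\|\bc\|_2\le M_c\}$, so any coupling of $\mathcal{P}_t$ and $\mathcal{P}_0$ is supported on the product of that set with itself. The set is convex, so the segment between coupled points stays inside it and the local Lipschitz bound applies along it. The $\bv$-direction is unconstrained, but $\spl$ differences are globally Lipschitz in $\bv$ with constant bounded by $\sum_j\|\bw_j\|$, so this causes no difficulty. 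Note that condition~(i) is not needed for this corollary.
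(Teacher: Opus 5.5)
Your proposal is correct and follows essentially the same route as the paper's proof: push a coupling of $\mathcal{P}_t$ and $\mathcal{P}_0$ through the $L$-Lipschitz map $-\CB$ to get $W_1(\mu_t,\mu_0)\le L\,W_1(\mathcal{P}_t,\mathcal{P}_0)$, telescope $W_1$ using condition~(ii), and identify $W_1$ on the line with $\int_0^1|Q_t-Q_0|\,du$ via the monotone quantile coupling. Your remark that the bounded-context region is convex, so the local Lipschitz bound of Lemma~\ref{lem:lip} holds along segments between coupled points, spells out a detail the paper leaves implicit.
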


\begin{remark}
Natural drift preserves feature-context correlation and causes CB to change gradually at rate $L\,\delta_{\mathrm{drift}}$ per step. Adversarial injection breaks this correlation and causes immediate CB suppression. This asymmetry is what enables a single threshold to simultaneously detect adversaries and tolerate drift. Corollary~\ref{cor:quantile} controls the score quantiles on average but not the 95th percentile pointwise, which Appendix~\ref{app:thrdrift} measures directly.
\end{remark}

\paragraph{General attacks.}The separation condition depends on an attack only through the positive-context bound $\rho_1$ of Assumption~\ref{A4}, since $\Phi \le n_h\rho_1\bar{u}M_c$ holds for any adversarial features. Feature deviation acts as a bounded perturbation. At fixed context, it changes each unit's contribution by at most $\|\bw_j\|_2\,d$ (Appendix~\ref{app:sep_proof}). The context-interpolation sweep in Appendix~\ref{app:taxonomy} moves the attack context from its revealing to its camouflaging end. Detection falls below $50\%$ once the camouflaging share exceeds $\alpha^*\approx 0.56$ on XBlock and $0.58$ on Reddit, and the hardest camouflage is whichever context brings $\rho_1^{\mathrm{eff}}$ closest to $\rho_0$.     

% ── 5. Experiments ─────────────────────────────────────────────────────
\section{Experiments}
\label{sec:experiments}

\subsection{Setup}
\label{sec:setup}

\paragraph{Datasets.}
We use two attributed graphs from distinct domains: \textbf{(1)~XBlock} ($153$K Ethereum wallet nodes, $0.7\%$ phishing, $38$ features, $\rho_0 = 0.30$) and \textbf{(2)~Reddit} ($11$K social-media users, $3.3\%$ banned, $73$ features, $\rho_0 = 0.68$). Context vectors are a per-dimension sign-corrected mean-pool over graph neighbors (Eq.~\eqref{eq:context}), preserving $\rho_0 > 0$ regardless of homophily direction. Full dataset details and verified Assumption~\ref{A1}--\ref{A3} constants in Appendix~\ref{app:datasets}; two further datasets --- Tolokers (crowdsourcing) and YelpChi (e-commerce review fraud) --- evaluated in Appendix~\ref{app:generalisation}.

\paragraph{Baselines.}
We benchmark CB-RBM against six baselines from the GADBench~\cite{tang2023gadbench} suite. GraphSAGE~\cite{hamilton2017inductive}, GAT~\cite{velivckovic2017graph} and GCN~\cite{kipf2016semi} represent supervised fraud GNNs, foundational variants that GADBench shows match or exceed specialized fraud-GNN refinements, and DOMINANT~\cite{ding2019deep}, CoLA~\cite{liu2021anomaly} and GAE~\cite{kipf2016variational} represent unsupervised attributed-graph anomaly detection. A standard RBM without context (Std-RBM) is the ablation. We also compare against nine detectors from the neighborhood-aware and drift-aware classes closest to our setting. These are the supervised XGBGraph~\cite{tang2023gadbench}, BWGNN~\cite{tang2022rethinking} and Graph Transformer~\cite{shi2020masked}, the shift-robust GDN~\cite{gao2023alleviating}, the generalist ARC~\cite{liu2024arc}, the reconstruction-based GAD-NR~\cite{roy2024gad} and AnomalyDAE~\cite{fan2020anomalydae}, the contrastive CONAD~\cite{xu2022contrastive} and the one-class OCGNN~\cite{wang2021one}. Two further ablations, a parameter-free signed-cosine score and a context-conditioned MLP without the RBM and GLU (Context-MLP), isolate our formulation. The nine detectors follow their published configurations, with model selection on a held-out validation split (Appendix~\ref{app:baselines}). Supervised GNNs use 932 / 166 labeled adversaries on XBlock / Reddit; all other methods use \textbf{zero labels}.

\paragraph{Metrics.}
\emph{TPR@5\%FPR}: true positive rate at a 5\%-FPR-calibrated threshold on legitimate nodes (no adversarial labels needed). \emph{Stability}: $1 - \mathrm{FPR}_{\mathrm{drift}}$, fraction of legitimate nodes correctly classified under drift. \emph{Combined Score}: $\mathrm{TPR@5\%FPR} \times \mathrm{Stability}$ per attack; we also report the mean over the four attacks as ``average Combined''.

\paragraph{CB-RBM configuration.}
$n_h{=}256$ for XBlock and $n_h{=}128$ for Reddit (capacity ablation in Appendix~\ref{app:ablation}), 100 epochs of CD-1, sparsity cost $= 0$.

% ------------------------------------------------------------
\subsection{Separation Gap and Theory Validation}
\label{sec:cb_dist}
% ------------------------------------------------------------

\begin{figure}[t]
  \centering
  \begin{subfigure}[t]{0.4\linewidth}
    \centering
    \includegraphics[width=\linewidth]{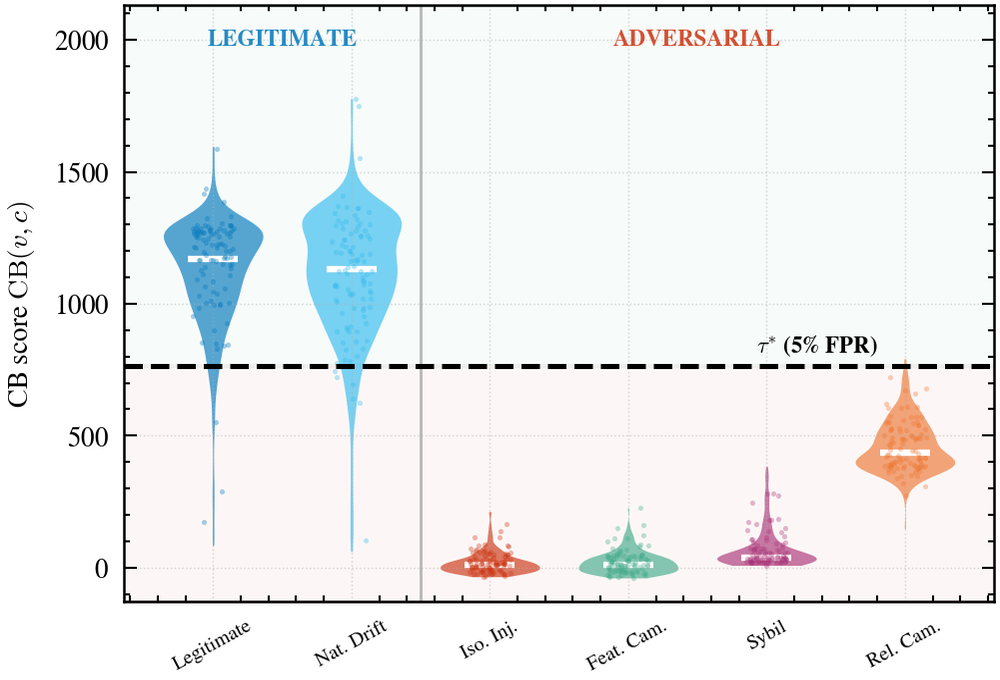}
    \caption{XBlock}
  \end{subfigure}\hspace{0.08\linewidth}
  \begin{subfigure}[t]{0.4\linewidth}
    \centering
    \includegraphics[width=\linewidth]{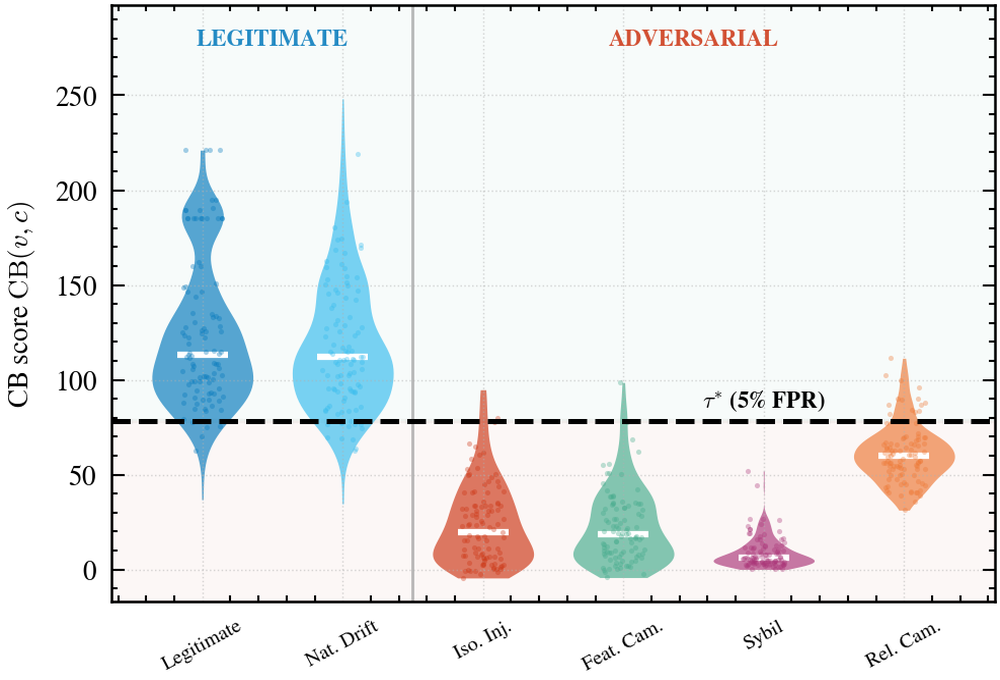}
    \caption{Reddit}
  \end{subfigure}
  \caption{CB score distributions, with calibrated threshold $\tau^{*}$ (5\% FPR).}
  \label{fig:cb_dist}
\end{figure}

\begin{figure}[t]
  \centering
  \begin{subfigure}[t]{0.4\linewidth}
    \includegraphics[width=\linewidth]{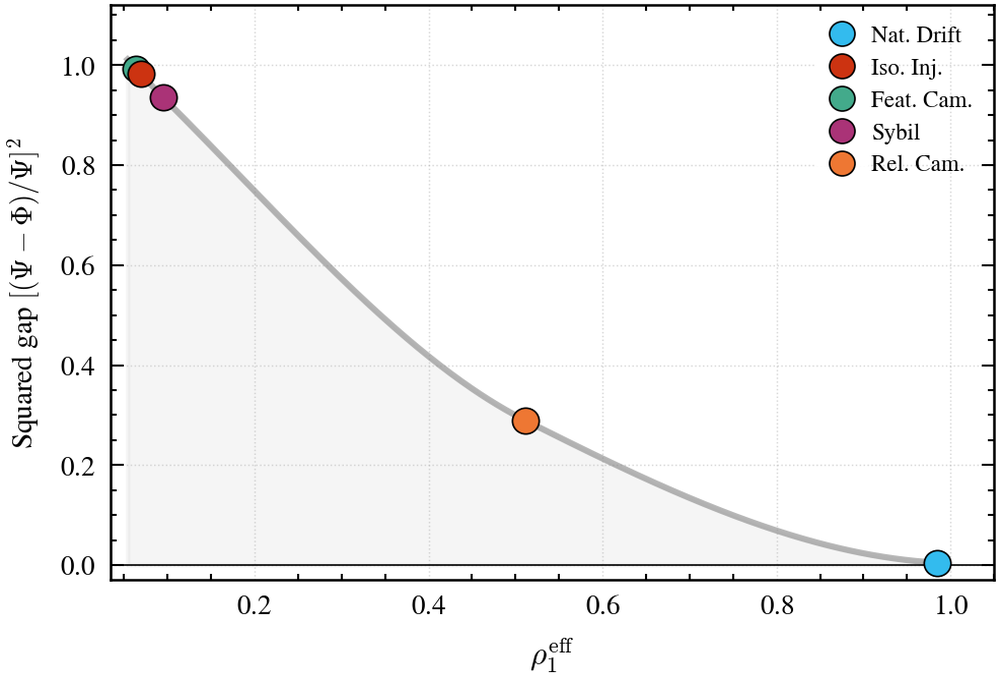}
    \caption{XBlock ($\rho_0 = 0.30$)}
  \end{subfigure}
  \hspace{0.08\linewidth}
  \begin{subfigure}[t]{0.4\linewidth}
    \includegraphics[width=\linewidth]{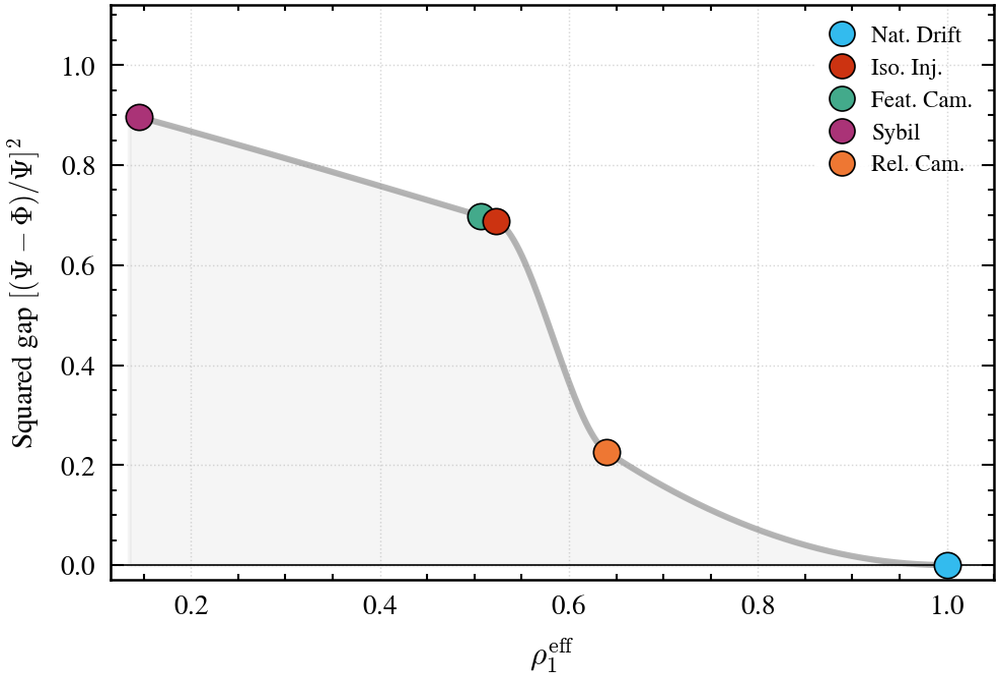}
    \caption{Reddit ($\rho_0 = 0.68$)}
  \end{subfigure}
  \caption{Squared separation gap $[(\Psi{-}\Phi)/\Psi]^2$ vs.\ $\rho_1^{\mathrm{eff}}$ (Thm.~\ref{thm:sep}).}
  \label{fig:cb_gap}
\end{figure}

A clear structural divide separates legitimate from adversarial CB score distributions on both datasets (Fig.~\ref{fig:cb_dist}): legitimate and naturally drifted nodes concentrate well above the threshold $\tau^{*}$, while all four attack types cluster predominantly below it --- regardless of whether the adversary copies features, fabricates context, or rewires edges. On each dataset, a single threshold calibrated from the legitimate distribution generalizes across all four attacks.

To understand why some attacks come closer to the threshold than others, we measure $\rho_1^{\mathrm{eff}}$, the fraction of the RBM's learned legitimate-context activation that each attack achieves (0 = no legitimate context, 1 = indistinguishable from legitimate) --- the effective adversarial correlation through the trained GLU weights, which rises when an adversary connects to legitimate hubs --- and plot it against the squared separation gap (Figure~\ref{fig:cb_gap}; the squared form reflects the Chebyshev error bound of Theorem~\ref{thm:sep}). The result traces a smooth, monotonically decreasing curve on both datasets, consistent with Theorem~\ref{thm:sep}, whose certified margin shrinks as $\rho_1$ grows. Sybil's fabricated context is rejected by the learned weights; Relation Camouflage, the only attack that acquires connections to real legitimate hubs, reaches the highest $\rho_1^{\mathrm{eff}}$ ($0.51$--$0.64$) and the steepest gap reduction. Natural Drift preserves the legitimate correlation structure ($\rho_1^{\mathrm{eff}} \approx 1$) and sits at near-zero gap --- explaining why the same threshold that catches adversaries does not trigger false alarms under drift.

% ------------------------------------------------------------
\subsection{Attack Benchmark}
\label{sec:benchmark}
% ------------------------------------------------------------

\begin{table}[t]
  \caption{Attack benchmark. Each cell: AUROC / TPR@5\%FPR (\%). Best per row in \textbf{bold}.}
  \label{tab:benchmark}
  \centering
  \setlength{\tabcolsep}{2pt}
  \renewcommand{\arraystretch}{1.05}
  \begin{tabular}{@{}llcccccccc@{}}
    \toprule
    & Attack & \multicolumn{2}{c}{Energy-based} & \multicolumn{3}{c}{Supervised} & \multicolumn{3}{c}{Unsupervised} \\
    \cmidrule(lr){3-4} \cmidrule(lr){5-7} \cmidrule(lr){8-10}
    & & CB-RBM & Std-RBM & SAGE & GAT & GCN & DOM. & CoLA & GAE \\
    \midrule
    \multirow{4}{*}{\rotatebox{90}{\small\textbf{XBlock}}}
    & Isolated Inj. & \textbf{99.9}/\textbf{100} & 86.8/51.6 & 97.6/90.1 & 91.7/75.2 & 95.4/80.8 & 96.9/78.1 & 56.4/17.2 & 98.1/89.5 \\
    & Feat.\ Camo   & \textbf{99.9}/\textbf{100} & 51.3/11.0 & 92.6/77.8 & 54.6/10.2 & 84.0/29.7 & 96.4/72.5 & 57.3/16.2 & 96.7/78.7 \\
    & Sybil         & \textbf{99.8}/\textbf{100} & 86.8/51.6 & 98.8/94.9 & 99.4/\textbf{100} & 99.6/\textbf{100} & 98.1/98.8 & 62.5/21.6 & 96.6/77.4 \\
    & Rel.\ Camo    & 97.6/96.8 & 86.8/51.6 & \textbf{98.6}/92.4 & 98.2/\textbf{97.8} & 97.8/87.8 & 97.0/86.9 & 72.1/20.2 & 97.7/88.9 \\
    \midrule
    \multirow{4}{*}{\rotatebox{90}{\small\textbf{Reddit}}}
    & Isolated Inj. & \textbf{99.4}/97.2 & 60.0/10.7 & 61.9/0.5 & 73.5/38.3 & 82.2/62.8 & 98.0/91.3 & 93.7/83.8 & 99.1/\textbf{97.3} \\
    & Feat.\ Camo   & \textbf{99.6}/\textbf{98.2} & 52.4/4.3 & 55.1/0.0 & 58.4/18.2 & 57.3/29.5 & 90.6/79.7 & 92.7/79.0 & 90.5/86.5 \\
    & Sybil         & \textbf{100}/\textbf{100} & 60.0/10.7 & 68.2/1.8 & 92.9/66.7 & 81.8/33.3 & 99.9/\textbf{100} & 85.8/43.2 & 97.4/86.8 \\
    & Rel.\ Camo    & \textbf{97.3}/\textbf{88.2} & 60.0/10.7 & 83.2/19.7 & 82.1/5.2 & 83.7/19.2 & 93.2/52.8 & 77.9/18.7 & 89.3/48.5 \\
    \bottomrule
  \end{tabular}
\end{table}

Table~\ref{tab:benchmark} compares per-attack detection. The four attacks span a spectrum of adversarial context quality, from uncorrelated (Isolated Injection, Feature Camouflage) through fabricated (Sybil) to partially genuine (Relation Camouflage). CB-RBM's AUROC tracks this spectrum cleanly on both datasets: it stays above $99\%$ on three attacks and drops to ${\sim}97\%$ on Relation Camouflage (where the adversary acquires genuine hub connections), consistent with the $\Delta\rho$-dependent separation of Theorem~\ref{thm:sep}. Across all four attacks on both datasets ($14\times$ scale, $\rho_0 = 0.30$ vs $0.68$), CB-RBM maintains AUROC~$\geq\!96\%$ and TPR@5\%FPR~$\geq\!88\%$ at zero labels. Over 3 seeds, the standard deviation of CB-RBM's AUROC is at most $0.1$ points on XBlock and $1.2$ on Reddit, and that of its TPR at most $0.3$ and $4.5$ points. Supervised GNNs, despite their full label budget, remain undertrained in this regime (best Reddit supervised result: GAT at $67\%$ TPR on Sybil). Unsupervised baselines avoid labels but their signal does not transfer reliably across datasets (CoLA: $84\%\!\to\!17\%$ TPR).    

% Adversarial detection is a mature concern in the graph-anomaly literature, but robustness alone is not sufficient for deployment; \S\ref{sec:drift} evaluates whether this quality persists under drift.
  
% ------------------------------------------------------------
\subsection{Drift Stability}
\label{sec:drift}
% ------------------------------------------------------------

\begin{figure}[t]
  \centering
  \begin{subfigure}[t]{0.45\linewidth}
    \includegraphics[width=\linewidth]{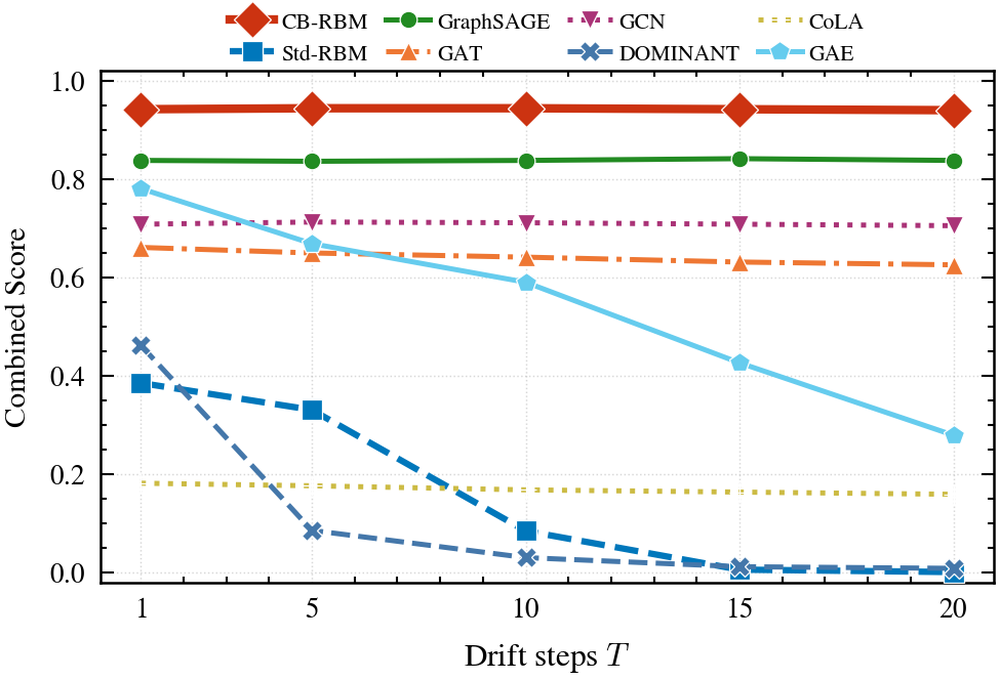}
    \caption{XBlock}
  \end{subfigure}
  \hfill
  \begin{subfigure}[t]{0.45\linewidth}
    \includegraphics[width=\linewidth]{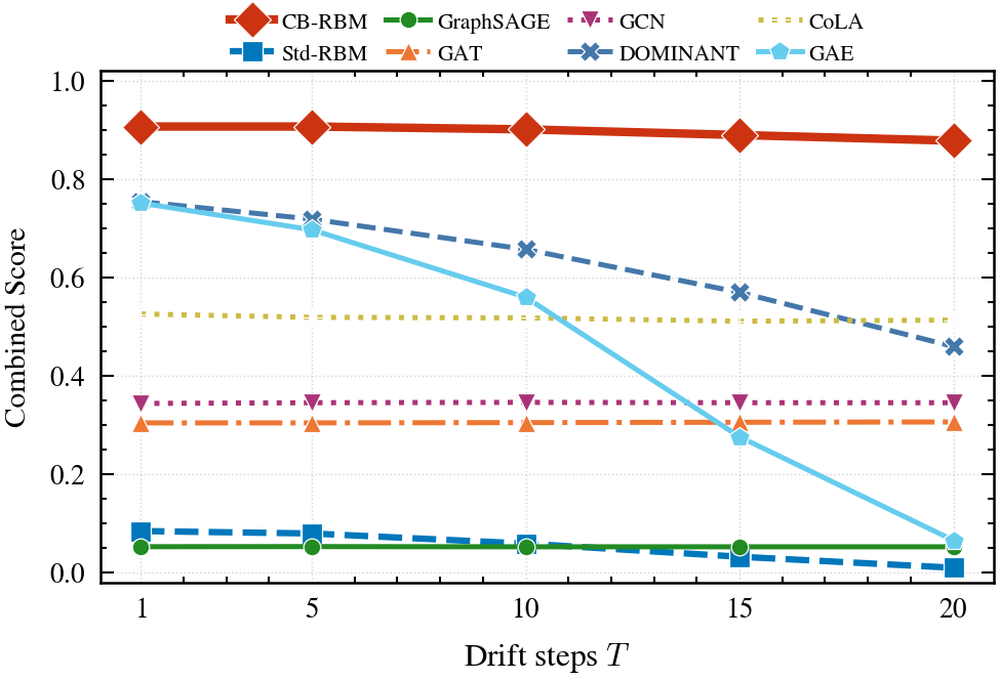}
    \caption{Reddit}
  \end{subfigure}
  \caption{Combined Score under gradual drift across 20 time steps.}
  \label{fig:drift_sim}
\end{figure}

Following standard practice in the concept drift literature~\cite{gama2014survey,lu2018learning}, we model gradual drift as an incremental random walk in feature space over $T$ steps, with neighborhood context recomputed at each step to reflect the evolving features. Figure~\ref{fig:drift_sim} reports the Combined Score (detection $\times$ stability, \S\ref{sec:setup}) as drift accumulates, using each method's mean TPR across the four attacks of Table~\ref{tab:benchmark} (FPR$_{\mathrm{drift}}$ is measured on legitimate nodes only, so we average TPR across attacks but use a single FPR curve); additional drift regimes and real temporal validation are in Appendix~\ref{app:drift_variants}.

Figure~\ref{fig:drift_sim} reveals three patterns. First, reconstruction-based detectors (GAE, DOMINANT), strong on the static benchmark (\S\ref{sec:benchmark}), collapse as their learned reconstruction targets become stale --- DOMINANT drops to near zero on XBlock, and GAE from $0.75$ to $0.07$ on Reddit. Second, supervised GNNs split by label availability. On XBlock, where labels provide a strong training signal, GNNs learn robust decision boundaries that generalize under drift, achieving Combined Scores up to $0.84$ (GraphSAGE). On Reddit, where labels are scarce, the same architectures remain undertrained (\S\ref{sec:benchmark}), and their Combined Score remains below $0.36$ throughout --- stable only because there is weak TPR to degrade. Third, CB-RBM is the only method that consistently leads on both datasets, maintaining average Combined~$\geq 0.87$ at $T{=}20$ in both the label-rich and label-scarce regimes. Theorem~\ref{thm:drift} bounds how far the scores move under smooth drift, and Figure~\ref{fig:drift_sim} measures the resulting detection and false alarms at the frozen threshold. Across five chronological XBlock windows from 2017 through 2018, with the model, sign vector and threshold frozen on the earliest window, CB-RBM keeps the legitimate-wallet FPR at or below $5\%$, reaching $3.1\%$ by late 2018, and detects $100\%$ of isolated injections built from real held-out phishing features. GAT and GraphSAGE, which score the phishing wallets in the original graph, detect $84$--$89\%$ and $81\%$ (Appendix~\ref{app:temporal_real}).

Table~\ref{tab:benchmark} and Figure~\ref{fig:drift_sim} report eight methods per attack and over the full drift trajectory. We compare the other detectors of \S\ref{sec:setup} with the same Combined Score at the last drift step, $T{=}20$ (Appendix~\ref{app:baselines}). At that step, the strongest baselines are GraphSAGE ($0.84$) and XGBGraph ($0.81$, with TPR $0.98$ but drift FPR $17.8\%$) on XBlock and GAD-NR ($0.63$) on Reddit, against $0.94$ and $0.88$ for CB-RBM. Over five seeds, the Combined Scores of XGBGraph, Graph Transformer, BWGNN and GDN vary with a standard deviation below $0.05$. GAD-NR misses the self-consistent Sybil clique on XBlock ($0\%$ TPR). GDN reaches $10.4\%$ drift FPR on XBlock and ARC $47.2\%$ on Reddit. The ablations show that each ingredient fails on its own. The RBM without context collapses under drift, the signed-cosine score reaches only $0.17$ and $0.57$ Combined, and Context-MLP reaches a static AUROC of at least $0.95$ on every Reddit attack yet raises $96\%$ false alarms under drift. The principle supplies detection, and the closed-form differential supplies the calibration that survives drift.

\subsection{Discussion}
\label{sec:discussion}

\textbf{What does CB detect?} The CB score fires on \emph{absence of legitimate neighborhood context} rather than on feature deviation itself. The GLU gate suppresses the context pathway whenever the aggregated neighborhood correlation departs from the pattern learned during training, so a node is flagged for failing to corroborate the model's learned correlation pattern, a stricter test than raw feature-context geometry. Controlled decomposition experiments (Appendix~\ref{app:decomp}) confirm this on both datasets: legitimate features paired with random or absent context are flagged as reliably as adversarial features with random context (Combined~$\geq 0.91$), while legitimate features with legitimate context consistently evade detection (Combined~$\leq 0.11$). An XBlock trace gives $\CB=1046$ for a median legitimate node, $952$ after $20$ drift steps, and $28$ for an injected adversary (Appendix~\ref{app:trace}). The same mechanism also bounds what CB-RBM can catch --- an adversary who acquires genuine network connections eventually restores the correlation signal and evades, consistent with the $\rho_1^{\mathrm{eff}}$ trajectory in Figure~\ref{fig:cb_gap}.

\textbf{Why the drift bound is conservative.} Theorem~\ref{thm:drift} bounds the worst case: it allows every feature coordinate to move adversarially within the drift budget. Natural drift does not look like this, because features and their aggregated context co-evolve through the mean-pool operator and preserve their legitimate correlation step-by-step; movement along this correlated direction lies in a low-sensitivity region of the softplus differences in Eq.~\eqref{eq:cb}. The observed CB shift at $T{=}20$ is therefore smaller than the bound evaluated at the measured value of $M_c$ by more than two orders of magnitude --- a $526\times$ slack on XBlock and a $4{,}540\times$ slack on Reddit (Table~\ref{tab:lipschitz}). At $T{=}20$, the empirical $95$th percentile of the legitimate scores shifts by $42.7$ on XBlock and $3.4$ on Reddit, while mean FPR at the frozen threshold reaches $6.1\%$ and $7.1\%$ (Appendix~\ref{app:thrdrift}).

\textbf{False-positive recovery under drift.} When adversarial context approaches legitimate context, supervised baselines with adequate labels can detect better. YelpChi, where the raw correlation gap is negative ($\Delta\rho{=}{-}0.013$), tests this regime (Appendix~\ref{app:yelpchi}). Even there, CB-RBM's drift stability pays off as a no-cost filter stacked atop those supervised baselines --- on YelpChi it eliminates $87.6$--$93.5\%$ of each baseline's drift-induced false alarms with no extra labels, training, or pipeline changes (Appendix~\ref{app:rescue}).

\textbf{Ablation.} The context pathway is what CB-RBM relies on; the surrounding architectural choices are largely fungible. Removing context (reducing the model to a free-energy RBM) collapses Combined Score by two to three orders of magnitude on both datasets, whereas varying hidden capacity within each dataset's stable range or swapping the pooling operator (mean, max, or attention) leaves detection near its ceiling (Combined~$\geq 0.87$ at the per-dataset optima; Appendix~\ref{app:ablation}).

\textbf{Robustness.}\label{sec:robust} Contaminating the training and calibration pools with up to $10\%$ fraud flips no sign dimension and keeps FPR at or below $5.0\%$, although Relation Camouflage TPR on XBlock falls to $39.0\%$ (Appendix~\ref{app:contam}). A stale sign vector raises FPR without hurting Isolated Injection detection, and re-estimating it without labels restores the $5\%$ target (Appendix~\ref{app:sign}). Drift that delays, weakens or reverses the feature-context link keeps FPR at or below $6.2\%$ on XBlock and $13.2\%$ on Reddit at $T{=}20$, and when all eight methods are recalibrated under co-evolving drift, CB-RBM keeps the highest Combined Score (Appendix~\ref{app:harddrift}). With a Sybil clique that grows from $2$ to $50$ members over $50$ steps, CB-RBM flags $100\%$ of the attackers on both datasets from step $10$, against $51.3\%$ on XBlock and $9.5\%$ on Reddit for a standard RBM (Appendix~\ref{app:sybil_grow}).

\textbf{Limitations.} CB-RBM relies on $\rho_0 > 0$, which can fail in random or adversarially curated topologies. Detection also degrades as $\Delta\rho$ collapses, allowing supervised baselines with adequate labels to perform better. The method is most effective against adversaries who lack legitimate neighborhood history; as the $\rho_1^{\mathrm{eff}}$ analysis shows, an attacker who successfully acquires genuine hub connections (Relation Camouflage) reduces the detection gap and, in the limit, becomes indistinguishable from legitimate users (Appendices~\ref{app:taxonomy} and~\ref{app:rescue}). Substantial topology change can impair calibration at the frozen threshold, and a label-free threshold refresh then restores the $5\%$ target (Appendix~\ref{app:harddrift}). Finally, white-box adversaries with gradient access to the detector can lower detection (Appendix~\ref{app:pgd}), and our drift analysis assumes smooth drift, so abrupt or adversarial distribution shifts may need mechanisms beyond the algebraic cancellation we exploit.

\section{Conclusion}
We presented CB-RBM, a label-free detector exploiting feature-context consistency: the same structural signal --- nonzero correlation between a node's features and its neighborhood context --- simultaneously enables detection (broken for injected adversaries) and keeps calibration under drift (preserved under population-level shifts). On XBlock (153K Ethereum wallets) and Reddit (11K users), CB-RBM achieves AUROC~$\geq 0.96$ on all non-white-box attacks at zero labels and maintains Combined Score~$\geq 0.87$ at $T{=}20$ under drift, while supervised GNNs underperform even with their full label set, unsupervised baselines collapse under drift, and Std-RBM loses calibration entirely. The $\rho_1^{\mathrm{eff}}$ analysis reveals a clear detection boundary: CB-RBM detects \emph{absence of legitimate network context}, not feature deviation --- explaining both its strengths and its fundamental limit against adversaries who acquire genuine connections. Two extensions follow naturally: dynamic graphs with streaming edges, where the same Lipschitz structure carries over; and porting the Context Boost principle to other generative architectures (Appendix~\ref{app:transferability}).

\begin{ack}
This work was funded in part by the Luxembourg National Research Fund (FNR), grant reference NCER22/IS/16570468/NCER-FT, through the FutureFinTech National Centre of Excellence in Research and Innovation, co-funded by the Luxembourg Ministry of Finance. For the purpose of open access, and in fulfilment of the obligations arising from the grant agreement, the authors have applied a Creative Commons Attribution 4.0 International (CC BY 4.0) license to any Author Accepted Manuscript version arising from this submission. This work was also supported by the Luxembourg AI Factory (L-AIF, grant agreement ID 1012343366). The authors declare no competing interests.
\end{ack}

\bibliographystyle{unsrtnat}
\bibliography{bib}

%%%%%%%%%%%%%%%%%%%%%%%%%%%%%%%%%%%%%%%%%%%%%%%%%%%%%%%%%%%%

\appendix

% ============================================================
%  Appendix A — Pipeline and Algorithm
% ============================================================
\clearpage
\section{Pipeline and Algorithm}
\label{app:algorithm}

Figure~\ref{fig:pipeline} summarizes CB-RBM from its inputs to the detection decision, and Algorithm~\ref{alg:cbrbm} gives the full procedure, including the label-free refresh.

\begin{figure}[h]
\centering
\begin{tikzpicture}[
  font=\footnotesize,
  box/.style={draw, rounded corners=2pt, align=center, inner xsep=2pt, inner ysep=2pt, minimum height=1.0cm},
  arr/.style={-{Stealth[length=4pt]}, semithick},
  darr/.style={-{Stealth[length=4pt]}, semithick, dashed, black!55},
  node distance=0.23cm]
  % context row
  \node[box, fill=black!6, minimum width=2.2cm] (nb) {neighbor features\\ $\{\bx_j\}_{j\in\mathcal{N}(i)}$};
  \node[box, fill=blue!7, minimum width=2.6cm, right=of nb] (ctx) {sign-corrected mean $\bc_i$\\ $\mathbf{s}\odot\overline{\bx}_{\mathcal{N}(i)}$};
  \node[box, fill=orange!12, minimum width=3.1cm, right=of ctx] (gate) {GLU-gated context $\hat{\bc}_j$\\ $(\bU_j^\top\bc_i)\,\sigma(\mathrm{Gate}_j^\top\bc_i)$};
  % feature row
  \node[box, fill=orange!12, minimum width=3.1cm, above=0.4cm of gate] (feat) {feature pathway $\alpha_j^{\mathrm{feat}}$\\ $\bw_j^\top\bv+b_j$};
  \node[box, fill=black!6, minimum width=2.2cm] (x) at (nb |- feat) {node features\\ $\bv=\bx_i$};
  % score and decision
  \node[box, fill=red!8, minimum width=3.3cm, minimum height=2.4cm, anchor=west] (cb)
    at ($(gate.east)!0.5!(feat.east)+(0.23cm,0)$)
    {Context Boost $\CB$ (Def.~\ref{def:cb})\\[3pt]
     $\sum_j\bigl[\spl(\alpha_j^{\mathrm{feat}}+\hat{\bc}_j)$\\
     $\hphantom{\sum_j\bigl[}-\spl(\alpha_j^{\mathrm{feat}})\bigr]$\\[3pt]
     {\scriptsize quadratic feature term cancels}};
  \node[box, fill=green!10, minimum width=1.6cm, right=of cb] (dec) {flag $i$ if\\ $-\CB\geq\tau^*$};
  % data flow
  \draw[arr] (x) -- (feat);
  \draw[arr] (nb) -- (ctx);
  \draw[arr] (ctx) -- (gate);
  \draw[arr] (feat.east) -- (feat.east -| cb.west);
  \draw[arr] (gate.east) -- (gate.east -| cb.west);
  \draw[arr] (cb) -- (dec);
  % label-free maintenance
  \node[font=\scriptsize, align=left, anchor=north west, inner sep=0pt] (reftext) at ($(nb.south west)+(4pt,-0.45cm)$)
    {\textbf{Training:} CD-1 on the pool assumed legitimate, no anomaly labels.\\
     \textbf{Calibration:} $\tau^*$ is the 95th percentile of $-\CB$ on a held-out legitimate slice.\\
     \textbf{Label-free refresh:} re-estimate $\mathbf{s}$, recompute the contexts and re-estimate $\tau^*$ on the current unlabeled window, with the model frozen.};
  \node[draw, dashed, rounded corners=2pt, inner xsep=0pt, inner ysep=3pt,
        fit=(reftext.north -| nb.west) (reftext.south -| dec.east)] (ref) {};
  \draw[darr] (ref.north -| ctx.south) -- (ctx.south);
  \draw[darr] (ref.north -| dec.south) -- (dec.south);
\end{tikzpicture}
\caption{CB-RBM pipeline. The sign-corrected neighbor mean $\bc_i$ enters only through the GLU-gated pathway, and the Context Boost is the free-energy reduction due to context, in which the quadratic feature term cancels. The sign vector $\mathbf{s}$ and the threshold $\tau^*$ are label-free statistics that can be refreshed on a current unlabeled window without retraining (dashed arrows, Algorithm~\ref{alg:cbrbm}).}
\label{fig:pipeline}
\end{figure}

\begin{algorithm}[h]
\caption{CB-RBM: training, calibration, scoring, and label-free refresh}
\label{alg:cbrbm}
\small
\textbf{Input:} graph $\mathcal{G}$, pool $\mathcal{V}_{\mathrm{L}}$ assumed legitimate (no anomaly labels), calibration slice $\mathcal{V}_{\mathrm{cal}} \subset \mathcal{V}_{\mathrm{L}}$, capacity $n_h$, target FPR $5\%$.
\begin{enumerate}[leftmargin=*, itemsep=1pt, topsep=2pt, label=\arabic*:]
\item \emph{Sign vector.} $s_d \leftarrow \mathrm{sign}\bigl(\mathrm{Pearson}_{i \in \mathcal{V}_{\mathrm{L}}}(x_{i,d}, \bar{x}_{\mathcal{N}(i),d})\bigr)$ for $d = 1,\dots,n_v$ \hfill (features only)
\item \emph{Context.} $\bc_i \leftarrow \mathbf{s} \odot \tfrac{1}{|\mathcal{N}(i)|}\sum_{j \in \mathcal{N}(i)} \bx_j$ for all $i$ \hfill Eq.~\eqref{eq:context}
\item \emph{Training.} Fit $\theta = \{\bW, \bU, \mathrm{Gate}, \mathbf{b}_v, \mathbf{b}_h\}$ by CD-1 on $\{(\bx_i, \bc_i)\}_{i \in \mathcal{V}_{\mathrm{L}} \setminus \mathcal{V}_{\mathrm{cal}}}$ with $\lambda_{\mathrm{sp}} = 0$
\item \emph{Calibration.} $\tau^* \leftarrow$ 95th percentile of $\{-\CB(\bx_i, \bc_i)\}_{i \in \mathcal{V}_{\mathrm{cal}}}$ \hfill Eq.~\eqref{eq:cb}
\item \emph{Scoring.} At time $t$, recompute $\bc_i$ from the current neighbors of node $i$ and flag $i$ if $-\CB(\bx_i, \bc_i) \ge \tau^*$, with $\theta$ frozen
\item \emph{Label-free refresh} (optional, e.g., on a schedule or after substantial topology change). Re-estimate $\mathbf{s}$ as in line~1, recompute the contexts as in line~2 and re-estimate $\tau^*$ as in line~4 on the current unlabeled window, keeping $\theta$ frozen. No labels and no retraining.
\end{enumerate}
\end{algorithm}

% ============================================================
%  Appendix A — Full Proofs
% ============================================================
\section{Full Proofs}
\label{app:proofs}

We restate all results and provide complete proofs. Recall the CB score
\begin{equation}
  \CB(\bv, \bc;\,\theta)
  \;=\;
  \sum_{j=1}^{n_h}
  \underbrace{\Bigl[
    \spl\!\bigl(\alpha_j^{\mathrm{feat}} + \alpha_j^{\mathrm{ctx}}\bigr)
    -
    \spl\!\bigl(\alpha_j^{\mathrm{feat}}\bigr)
  \Bigr]}_{\displaystyle \Delta_j(\bv,\bc)},
  \label{eq:cb_def_app}
\end{equation}
where $\spl(x)=\log(1+e^x)$,
$\alpha_j^{\mathrm{feat}} = \bw_j^\top\bv + b_j$, and $\alpha_j^{\mathrm{ctx}} = \hat{\bc}_j$.

% ------------------------------------------------------------
\subsection{Supporting Lemmas}
% ------------------------------------------------------------

\begin{lemma}[Softplus monotone increment]
\label{lem:softplus}
For all $a, b \ge 0$,
\[
  \spl(a + b) - \spl(a) \;\ge\; \spl(b) - \log 2.
\]
\end{lemma}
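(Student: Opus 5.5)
The plan is to show the stronger statement $\spl(a+b) - \spl(a) \ge \spl(b) - \log 2$ by fixing $b \ge 0$ and studying $g(a) = \spl(a+b) - \spl(a)$ as a function of $a \ge 0$. Since $\spl'(x) = \sigma(x)$ and the logistic sigmoid $\sigma$ is increasing, $g'(a) = \sigma(a+b) - \sigma(a) \ge 0$. Hence $g$ is nondecreasing on $[0,\infty)$ and attains its minimum at $a = 0$. This reduces the claim to the single inequality
\[
g(0) = \spl(b) - \spl(0) = \spl(b) - \log 2,
\]
which is exactly the right-hand side. So the lemma holds, with equality at $a = 0$.

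An equivalent route avoids derivatives. Write $\spl(a+b) - \spl(a) = \int_a^{a+b} \sigma(t)\,dt$ and compare it with $\int_0^{b} \sigma(t)\,dt = \spl(b) - \log 2$. The shifted integrand satisfies $\sigma(t+a) \ge \sigma(t)$ for $a \ge 0$, which gives the same conclusion. I would present the derivative argument as the main proof and mention the integral form only as a remark, since the integral form also makes clear that the gap grows with $a$.

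There is no real obstacle. The only care needed is to note where the hypotheses enter. Monotonicity in $a$ uses $b \ge 0$: if $b < 0$, $g$ would be nonincreasing instead. Restricting to $a \ge 0$ is what makes $a = 0$ the minimizer. I would also note the consequence used in Step~2 of Theorem~\ref{thm:sep}: since $\spl(b) \ge \log 2$ for $b \ge 0$, the bound gives $\Delta_j \ge \spl(\alpha_j^{\mathrm{ctx}}) - \log 2 \ge 0$ on the joint-positive event.
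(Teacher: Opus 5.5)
Your proof is correct. It rests on the same fact as the paper's proof: for $b \ge 0$ the increment $\spl(a+b)-\spl(a)$ is smallest at $a=0$, where it equals $\spl(b)-\log 2$. The two proofs differ only in technique. The paper never differentiates. It rewrites the increment exactly as $\log\bigl(1+\sigma(a)(e^b-1)\bigr)$ and then uses $\sigma(a)\ge\tfrac12$ (from $a\ge 0$) together with $e^b-1\ge 0$ (from $b\ge 0$). Each hypothesis therefore enters in one visible place, and the argument is a single line of algebra specific to softplus. Your derivative (or integral) argument uses only that $\spl'=\sigma$ is nondecreasing, i.e.\ that $\spl$ is convex. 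So the same argument gives $f(a+b)-f(a)\ge f(b)-f(0)$ for every differentiable convex $f$ and all $a,b\ge 0$, with the lemma as the special case $f=\spl$. One small slip: the inequality in your first sentence is the lemma itself, not a stronger statement.
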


\begin{proof}
Write
\[
  \spl(a+b) - \spl(a)
  = \log\!\frac{1+e^{a+b}}{1+e^a}
  = \log\!\bigl((1-\sigma(a)) + \sigma(a)\,e^b\bigr),
\]
where $\sigma(a)=e^a/(1+e^a)$ is the sigmoid.
For $a\ge 0$, $\sigma(a)\ge\tfrac{1}{2}$, and for $b\ge 0$, $e^b\ge 1$, so
\[
  (1-\sigma(a)) + \sigma(a)\,e^b
  = 1 + \sigma(a)(e^b - 1)
  \;\ge\; 1 + \tfrac{1}{2}(e^b - 1)
  = \frac{1+e^b}{2}.
\]
Therefore $\spl(a+b)-\spl(a) \ge \log\frac{1+e^b}{2} = \spl(b) - \log 2$.
\end{proof}

\begin{lemma}[Range of $\Delta_j$]
\label{lem:bounded_diff}
For each hidden unit $j$ and all $(\bv,\bc)$, $|\Delta_j(\bv,\bc)| \le \|\bU_j\|\,\|\bc\|$. Hence, under Assumption~\ref{A3}, $\mathbb{E}[\Delta_j^2] \le M_j^2$ with $M_j = \|\bU_j\|\,M_c$, and $\mathrm{Var}[\CB] \le (\sum_j M_j)^2$.
\end{lemma}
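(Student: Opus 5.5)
The plan is to prove the pointwise bound first and derive both moment statements from it. Fix a hidden unit $j$ and a pair $(\bv,\bc)$. Put $a=\alpha_j^{\mathrm{feat}}$ and $b=\alpha_j^{\mathrm{ctx}}=\hat{\bc}_j$, so that $\Delta_j=\spl(a+b)-\spl(a)$.

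\emph{Step 1: Lipschitz estimate.} Since $\spl'=\sigma\in(0,1)$, the mean value theorem gives $|\Delta_j|=\sigma(\xi)\,|b|\le|b|$ for some $\xi$ between $a$ and $a+b$. The same argument shows more: $\Delta_j$ has the sign of $b$, and $\Delta_j\le\max(b,0)$. This refinement is the one used in Step~3 of Theorem~\ref{thm:sep}, so it costs nothing to record it here.

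\emph{Step 2: bounding the gated context.} Write $|b|=|\bU_j^\top\bc|\,\sigma(\mathrm{Gate}_j^\top\bc)$. The gate factor lies in $(0,1)$, so $|b|\le|\bU_j^\top\bc|$. Cauchy--Schwarz then gives $|b|\le\|\bU_j\|\,\|\bc\|$, and combining with Step~1 yields $|\Delta_j|\le\|\bU_j\|\,\|\bc\|$ for all $(\bv,\bc)$. The only point that needs care is that the GLU gate can only shrink the linear context projection, never enlarge it. Nothing about $\bv$ enters, which matches the remark that the quadratic term cancels.

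\emph{Step 3: moments.} Squaring the pointwise bound and taking expectations gives $\mathbb{E}[\Delta_j^2]\le\|\bU_j\|^2\,\mathbb{E}\|\bc\|^2\le\|\bU_j\|^2M_c^2=M_j^2$ under Assumption~\ref{A3}. For the variance I use $\mathrm{Var}[\CB]\le\mathbb{E}[\CB^2]$. Minkowski's inequality in $L^2$ gives $\|\CB\|_{L^2}\le\sum_j\|\Delta_j\|_{L^2}\le\sum_j M_j$, and squaring yields $\mathrm{Var}[\CB]\le(\sum_j M_j)^2$. Equivalently, one can expand $\mathbb{E}[(\sum_j\Delta_j)^2]$ and apply Cauchy--Schwarz to each cross term, $\mathbb{E}[\Delta_j\Delta_k]\le M_jM_k$.

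The main obstacle I anticipate is not analytic but one of scope. Assumption~\ref{A3} is stated without specifying the distribution, while the lemma is applied under both $\Pl$ and $(\Pf,\Qr)$. I would state explicitly that the bound holds for any distribution whose context second moment is at most $M_c^2$. For $\Qr=\mathcal{N}(\mathbf{0},\mathbf{I})$ this means taking $M_c^2\ge n_v$, or else interpreting $M_c$ as the maximum of the two second-moment bounds. With that convention fixed, every step above is routine.
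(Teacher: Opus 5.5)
Your proof is correct and matches the paper's argument step for step. It uses the mean-value theorem with $\spl'=\sigma\in(0,1)$, then the bound $|\hat{\bc}_j|\le\|\bU_j\|\,\|\bc\|$ (which you justify more carefully than the paper, via the gate factor in $(0,1)$ and Cauchy--Schwarz), then squaring under Assumption~\ref{A3}, and finally Minkowski's inequality in $L^2$ combined with $\mathrm{Var}[\CB]\le\mathbb{E}[\CB^2]$. Your scope caveat is also well taken. The paper leaves implicit that $M_c$ must bound the context second moment under every distribution the lemma is applied to, including $\Qr=\mathcal{N}(\mathbf{0},\mathbf{I})$, where $\mathbb{E}\|\bc'\|^2=n_v$ in fact exceeds the tabulated $M_c^2$ on Reddit ($7.8^2<73$).
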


\begin{proof}
Since $\spl$ is monotonically increasing with $\spl'=\sigma\in(0,1)$, the mean-value theorem gives
\[
\begin{aligned}
  |\Delta_j|
  &= \bigl|\spl\bigl(\alpha_j^{\mathrm{feat}}+\alpha_j^{\mathrm{ctx}}\bigr)
      -\spl\bigl(\alpha_j^{\mathrm{feat}}\bigr)\bigr| \\
  &\le |\alpha_j^{\mathrm{ctx}}|
   = |\hat{\bc}_j| \\
  &\le \|\bU_j\|\,\|\bc\|.
\end{aligned}
\]
Squaring and taking expectations under Assumption~\ref{A3} gives $\mathbb{E}[\Delta_j^2] \le \|\bU_j\|^2 M_c^2 = M_j^2$, and Minkowski's inequality gives $\mathbb{E}[\CB^2]^{1/2} \le \sum_j \mathbb{E}[\Delta_j^2]^{1/2} \le \sum_j M_j$, so $\mathrm{Var}[\CB] \le (\sum_j M_j)^2$.
The sign of $\Delta_j$ equals the sign of $\alpha_j^{\mathrm{ctx}}$ (by monotonicity of $\spl$), so both positive and negative values occur.
\end{proof}

\begin{lemma}[Lipschitz continuity of CB]
\label{lem:lip}
For any $(\bv_1,\bc_1)$ and $(\bv_2,\bc_2)$ with $\|\bc_1\|_2, \|\bc_2\|_2 \le M_c$,
\[
  \bigl|\CB(\bv_1,\bc_1) - \CB(\bv_2,\bc_2)\bigr|
  \;\le\;
  L_v\,\|\bv_1-\bv_2\| + L_c\,\|\bc_1 - \bc_2\|,
\]
where $L_v = \sum_j\|\bw_j\|$ and $L_c = \sum_j\|\bU_j\|\bigl(1+\tfrac{1}{4}\|\mathrm{Gate}_j\|\,M_c\bigr)$.
In particular, $\CB$ is $L$-Lipschitz with $L=L_v+L_c$ on $\{(\bv,\bc) \in \mathbb{R}^{2n_v} : \|\bc\|_2 \le M_c\}$.
\end{lemma}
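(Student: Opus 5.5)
We split the difference by the triangle inequality through the intermediate point $(\bv_2,\bc_1)$:
\[
|\CB(\bv_1,\bc_1)-\CB(\bv_2,\bc_2)| \le |\CB(\bv_1,\bc_1)-\CB(\bv_2,\bc_1)| + |\CB(\bv_2,\bc_1)-\CB(\bv_2,\bc_2)|.
\]
We then bound each term unit by unit, using that $\CB=\sum_j\Delta_j$ (Eq.~\eqref{eq:cb_def_app}) and that $\spl'=\sigma\in(0,1)$, so $\spl$ is $1$-Lipschitz. Summing the per-unit bounds over $j$ gives $L_v$ and $L_c$.

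\emph{Feature term.} With $\bc=\bc_1$ fixed, $\Delta_j$ depends on $\bv$ only through $a=\alpha_j^{\mathrm{feat}}=\bw_j^\top\bv+b_j$, via $g(a)=\spl(a+\hat{\bc}_j)-\spl(a)$. Its derivative is $g'(a)=\sigma(a+\hat{\bc}_j)-\sigma(a)$, which lies in $(-1,1)$, so $|g'|\le 1$. By the mean-value theorem and Cauchy--Schwarz,
\[
|\Delta_j(\bv_1,\bc_1)-\Delta_j(\bv_2,\bc_1)| \le |\bw_j^\top(\bv_1-\bv_2)| \le \|\bw_j\|\,\|\bv_1-\bv_2\|.
\]
Summing over $j$ gives $L_v=\sum_j\|\bw_j\|$. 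No bound on $\bc$ is needed here; a sharper constant is available but unnecessary.

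\emph{Context term.} With $\bv=\bv_2$ fixed, the term $-\spl(\alpha_j^{\mathrm{feat}})$ cancels, and $\spl(\alpha_j^{\mathrm{feat}}+\cdot)$ is $1$-Lipschitz. Hence
\[
|\Delta_j(\bv_2,\bc_1)-\Delta_j(\bv_2,\bc_2)| \le |\hat{\bc}_j(\bc_1)-\hat{\bc}_j(\bc_2)|.
\]
The main step is a Lipschitz bound for the GLU map $f(\bc)=(\bU_j^\top\bc)\,\sigma(\mathrm{Gate}_j^\top\bc)$ on the ball $\{\|\bc\|\le M_c\}$. Its gradient is
\[
\nabla f=\sigma(\mathrm{Gate}_j^\top\bc)\,\bU_j+(\bU_j^\top\bc)\,\sigma'(\mathrm{Gate}_j^\top\bc)\,\mathrm{Gate}_j .
\]
We use $\sigma\le 1$, $\sigma'\le\tfrac14$, and $|\bU_j^\top\bc|\le\|\bU_j\|M_c$ on the ball. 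This gives
\[
\|\nabla f\|\le\|\bU_j\|\bigl(1+\tfrac14\|\mathrm{Gate}_j\|M_c\bigr).
\]
The ball is convex, so the segment between $\bc_1$ and $\bc_2$ stays inside it and the mean-value inequality along that segment applies. This is the only point where the restriction $\|\bc\|\le M_c$ is used, and it is the step that needs care. Summing over $j$ yields $L_c$.

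Combining the two terms gives the stated inequality. Since $\max(\|\bv_1-\bv_2\|,\|\bc_1-\bc_2\|)\le\|(\bv_1,\bc_1)-(\bv_2,\bc_2)\|_2$, the function $\CB$ is $(L_v+L_c)$-Lipschitz in the joint Euclidean norm on the stated region.
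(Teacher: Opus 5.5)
Your proposal is correct and follows the paper's proof. Both use the same per-channel gradient bounds: $|\sigma(\cdot)-\sigma(\cdot)|<1$ on the feature channel, and $\sigma\le 1$, $\sigma'\le\tfrac14$, $|\bU_j^\top\bc|\le\|\bU_j\|M_c$ for the GLU context map, combined by the mean-value theorem on each channel. Your intermediate point $(\bv_2,\bc_1)$ and your appeal to convexity of the ball $\{\|\bc\|_2\le M_c\}$ just make explicit the step the paper summarizes as ``combining via the mean-value theorem on each channel.''
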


\begin{proof}
For any $j$, $\nabla_{\bv}\Delta_j
= \bw_j\bigl[\sigma(\alpha_j^{\mathrm{feat}}+\alpha_j^{\mathrm{ctx}})
             -\sigma(\alpha_j^{\mathrm{feat}})\bigr]$.
Since each sigmoid value lies in $(0,1)$, the difference in brackets lies in $(-1,1)$, so
$\|\nabla_{\bv}\Delta_j\| \le \|\bw_j\|$.
By the triangle inequality,
$\|\nabla_{\bv}\CB\| = \|\sum_j\nabla_{\bv}\Delta_j\|
 \le \sum_j\|\bw_j\| = L_v$.

For the context channel,
the GLU gate gives $\alpha_j^{\mathrm{ctx}} = (\bU_j^\top\bc)\,\sigma(\mathrm{Gate}_j^\top\bc)$, so
\[
  \nabla_{\bc}\alpha_j^{\mathrm{ctx}}
  = \bU_j\,\sigma(\mathrm{Gate}_j^\top\bc)
  + (\bU_j^\top\bc)\,\sigma'(\mathrm{Gate}_j^\top\bc)\,\mathrm{Gate}_j.
\]
Since $\sigma\le 1$, $\sigma'\le\tfrac{1}{4}$, and $|\bU_j^\top\bc|\le\|\bU_j\|\,M_c$ for $\|\bc\|_2 \le M_c$,
$\|\nabla_{\bc}\alpha_j^{\mathrm{ctx}}\| \le \|\bU_j\| + \tfrac{1}{4}\|\bU_j\|\,M_c\,\|\mathrm{Gate}_j\|$.
Multiplying by $\sigma(\alpha_j^{\mathrm{feat}}+\alpha_j^{\mathrm{ctx}})\le 1$ (from $\spl'=\sigma$) and summing:
$\|\nabla_{\bc}\CB\|\le L_c$.
Combining via the mean-value theorem on each channel yields the result.
\end{proof}

% ------------------------------------------------------------
\subsection{Proof of Theorem~\ref{thm:sep} (Separation)}
\label{app:sep_proof}
% ------------------------------------------------------------

\begin{proof}
The proof proceeds in four steps: decomposition, legitimate lower bound, adversarial upper bound, and tail bounds.

\medskip
\textbf{Step 1: Decomposition.}
Write $\CB(\bv,\bc)=\sum_{j=1}^{n_h}\Delta_j$ as in~\eqref{eq:cb_def_app}. We will analyze the per-unit contributions $\Delta_j$ separately for the coherent units and their complement.

\medskip
\textbf{Step 2: Lower bound for legitimate nodes ($\Psi>0$).}
Let $\mathcal{C}\subset\{1,\dots,n_h\}$ denote the set of \emph{coherent} hidden units (those satisfying Assumption~\ref{A2}), $|\mathcal{C}| \ge \beta_0 n_h$, and for each $j \in \mathcal{C}$ let $E_j = \{\alpha_j^{\mathrm{feat}} \ge 0 \;\wedge\; \alpha_j^{\mathrm{ctx}} \ge 0\}$ be its joint-positive event. On $E_j$, Lemma~\ref{lem:softplus} (with $a=\alpha_j^{\mathrm{feat}}\ge 0$, $b=\alpha_j^{\mathrm{ctx}}\ge 0$) gives
\begin{equation}
  \Delta_j
  \;\ge\; \spl(\alpha_j^{\mathrm{ctx}}) - \log 2
  \;\ge\; \spl(0) - \log 2 \;=\; 0.
  \label{eq:delta_lb}
\end{equation}
Write $\Delta_j = \Delta_j^{+} - \Delta_j^{-}$ with $\Delta_j^{\pm} = \max(\pm\Delta_j, 0)$, and define the conditional gains $g_j = \mathbb{E}[\spl(\alpha_j^{\mathrm{ctx}}) - \log 2 \mid E_j]$ for $j\in\mathcal{C}$, the normalized expected gain $G_{\mathcal{C}} = \tfrac{1}{n_h}\sum_{j\in\mathcal{C}} \mathbb{P}[E_j]\, g_j$ and the normalized expected negative contribution $N = \tfrac{1}{n_h}\sum_{j=1}^{n_h} \mathbb{E}[\Delta_j^{-} \mid \Pl]$. By~\eqref{eq:delta_lb}, $\Delta_j^{+} \ge \Delta_j\,\mathbf{1}_{E_j} \ge (\spl(\alpha_j^{\mathrm{ctx}}) - \log 2)\,\mathbf{1}_{E_j}$ for $j\in\mathcal{C}$, and $\Delta_j^{+} \ge 0$ for every unit, so
\begin{equation}
  \Psi
  \;=\; \sum_{j=1}^{n_h} \mathbb{E}[\Delta_j^{+} - \Delta_j^{-} \mid \Pl]
  \;\ge\; \sum_{j\in\mathcal{C}} \mathbb{P}[E_j]\, g_j \;-\; \sum_{j=1}^{n_h} \mathbb{E}[\Delta_j^{-}\mid\Pl]
  \;=\; n_h\bigl(G_{\mathcal{C}} - N\bigr).
  \label{eq:cb_split}
\end{equation}
Assumption~\ref{A5} gives $G_{\mathcal{C}} - N \ge \gamma > 0$ with $\gamma$ independent of $n_h$, hence
\begin{equation}
  \Psi \;=\; \mathbb{E}[\CB\mid\Pl] \;\ge\; n_h\,\gamma \;=\; \Omega(n_h) \;>\; 0.
  \label{eq:psi_lb}
\end{equation}

\medskip
\textbf{Step 3: Upper bound for adversarial nodes.}
For every unit and every input, $\Delta_j \le \max(\hat{\bc}_j, 0)$. If $\hat{\bc}_j \le 0$, monotonicity of $\spl$ gives $\Delta_j \le 0$. If $\hat{\bc}_j > 0$, the mean-value theorem with $\spl' = \sigma < 1$ gives $\Delta_j \le \hat{\bc}_j$. The bound holds whatever the features, so Assumption~\ref{A4} gives
\begin{equation}
  \Phi
  \;\le\; \sum_{j=1}^{n_h} \mathbb{E}_{\bc'\sim\Qr}\bigl[\max(\hat{\bc}'_j, 0)\bigr]
  \;\le\; \rho_1 M_c \sum_{j=1}^{n_h}\|\bU_j\|
  \;=\; n_h\,\rho_1\,\bar{u}\,M_c.
  \label{eq:phi_ub}
\end{equation}
With~\eqref{eq:psi_lb} and the condition $\rho_1\bar{u}M_c \le \gamma - \varepsilon$ of Theorem~\ref{thm:sep},
\[
  \Psi - \Phi \;\ge\; n_h\bigl(\gamma - \rho_1\bar{u}M_c\bigr) \;\ge\; n_h\,\varepsilon \;=\; \Omega(n_h).
\]
The certified margin $n_h(\gamma - \rho_1\bar{u}M_c)$ decreases linearly as $\rho_1$ grows.

\emph{Degenerate context paths.} A unit with $\bU_j = \mathbf{0}$ has $\hat{\bc}_j \equiv 0$, hence $\Delta_j \equiv 0$, and contributes nothing to $G_{\mathcal{C}}$, $N$ or~\eqref{eq:phi_ub}. Units with small $\|\bU_j\|$ contribute proportionally little to the adversarial bound.

\emph{Feature channel.} Feature deviation enters only as a bounded perturbation. At fixed context, the gradient bound $\|\nabla_{\bv}\Delta_j\| \le \|\bw_j\|$ from the proof of Lemma~\ref{lem:lip} gives $|\Delta_j(\bv',\bc) - \Delta_j(\bv,\bc)| \le \|\bw_j\|_2\,\|\bv' - \bv\|$, so replacing legitimate features by adversarial features at distance $d$ moves each unit by at most $\|\bw_j\|_2\,d$ and the score by at most $L_v\,d$. The bound~\eqref{eq:phi_ub} does not rely on this channel.

\medskip
\textbf{Step 4: Tail bounds.}
Steps~2--3 establish $\Psi-\Phi>0$.
For $\tau^* \in (\Phi,\Psi)$, Chebyshev's inequality gives
\begin{equation}
\label{eq:chebyshev_legit}
  \mathbb{P}[\CB < \tau^* \mid \Pl]
  \;\le\; \frac{\mathrm{Var}[\CB\mid\Pl]}{(\Psi-\tau^*)^2}.
\end{equation}
By Lemma~\ref{lem:bounded_diff}, $\mathrm{Var}[\CB\mid\Pl] \le (\sum_j M_j)^2 < \infty$.
Setting $\tau^*=(\Psi+\Phi)/2$:
\begin{equation}
\label{eq:fpr_bound}
  \mathrm{FPR}
  \;\le\; \frac{4\,\mathrm{Var}[\CB\mid\Pl]}{(\Psi-\Phi)^2}.
\end{equation}
The adversarial tail follows identically:
$1-\mathrm{TPR} \le 4\,\mathrm{Var}[\CB\mid\Pf,\Qr]\,/\,(\Psi-\Phi)^2$.
Since $\Psi-\Phi \ge n_h\varepsilon$ (Steps~2--3) and the variances are bounded above by $(\sum_j M_j)^2 = O(n_h^2)$, the bounds are finite for any $\tau^*\in(\Phi,\Psi)$ but need not fall below one.
They certify finiteness and do not predict the observed error rates.
\end{proof}

\begin{remark}[Rate under uncorrelated units]
\label{rem:exp_tails}
If the per-unit contributions $\{\Delta_j\}$ are pairwise uncorrelated under $\Pl$ (which holds approximately when the weight vectors span diverse directions), then $\mathrm{Var}[\CB] \le \sum_j \mathrm{Var}[\Delta_j] \le n_h\overline{M}^2$ with $\overline{M} = \max_j M_j$, and the right-hand side of~\eqref{eq:fpr_bound} becomes $O(1/n_h)$.
This refinement requires assumptions beyond Assumptions~\ref{A1}--\ref{A5}, and no bound in this paper relies on it.

\end{remark}

% ------------------------------------------------------------
\subsection{Proof of Corollary~\ref{cor:detection} (Detection Guarantee)}
% ------------------------------------------------------------

\begin{proof}
Set $\tau^*=(\Psi+\Phi)/2$, so $\Psi-\tau^*=\tau^*-\Phi=(\Psi-\Phi)/2>0$.
Applying~\eqref{eq:chebyshev_legit}:
\begin{align*}
  \mathrm{FPR}
  &\;\le\; \frac{4\,\mathrm{Var}[\CB\mid\Pl]}{(\Psi-\Phi)^2},\\[4pt]
  1-\mathrm{TPR}
  &\;\le\; \frac{4\,\mathrm{Var}[\CB\mid\Pf,\Qr]}{(\Psi-\Phi)^2}.
\end{align*}
Both variances are finite by Lemma~\ref{lem:bounded_diff}, and $\Psi-\Phi>0$ by Theorem~\ref{thm:sep}, so both bounds are finite.
\end{proof}

% ------------------------------------------------------------
\subsection{Proof of Theorem~\ref{thm:drift} (Drift Stability)}
% ------------------------------------------------------------

\begin{proof}
\textbf{Step 1: Per-step Wasserstein--Lipschitz bound.}
By Lemma~\ref{lem:lip}, $\CB$ is $L$-Lipschitz with $L = L_v + L_c$ on the region $\{\|\bc\|_2 \le M_c\}$ that carries every $\mathcal{P}_t$ (condition~(ii)). Kantorovich--Rubinstein duality gives
\[
  \bigl|\mathbb{E}_{\mathcal{P}_t}[\CB] - \mathbb{E}_{\mathcal{P}_{t-1}}[\CB]\bigr|
  \;\le\; L \cdot W_1(\mathcal{P}_t, \mathcal{P}_{t-1})
  \;\le\; L\,\delta_{\mathrm{drift}},
\]
where the first inequality is the dual characterization of $W_1$ for $L$-Lipschitz observables and the second is condition~(ii).

\textbf{Step 2: Telescoping.}
By the triangle inequality,
\begin{align*}
  \bigl|\mathbb{E}[\CB_t] - \mathbb{E}[\CB_0]\bigr|
  &\;\le\; \sum_{s=1}^{t}\bigl|\mathbb{E}[\CB_s]-\mathbb{E}[\CB_{s-1}]\bigr|
  \;\le\; L\,t\,\delta_{\mathrm{drift}},
\end{align*}
which establishes~\eqref{eq:driftbound}.

\textbf{Step 3: Drift horizon for the legitimate mean.}
Combining Step~2 with the legitimate baseline $\mathbb{E}[\CB \mid \mathcal{P}_0] = \Psi$ gives
\[
  \mathbb{E}[\CB \mid \mathcal{P}_t] \;\ge\; \Psi - L\,t\,\delta_{\mathrm{drift}}.
\]
Condition~(i) preserves the assumed feature-context correlation, and $\Psi = \mathbb{E}_{\mathcal{P}_0}[\CB]$ is the initial legitimate baseline. With $\tau^* = (\Psi+\Phi)/2$, the threshold remains strictly below $\mathbb{E}[\CB \mid \mathcal{P}_t]$ whenever
\[
  L\,t\,\delta_{\mathrm{drift}} \;<\; \tfrac{1}{2}(\Psi-\Phi)
  \quad\Longleftrightarrow\quad
  t \;<\; \frac{\Psi-\Phi}{2L\,\delta_{\mathrm{drift}}},
\]
the drift horizon~\eqref{eq:drifthorizon}. Within this horizon, the variance argument of Corollary~\ref{cor:detection} applies at time $t$ with gap $\mathbb{E}[\CB \mid \mathcal{P}_t] - \tau^*$ (variances remain bounded by Lemma~\ref{lem:bounded_diff}).
\end{proof}

\begin{corollary}[Drift Horizon]
\label{cor:horizon}
Under the conditions of Theorem~\ref{thm:drift}, with $\tau^* = (\Psi+\Phi)/2$ as in Corollary~\ref{cor:detection}, the legitimate mean stays above the threshold, $\mathbb{E}[\CB \mid \mathcal{P}_t] > \tau^*$, for any $t$ in the worst-case \emph{drift horizon}
\begin{equation}
\label{eq:drifthorizon}
t \;<\; \frac{\Psi - \Phi}{2L\,\delta_{\mathrm{drift}}}.
\end{equation}
\end{corollary}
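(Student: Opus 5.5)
The plan is to derive Corollary~\ref{cor:horizon} directly from the mean bound~\eqref{eq:driftbound} of Theorem~\ref{thm:drift} together with the threshold placement of Corollary~\ref{cor:detection}. No new estimate is needed. The argument is a one-sided reading of the two-sided bound, followed by rearranging an inequality.

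\textbf{Step 1: lower bound on the legitimate mean.} Identify the baseline $\mathbb{E}[\CB \mid \mathcal{P}_0]$ with $\Psi$, since the model is trained and $\Psi$ is defined on the legitimate distribution at time $0$. Then drop the absolute value in~\eqref{eq:driftbound} on the lower side, which gives
\[
\mathbb{E}[\CB \mid \mathcal{P}_t] \;\ge\; \Psi - L\,t\,\delta_{\mathrm{drift}}.
\]
This is valid because Lemma~\ref{lem:lip} makes $\CB$ $L$-Lipschitz on $\{\|\bc\|_2\le M_c\}$. Condition~(ii) places every $\mathcal{P}_t$ in that region, so Kantorovich--Rubinstein applies at each step, and telescoping yields the bound. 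Integrability of $\CB$ under each $\mathcal{P}_t$ follows from Lemma~\ref{lem:bounded_diff}, since $|\CB|\le \sum_j\|\bU_j\|M_c$ on the support.

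\textbf{Step 2: compare with the threshold.} Set $\tau^*=(\Psi+\Phi)/2$, so that $\Psi-\tau^*=(\Psi-\Phi)/2$. Then $\mathbb{E}[\CB\mid\mathcal{P}_t]>\tau^*$ holds as soon as
\[
\Psi - L t\delta_{\mathrm{drift}} > \Psi - \tfrac12(\Psi-\Phi),
\]
that is, $L t \delta_{\mathrm{drift}} < \tfrac12(\Psi-\Phi)$. Dividing by $2L\delta_{\mathrm{drift}}$ gives exactly~\eqref{eq:drifthorizon}.

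\textbf{Degenerate cases.} Division requires $L\delta_{\mathrm{drift}}>0$. If $\delta_{\mathrm{drift}}=0$ or $L=0$, the mean never moves, and the conclusion holds for all $t$; this is consistent with reading the horizon as $+\infty$. To make the horizon positive, invoke Theorem~\ref{thm:sep}: when its additional condition holds, $\Psi-\Phi\ge n_h\varepsilon>0$. Without that condition, the set of admissible $t$ may be empty, and the statement is vacuous but still true.

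\textbf{Main obstacle.} The subtle point is not analytic but one of bookkeeping. $\Phi$ and $\tau^*$ are fixed at time $0$ and are not re-evaluated under drift; only the legitimate mean moves. I will state this explicitly. I will also stress that condition~(i) is not used in the bound; it only motivates why $\delta_{\mathrm{drift}}$ stays small. Finally, the conclusion concerns the mean only. Any statement about the FPR at time $t$ would additionally require Chebyshev with gap $\mathbb{E}[\CB\mid\mathcal{P}_t]-\tau^*$, as in Corollary~\ref{cor:detection}, and that is not part of this claim.
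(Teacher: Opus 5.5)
Your proposal is correct and follows essentially the same route as the paper. The paper also reads off $\mathbb{E}[\CB\mid\mathcal{P}_t]\ge\Psi-L\,t\,\delta_{\mathrm{drift}}$ from the telescoped Kantorovich--Rubinstein bound with baseline $\mathbb{E}[\CB\mid\mathcal{P}_0]=\Psi$, and rearranges against $\tau^*=(\Psi+\Phi)/2$. One small point: in your degenerate case $L\,\delta_{\mathrm{drift}}=0$, the claim that the conclusion holds for all $t$ still needs $\Psi>\Phi$. That condition is part of the hypotheses imported through Corollary~\ref{cor:detection} (via Theorem~\ref{thm:sep}), so it is not optional as your remark suggests.
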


\begin{proof}
Step~3 of the proof of Theorem~\ref{thm:drift} establishes $\mathbb{E}[\CB \mid \mathcal{P}_t] \ge \Psi - L t \delta_{\mathrm{drift}}$. This exceeds $\tau^* = (\Psi+\Phi)/2$ exactly when $L t \delta_{\mathrm{drift}} < (\Psi-\Phi)/2$, i.e., $t < (\Psi-\Phi)/(2L\delta_{\mathrm{drift}})$. The variances remain bounded by Lemma~\ref{lem:bounded_diff}, so Chebyshev's inequality still applies at time $t$, with gap $\mathbb{E}[\CB \mid \mathcal{P}_t] - \tau^*$ in place of $(\Psi-\Phi)/2$.
\end{proof}

% ------------------------------------------------------------
\subsection{Proof of Corollary~\ref{cor:quantile} (Quantile Stability)}
% ------------------------------------------------------------

\begin{proof}
Let $\mu_t$ be the law of the detection score $-\CB(\bx_t,\bc_t)$ under $\mathcal{P}_t$. Any coupling of $\mathcal{P}_t$ and $\mathcal{P}_0$ maps through $-\CB$ to a coupling of $\mu_t$ and $\mu_0$, and $-\CB$ is $L$-Lipschitz (Lemma~\ref{lem:lip}), so $W_1(\mu_t,\mu_0) \le L\,W_1(\mathcal{P}_t,\mathcal{P}_0)$. The triangle inequality for $W_1$ and condition~(ii) give $W_1(\mathcal{P}_t,\mathcal{P}_0) \le t\,\delta_{\mathrm{drift}}$. On the real line the monotone quantile coupling is optimal, so $W_1(\mu_t,\mu_0) = \int_0^1 |Q_t(u) - Q_0(u)|\,du$, which proves the claim. The bound controls the average displacement of the quantiles, not any single quantile such as the 95th percentile that sets $\tau^*$.
\end{proof}

\begin{remark}[Tightness]
The Wasserstein bound is a worst-case upper bound, loose for natural drift, since real drift moves $\bx$ and $\bc$ jointly along directions where the softplus difference~\eqref{eq:cb} is nearly flat. We discuss this slack quantitatively in~\S\ref{sec:discussion}.
\end{remark}

\begin{remark}[Coherent Activation justification]
\label{rem:a2_justification}
Assumption~\ref{A2} (joint-positive activation for a fraction $\beta_0$ of hidden units with probability $\ge p_0$) follows qualitatively from contrastive-divergence training on the correlated $(\bx,\bc)$ pairs of Assumption~\ref{A1}: matched pairs raise both $\alpha_j^{\mathrm{feat}} = \bw_j^\top\bx + b_j$ and $\alpha_j^{\mathrm{ctx}} = \hat{\bc}_j$ on the same training events, since the model is rewarded for assigning low free energy when both channels fire coherently. In the Gaussian-projection regime, the arcsine law~\cite{goemans1995improved} relates Assumption~\ref{A1}'s correlation $\rho_0$ to a sign-correlation lower bound $\tfrac{2}{\pi}\arcsin(\rho_0) \ge \tfrac{2}{\pi}\rho_0$, which under symmetric pre-activation marginals translates to $p_0 \ge \tfrac{1}{4} + \tfrac{\rho_0}{2\pi}$. The constant $\beta_0$ is measured after model convergence and $p_0$ is bounded by the arcsine argument above.
\end{remark}

% ============================================================
%  Appendix B — Datasets
% ============================================================
\section{Datasets}
\label{app:datasets}

\textbf{XBlock}~\cite{chen2020phishing} is an Ethereum transaction graph with $153{,}477$ wallet nodes (1{,}127 labeled phishing, 0.7\%) and $715{,}573$ directed transaction edges. Node features are $38$ per-wallet transaction statistics (counts, incoming/outgoing volumes, gas usage, temporal spread). The label-graph correlation is anti-homophilic. The sign-corrected feature-context correlation is $\rho_0 = 0.30$ (Eq.~\eqref{eq:context}).
\textbf{Reddit}~\cite{kumar2019predicting} is a social-media graph with $10{,}984$ user nodes (366 banned users, 3.3\%) and $168{,}016$ edges between users who co-participate in subreddits. Features are $73$-dimensional, combining $64$ LIWC linguistic categories computed over each user's posts with $9$ graph-structural signals (degree, clustering, centrality variants). Labels correlate homophilically with the graph. The sign-corrected feature-context correlation is $\rho_0 = 0.68$.
Features are globally normalized. A $10\%$ held-out split of the pool assumed legitimate, which the datasets' benign labels define in our experiments, is used for threshold calibration and drift-induced FPR measurement, and a fixed attack-generation reserve of $\sim$200 labeled adversaries is held out per dataset, so the supervised baselines see the remaining $932$ / $166$ adversary labels; all other nodes are used for training. Without drift, the Reddit threshold is stable down to small calibration samples, with FPR $5.9 \pm 0.6\%$ at $530$ calibration nodes and $6.0 \pm 0.2\%$ at $1{,}061$ over three sampling seeds, and Isolated Injection TPR $93.7 \pm 0.2\%$ in that run.

\paragraph{Verified theorem assumptions.} The constants entering Theorems~\ref{thm:sep}--\ref{thm:drift} are measured on training data: $\rho_0 = \tfrac{1}{N}\sum_i \cos(\bx_i, \bc_i)$ on the pool assumed legitimate before training; $\beta_0$, $M_x$, $M_c$ after model convergence (Table~\ref{tab:assumptions}). $M_x$ and $M_c$ are root-mean-square (RMS) norms over that pool, and the drift budgets of Table~\ref{tab:lipschitz} evaluate $L$ at this value of $M_c$ rather than at a verified support radius. Assumption~\ref{A5} holds for the trained models on both datasets.

\begin{table}[t]
\centering
\small
\caption{Verified Assumption~\ref{A1}--\ref{A3} constants on training data.}
\label{tab:assumptions}
\begin{tabular}{lcccc}
\toprule
Dataset & $\rho_0$ (A1) & $\beta_0$ (A2) & $M_x$ (A3) & $M_c$ (A3) \\
\midrule
XBlock & $0.30$ & $0.38$ & $6.6$ & $23.5$ \\
Reddit & $0.68$ & $0.95$ & $8.6$ & $7.8$ \\
\bottomrule
\end{tabular}
\end{table}

The asymmetry $M_c > M_x$ on XBlock reflects sign-corrected aggregation of positive-mean transaction features under a heavy-tailed degree distribution; Reddit's centered LIWC features yield a balanced pair.

% ============================================================
%  Appendix C — Feature-Context Decomposition
% ============================================================
\section{Additional Baselines}
\label{app:baselines}

\paragraph{Protocol.} Every detector in Tables~\ref{tab:all_detectors} and~\ref{tab:baselines_full} is scored on the same legitimate evaluation nodes and the same attack reserve, with the attacks of \S\ref{sec:threat}. Under drift, models and thresholds stay frozen at $t{=}0$, with the single exception of ARC described below, and the Combined Score follows \S\ref{sec:setup}.

\paragraph{Implementations.} The nine detectors of \S\ref{sec:setup} follow their published configurations, and model selection uses a held-out validation split of the training pool. XGBGraph, BWGNN and Graph Transformer are the GADBench models~\cite{tang2023gadbench}, and GAD-NR, CONAD, AnomalyDAE and OCGNN run through PyGOD~\cite{liu2024pygod}. CONAD and AnomalyDAE exhausted memory on XBlock's 153K nodes under this configuration and are reported on Reddit only. GDN and ARC run from their official repositories. ARC is a pretrained generalist detector applied without fine-tuning, with $300$ legitimate nodes as its in-context normal set. It re-runs its per-graph preprocessing on every evaluated graph, which is its intended usage, while its threshold stays frozen at $t{=}0$. Fixing that preprocessing at $t{=}0$ as well gives Combined Scores of $0.27$ on XBlock and $0.46$ on Reddit.

\paragraph{Ablations.} The signed-cosine score is $-\cos(\bx_i, \bc_i)$ with the sign-corrected context, the feature-context principle with no learning. Context-MLP is a two-layer MLP trained on legitimate pairs to predict $\bx$ from $\bc$ and scored by $\|\bx - \hat{\bx}(\bc)\|_2$, which keeps context conditioning without the free-energy differential or the GLU gate.

\begin{table}[t]
  \caption{All detectors at the last drift step, $T{=}20$.}
  \label{tab:all_detectors}
  \centering
  \small
  \setlength{\tabcolsep}{3.5pt}
  \renewcommand{\arraystretch}{0.92}
  \begin{tabular}{@{}llcccccc@{}}
    \toprule
    & & \multicolumn{3}{c}{XBlock} & \multicolumn{3}{c}{Reddit} \\
    \cmidrule(lr){3-5} \cmidrule(lr){6-8}
    Family & Method & TPR & FPR$_{\mathrm{drift}}$ & Combined & TPR & FPR$_{\mathrm{drift}}$ & Combined \\
    \midrule
    Ours & CB-RBM & $0.99$ & $5.2$ & $\mathbf{0.94}$ & $0.96$ & $8.4$ & $\mathbf{0.88}$ \\
    \midrule
    Supervised & GraphSAGE & $0.89$ & $5.6$ & $0.84$ & $0.06$ & $5.0$ & $0.05$ \\
     & GAT & $0.71$ & $11.6$ & $0.63$ & $0.32$ & $4.6$ & $0.31$ \\
     & GCN & $0.75$ & $5.4$ & $0.71$ & $0.36$ & $4.6$ & $0.35$ \\
     & XGBGraph & $0.98$ & $17.8$ & $0.81$ & $0.01$ & $0.0$ & $0.01$ \\
     & Graph Transformer & $0.87$ & $11.1$ & $0.77$ & $0.07$ & $3.8$ & $0.07$ \\
     & BWGNN & $0.54$ & $5.7$ & $0.51$ & $0.17$ & $8.2$ & $0.16$ \\
    Shift-robust & GDN & $0.77$ & $10.4$ & $0.69$ & $0.24$ & $1.8$ & $0.23$ \\
    Generalist & ARC & $0.40$ & $7.0$ & $0.37$ & $0.86$ & $47.2$ & $0.45$ \\
    Reconstruction & DOMINANT & $0.84$ & $99.0$ & $0.01$ & $0.81$ & $43.2$ & $0.46$ \\
     & GAE & $0.84$ & $66.6$ & $0.28$ & $0.80$ & $91.8$ & $0.07$ \\
     & GAD-NR & $0.53$ & $3.4$ & $0.51$ & $0.67$ & $5.4$ & $0.63$ \\
     & AnomalyDAE & -- & -- & -- & $0.60$ & $82.0$ & $0.11$ \\
    Contrastive & CoLA & $0.19$ & $15.4$ & $0.16$ & $0.56$ & $8.6$ & $0.51$ \\
     & CONAD & -- & -- & -- & $0.79$ & $36.4$ & $0.50$ \\
    One-class & OCGNN & $0.20$ & $7.0$ & $0.18$ & $0.46$ & $7.0$ & $0.43$ \\
    \midrule
    Ablation & Std-RBM (no context) & $0.41$ & $99.8$ & $0.001$ & $0.09$ & $89.4$ & $0.010$ \\
     & Signed cosine (no learning) & $0.18$ & $7.7$ & $0.17$ & $0.86$ & $33.4$ & $0.57$ \\
     & Context-MLP (no RBM/GLU) & $0.76$ & $29.9$ & $0.53$ & $0.92$ & $95.7$ & $0.04$ \\
    \bottomrule
  \end{tabular}
\end{table}

\begin{table}[t]
  \caption{Per-attack AUROC / TPR@5\%FPR (\%).}
  \label{tab:baselines_full}
  \centering
  \scriptsize
  \setlength{\tabcolsep}{3pt}
  \begin{tabular}{@{}llcccc@{}}
    \toprule
    & Method & Isolated Inj. & Feat.\ Camo & Sybil & Rel.\ Camo \\
    \midrule
    \multirow{11}{*}{\rotatebox{90}{\textbf{XBlock}}} & XGBGraph & $99.9$ / $99.3$ & $98.9$ / $93.2$ & $99.9$ / $99.7$ & $99.7$ / $99.9$ \\
     & Graph Transformer & $98.5$ / $94.2$ & $92.7$ / $71.6$ & $98.7$ / $95.2$ & $97.1$ / $86.2$ \\
     & BWGNN & $83.4$ / $28.5$ & $65.1$ / $3.0$ & $99.1$ / $97.1$ & $97.8$ / $88.0$ \\
     & GDN & $95.8$ / $90.3$ & $90.2$ / $79.0$ & $99.1$ / $96.2$ & $65.4$ / $44.3$ \\
     & ARC & $94.8$ / $40.0$ & $94.9$ / $40.0$ & $92.3$ / $40.0$ & $91.0$ / $39.5$ \\
     & GAD-NR & $99.7$ / $100.0$ & $99.4$ / $100.0$ & $60.1$ / $0.0$ & $82.0$ / $12.3$ \\
     & OCGNN & $77.3$ / $34.9$ & $68.9$ / $38.0$ & $55.5$ / $5.1$ & $61.9$ / $0.5$ \\
     & Signed cosine & $83.0$ / $19.5$ & $84.1$ / $23.1$ & $67.2$ / $15.9$ & $70.9$ / $13.3$ \\
     & Context-MLP & $97.1$ / $81.5$ & $96.1$ / $77.4$ & $97.6$ / $90.3$ & $93.4$ / $54.9$ \\
     & AnomalyDAE & -- & -- & -- & -- \\
     & CONAD & -- & -- & -- & -- \\
    \midrule
    \multirow{11}{*}{\rotatebox{90}{\textbf{Reddit}}} & XGBGraph & $59.3$ / $0.0$ & $59.3$ / $0.0$ & $61.8$ / $0.0$ & $64.1$ / $2.5$ \\
     & Graph Transformer & $67.0$ / $0.7$ & $60.7$ / $0.0$ & $66.7$ / $0.2$ & $82.8$ / $28.3$ \\
     & BWGNN & $62.0$ / $7.7$ & $58.0$ / $7.9$ & $57.2$ / $11.0$ & $87.5$ / $42.3$ \\
     & GDN & $67.5$ / $18.8$ & $56.1$ / $9.5$ & $88.3$ / $53.3$ & $67.4$ / $14.0$ \\
     & ARC & $100.0$ / $100.0$ & $100.0$ / $100.0$ & $97.9$ / $92.1$ & $89.7$ / $50.1$ \\
     & GAD-NR & $100.0$ / $100.0$ & $100.0$ / $100.0$ & $91.0$ / $55.5$ & $77.3$ / $11.5$ \\
     & AnomalyDAE & $95.2$ / $88.5$ & $97.0$ / $90.5$ & $81.8$ / $46.5$ & $65.0$ / $15.5$ \\
     & CONAD & $97.7$ / $93.0$ & $88.2$ / $77.5$ & $99.8$ / $100.0$ & $92.0$ / $43.5$ \\
     & OCGNN & $94.5$ / $75.5$ & $96.3$ / $87.0$ & $67.6$ / $0.0$ & $78.3$ / $23.0$ \\
     & Signed cosine & $100.0$ / $100.0$ & $100.0$ / $100.0$ & $90.5$ / $71.5$ & $88.9$ / $71.5$ \\
     & Context-MLP & $99.5$ / $100.0$ & $99.3$ / $100.0$ & $98.1$ / $97.0$ & $95.4$ / $69.0$ \\
    \bottomrule
  \end{tabular}
\end{table}

\section{Robustness Experiments}
\label{app:robust}

\subsection{Contaminated training pool}
\label{app:contam}

We mix fraud into the pool assumed legitimate and refit the sign vector, the model and the threshold on that pool, so the whole pipeline stays label-free. The legitimate nodes are split into training ($80\%$), calibration ($10\%$) and held-out ($10\%$) sets. The natural level adds every non-reserve fraud node once, which is $0.8\%$ of the training pool on XBlock and $1.9\%$ on Reddit, and contaminates the calibration set at the same rate. The $5\%$ and $10\%$ levels oversample fraud to that share of both sets. The clean row is this protocol's own reference, refit on the $80\%$ split, and is not the Table~\ref{tab:benchmark} model. FPR is measured on the held-out legitimate nodes at the threshold calibrated on the contaminated set, and the attacks follow \S\ref{sec:threat}. No sign dimension flips at any level on either dataset, and Isolated Injection, Feature Camouflage and Sybil stay at $100\%$ TPR on XBlock and at least $92\%$ on Reddit.

\begin{table}[t]
  \caption{Fraud in the training and calibration pools (FPR and TPR in \%).}
  \label{tab:contam}
  \centering
  \small
  \setlength{\tabcolsep}{4pt}
  \begin{tabular}{@{}lcccccc@{}}
    \toprule
    & \multicolumn{3}{c}{XBlock} & \multicolumn{3}{c}{Reddit} \\
    \cmidrule(lr){2-4} \cmidrule(lr){5-7}
    Fraud in pool & FPR & AUROC & Rel.\ Camo TPR & FPR & AUROC & Rel.\ Camo TPR \\
    \midrule
    $0\%$ (clean) & $4.6$ & $0.99$ & $80.5$ & $4.0$ & $0.98$ & $65.5$ \\
    Natural & $4.3$ & $0.99$ & $85.6$ & $4.5$ & $0.98$ & $63.5$ \\
    $5\%$ & $3.4$ & $0.99$ & $68.2$ & $4.6$ & $0.98$ & $70.0$ \\
    $10\%$ & $2.7$ & $0.99$ & $39.0$ & $5.0$ & $0.98$ & $66.0$ \\
    \bottomrule
  \end{tabular}
\end{table}

\subsection{Sign vector}
\label{app:sign}

We test a stale sign vector by flipping a random $10\%$, $25\%$ or $50\%$ of its dimensions while the model and the threshold stay fixed at their clean values, and then re-estimate $\mathbf{s}$ from feature statistics on the current unlabeled nodes, exactly as it was first fitted (Table~\ref{tab:sign_flip}). The refresh recovers all $38$ XBlock and all $73$ Reddit dimensions and restores Isolated Injection TPR to $100\%$ and $97.5\%$. We also estimate $\mathbf{s}$ from random samples of unlabeled legitimate nodes, with the model and the threshold again fixed (Table~\ref{tab:sign_small}). Isolated Injection TPR stays at $100\%$ on XBlock and at least $97.5\%$ on Reddit at every sample size. On Reddit, samples of $53$ or $106$ nodes still get some signs wrong in some seeds, so FPR varies widely across seeds, while from $1{,}061$ nodes every seed recovers all signs.

\begin{table}[t]
  \caption{Sign vector with a share of its dimensions flipped, and after re-estimating it without labels (FPR and TPR in \%).}
  \label{tab:sign_flip}
  \centering
  \small
  \setlength{\tabcolsep}{4pt}
  \begin{tabular}{@{}lcccccc@{}}
    \toprule
    & \multicolumn{3}{c}{XBlock} & \multicolumn{3}{c}{Reddit} \\
    \cmidrule(lr){2-4} \cmidrule(lr){5-7}
    Flipped & FPR & Iso.\ TPR & FPR after refresh & FPR & Iso.\ TPR & FPR after refresh \\
    \midrule
    $0\%$ & $5.0$ & $100.0$ & $5.0$ & $5.0$ & $97.5$ & $5.0$ \\
    $10\%$ & $19.1$ & $100.0$ & $5.0$ & $11.6$ & $99.0$ & $5.0$ \\
    $25\%$ & $94.4$ & $100.0$ & $5.0$ & $93.4$ & $98.0$ & $5.0$ \\
    $50\%$ & $99.9$ & $100.0$ & $5.0$ & $97.5$ & $97.5$ & $5.0$ \\
    \bottomrule
  \end{tabular}
\end{table}

\begin{table}[t]
  \caption{Sign vector estimated from $n$ unlabeled legitimate nodes (FPR in \%). $\pm$ is the standard deviation over 5 seeds.}
  \label{tab:sign_small}
  \centering
  \small
  \setlength{\tabcolsep}{4pt}
  \begin{tabular}{@{}rccrcc@{}}
    \toprule
    \multicolumn{3}{c}{XBlock ($38$ dimensions)} & \multicolumn{3}{c}{Reddit ($73$ dimensions)} \\
    \cmidrule(lr){1-3} \cmidrule(lr){4-6}
    $n$ & Wrong dimensions & FPR & $n$ & Wrong dimensions & FPR \\
    \midrule
    $763$ & $1.4 \pm 0.8$ & $6.9 \pm 1.7$ & $53$ & $1.0 \pm 0.9$ & $28.7 \pm 33.8$ \\
    $1{,}526$ & $0.8 \pm 0.8$ & $6.5 \pm 2.8$ & $106$ & $1.0 \pm 0.9$ & $28.7 \pm 33.8$ \\
    $7{,}630$ & $0.6 \pm 0.5$ & $5.1 \pm 0.1$ & $530$ & $0.2 \pm 0.4$ & $6.0 \pm 2.0$ \\
    $15{,}260$ & $0.4 \pm 0.5$ & $5.1 \pm 0.1$ & $1{,}061$ & $0.0 \pm 0.0$ & $5.0 \pm 0.0$ \\
    $152{,}600$ & $0.0 \pm 0.0$ & $5.0 \pm 0.0$ & $10{,}618$ & $0.0 \pm 0.0$ & $5.0 \pm 0.0$ \\
    \bottomrule
  \end{tabular}
\end{table}

\section{Feature-Context Decomposition}
\label{app:decomp}

To identify which signal CB-RBM relies on, we score each test node under four controlled conditions that independently swap features and context between legitimate and adversarial/random sources. Condition~A pairs adversarial features with random context; Condition~B pairs legitimate features with the same random context; Condition~C uses legitimate features with legitimate context (the null condition); Condition~D pairs adversarial features with legitimate context. Results are summarized in Table~\ref{tab:decomp}.

\begin{table}[t]
\centering
\small
\caption{Feature-context decomposition, Combined Score.}
\label{tab:decomp}
\begin{tabular}{lcccc}
\toprule
Condition & Features & Context & XBlock & Reddit \\
\midrule
A & adversarial & random     & $0.942$ & $0.921$ \\
B & legitimate  & random     & $0.942$ & $0.929$ \\
C & legitimate  & legitimate & $0.078$ & $0.095$ \\
D & adversarial & legitimate & $0.111$ & $0.632$ \\
\bottomrule
\end{tabular}
\end{table}

Two regularities appear on both datasets. First, A$\approx$B: random context triggers detection regardless of whether the features are adversarial or legitimate, confirming that CB primarily flags absent context rather than feature deviation. Second, Condition~C is near zero, so legitimate features paired with legitimate context evade detection. The datasets diverge on Condition~D. On XBlock, D$\approx$C: legitimate context alone suppresses the score, and the feature channel contributes nothing measurable. On Reddit, D is substantially above C, so adversarial features with legitimate context are partially flagged. This residual feature sensitivity is consistent with Reddit's richer feature space (73 LIWC and structural features versus XBlock's 38 transaction statistics): the learned weights carry more discriminative feature signal, which surfaces when the context channel is inactive. The main-body interpretation (Section~\ref{sec:discussion}) still holds --- CB-RBM is fundamentally a network-context-presence detector, with feature sensitivity acting only as a secondary channel when the feature space is informative on its own.

% ============================================================
%  Appendix D — Ablation
% ============================================================
\paragraph{Unit-level trace.}\label{app:trace} Table~\ref{tab:trace} follows single nodes through the trained models. A coherent unit has positive feature and gated context pre-activations, so it adds a positive increment to CB, and the last column averages the gated context pre-activation over these units. The legitimate node retains many coherent units and its gated context contribution after $20$ drift steps, so its CB stays high. The injected adversary has random context, the gate shrinks its context contribution, and its CB drops to $28.4$ on XBlock and $33.8$ on Reddit.

\begin{table}[t]
  \caption{Unit-level trace through the trained models.}
  \label{tab:trace}
  \centering
  \small
  \begin{tabular}{@{}llccc@{}}
    \toprule
    Dataset & Node & CB & Coherent units & Mean gated context \\
    \midrule
    \multirow{3}{*}{XBlock} & Legitimate (median CB) & $1046.0$ & $111/256$ & $+7.36$ \\
     & Same node after $20$ drift steps & $952.0$ & $88/256$ & $+6.51$ \\
     & Injected adversary & $28.4$ & $97/256$ & $+0.55$ \\
    \midrule
    \multirow{3}{*}{Reddit} & Legitimate (median CB) & $116.2$ & $57/128$ & $+2.41$ \\
     & Same node after $20$ drift steps & $115.5$ & $52/128$ & $+2.45$ \\
     & Injected adversary & $33.8$ & $34/128$ & $+1.39$ \\
    \bottomrule
  \end{tabular}
\end{table}

\section{Ablation}
\label{app:ablation}

This appendix examines three design choices: the context module itself, the hidden-layer capacity $n_h$, and the neighborhood-pooling operator used to construct the raw context vector $\bc$.

\paragraph{Context module.} Removing the context pathway reduces CB-RBM to Std-RBM, a standard RBM scored by free energy alone. Combining the attack-benchmark TPR (Table~\ref{tab:benchmark}) with the $T{=}20$ drift FPR of Table~\ref{tab:all_detectors} yields a mean Combined Score of $\approx 0.001$ on XBlock and $\approx 0.010$ on Reddit for Std-RBM, compared with $0.94$ and $0.88$ for CB-RBM. The collapse spans two to three orders of magnitude on both datasets and localizes the entire empirical benefit to the context channel.

\paragraph{Hidden-layer capacity.} Figure~\ref{fig:nhidden} shades the min--max envelope of Combined Score across all four attacks as $n_h$ varies from $32$ to $1024$. A tight band at high Combined Score indicates a capacity where every attack is simultaneously well-detected; a wide or low band indicates a regime where at least one attack slips through. The lower boundary of the envelope is essentially Relation Camouflage --- the attack with the largest $\rho_1^{\mathrm{eff}}$ and the smallest measured separation gap (Figure~\ref{fig:cb_gap}). The envelope rises with $n_h$ until the effective coherent-unit count $\beta_0 n_h$ (Assumption~\ref{A2}, measured on training data) saturates, and beyond saturation further capacity only over-fits the legitimate distribution. The two datasets diverge at this saturation point: XBlock's envelope keeps rising through $n_h{=}256$ as the larger pool (${\sim}152$K legitimate nodes) absorbs additional capacity, while Reddit's envelope peaks at $n_h{=}128$ and degrades sharply for $n_h \ge 256$ as Relation Camouflage Combined Score collapses from $0.81$ to $0.33$. The main body accordingly uses $n_h{=}256$ for XBlock and $n_h{=}128$ for Reddit.

\begin{figure}[t]
\centering
\includegraphics[width=0.6\linewidth]{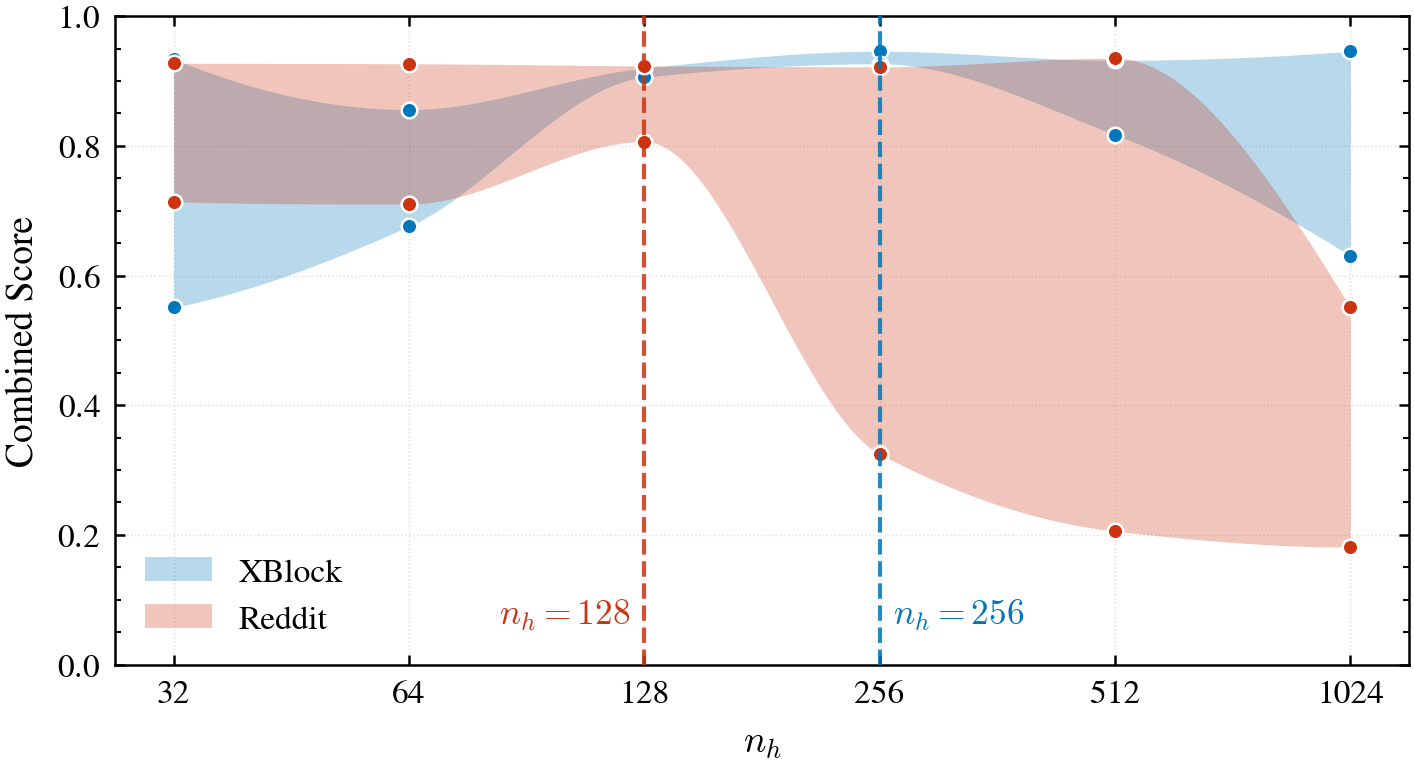}
\caption{Capacity ablation. Shaded band shows min--max Combined Score across all four attacks at each $n_h$. Dashed vertical lines mark the optimal values: $n_h{=}256$ for XBlock, $n_h{=}128$ for Reddit.}
\label{fig:nhidden}
\end{figure}

\begin{table}[t]
\centering
\small
\caption{Pooling operator on XBlock, Combined Score.}
\label{tab:pooling}
\begin{tabular}{lcccc}
\toprule
Attack & Mean & Max & Weighted-deg. & Attention \\
\midrule
Isolated Injection & $0.950$ & $0.914$ & $0.930$ & $0.950$ \\
PGD                & $0.326$ & $0.145$ & $0.224$ & $0.263$ \\
\bottomrule
\end{tabular}
\end{table}

\paragraph{Neighborhood pooling.} The raw context vector $\bc = \mathrm{Pool}\{\bx_u : u \in \mathcal{N}(v)\}$ is constructed by pooling neighbor features. Table~\ref{tab:pooling} compares four operators on XBlock under two attacks chosen to bracket the attack-difficulty spectrum: Isolated Injection (random, unstructured context --- the easiest case for CB-RBM) and PGD (white-box gradient attack perturbing $\bv$ and $\bc$ jointly --- the hardest). We report only this pair because the remaining static attacks in Table~\ref{tab:benchmark} (Feature Camouflage, Sybil, Relation Camouflage) also rely on unstructured, Sybil-generated or hub-borrowed context and track Isolated Injection closely, adding no new information about the pooling operator; the contrast with PGD is what stress-tests the choice. We run this ablation only on XBlock because XBlock uses raw transaction statistics as features, so the pooling operator is the sole mechanism by which neighborhood information enters the model. Reddit features already include engineered graph-derived structural signals (9 of its 73 dimensions), so some neighborhood information is available to the model independently of how $\bc$ is constructed, making the pooling operator less decisive on that dataset. Mean and attention pooling tie at Combined $= 0.95$ on Isolated Injection; max and weighted-degree trail slightly. Under PGD, mean and attention retain a meaningful score while max collapses to $0.15$. All four operators preserve per-dimension alignment (they return a vector in the original feature space rather than a learned embedding), which is what the sign vector $\mathrm{sign}(\mathrm{Pearson}(\bx_d, \bc_d))$ requires to identify the legitimate correlation pattern.

% ============================================================
%  Appendix E — PGD Budget Sweep
% ============================================================
\section{PGD Budget Sweep}
\label{app:pgd}

This appendix evaluates CB-RBM under a white-box threat model: the adversary has gradient access to the detector's weights $\bw$, $\bU$, and the gate, and jointly optimizes $\bv$ and $\bc$ under an $\ell_\infty$-budget. Starting from real adversarial nodes at their natural graph positions, we sweep the perturbation budget $\epsilon \in \{0.01, 0.05, 0.1, 0.2, 0.3, 0.5\}$ with 50 projected-gradient steps. Figure~\ref{fig:pgd} plots detection AUROC on both datasets.

\begin{figure}[t]
\centering
\includegraphics[width=0.6\linewidth]{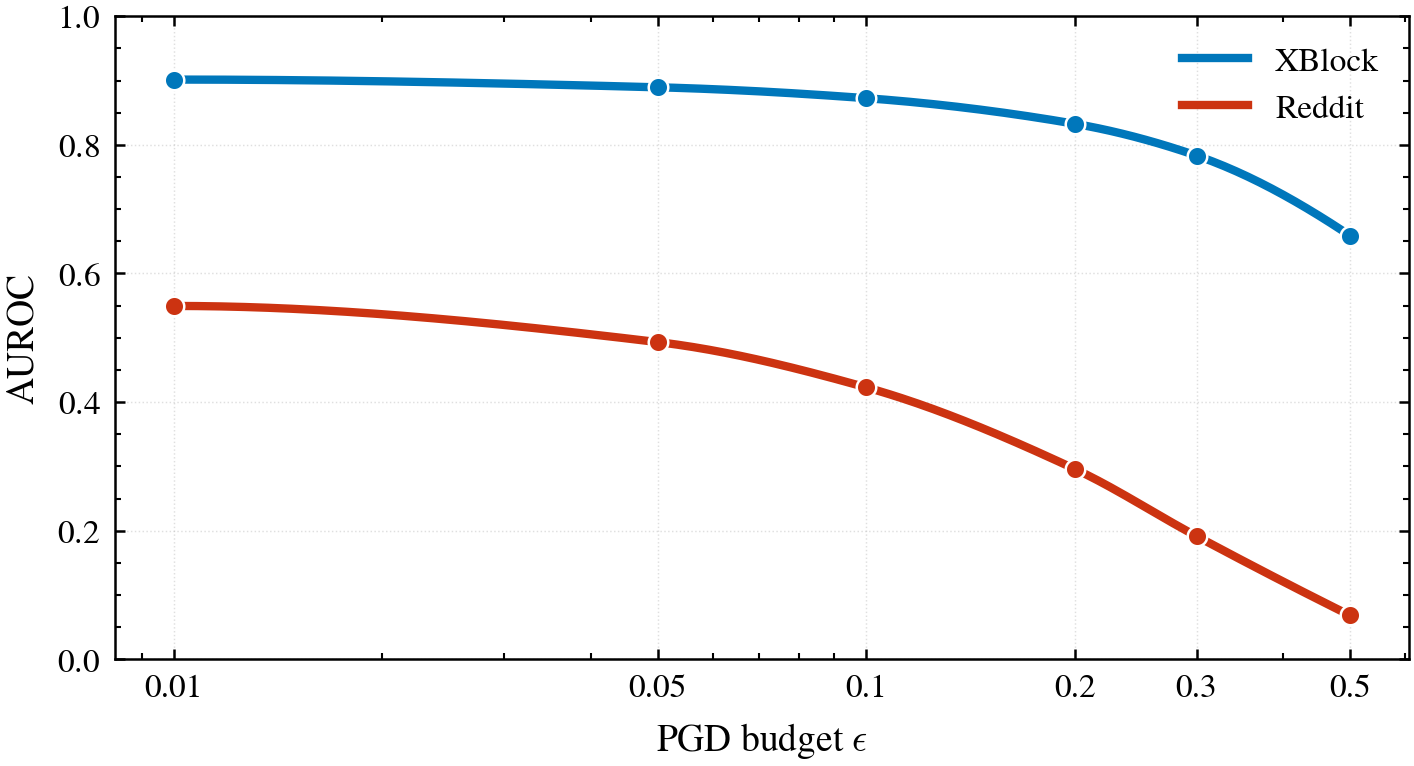}
\caption{White-box PGD budget sweep. Detection AUROC vs.\ perturbation budget $\epsilon$.}
\label{fig:pgd}
\end{figure}

Detection degrades monotonically with $\epsilon$ on both datasets. On XBlock, AUROC falls from $0.90$ at $\epsilon{=}0.01$ to $0.66$ at $\epsilon{=}0.5$, staying above random-chance across the full range. On Reddit, AUROC drops to $0.42$ already at $\epsilon{=}0.1$ and continues to $0.07$ at $\epsilon{=}0.5$; the starting AUROC is lower here because in-graph adversaries reach CB-RBM through the feature channel rather than the context-absence channel. Two factors explain the faster Reddit collapse. First, Reddit's $73$-dimensional feature space gives the PGD optimizer more independent directions in which to move $\bv$ under a fixed $\ell_\infty$ budget than XBlock's $38$-dimensional space. Second, Reddit's higher base correlation $\rho_0 = 0.68$ means the learned GLU gate is tuned to strong context--feature agreement, so small coordinated perturbations to $\bv$ and $\bc$ move the CB score further per unit of perturbation than on XBlock. These results are consistent with the Limitations (\S\ref{sec:discussion}): CB-RBM is designed as a low-cost first-line detector for the common operational case of an adversary without white-box gradient access; combining it with an adversarially trained or certified-robustness second stage would be the natural defense under the PGD threat model.

% ============================================================
%  Appendix F — Drift Stability Extensions
% ============================================================
\section{Drift Stability Extensions}
\label{app:drift_variants}

The main body reports Combined Score under a random-walk drift simulation, following standard practice in the concept-drift literature~\cite{gama2014survey,lu2018learning}. This appendix extends that analysis with alternative drift regimes that vary tails and correlation structure, threshold-level and correlation-weakening drift, real temporal evolution on the XBlock 2017--2018 transaction data, and adversarial-trajectory detection latency (\S\ref{app:drift_variants_sub}--\S\ref{app:latency}).

\subsection{Alternative drift regimes}
\label{app:drift_variants_sub}

At each step $t$, a drift increment $\boldsymbol{\eta}_t$ is added to every node's features and the context vector is recomputed on the updated graph, so drift propagates jointly through node attributes and their aggregated neighborhood --- matching how real distributional shift unfolds on evolving graphs such as the wallet-turnover and user-behavior patterns of the main-body domains. We compare five increment distributions chosen to cover the natural-drift regimes a detector is likely to encounter in deployment: \emph{Gaussian} --- isotropic bounded-variance drift in the same family as the main-body simulation; \emph{correlated $\rho{=}0.3$} and \emph{correlated $\rho{=}0.6$} --- coordinated population shifts where subsets of features co-move, modeled with equicorrelated Gaussian increments; and \emph{heavy-tail $\mathrm{df}{=}10$} / \emph{heavy-tail $\mathrm{df}{=}3$} --- bursty, outlier-driven drift modeled with Student-$t$ increments, smaller $\mathrm{df}$ yielding heavier tails. Table~\ref{tab:drift_variants} reports the drift-induced false-positive rate at $T{=}20$ for CB-RBM and Std-RBM on both datasets. The per-step variance budget here is calibrated for cross-regime comparison and differs from the headline drift simulation; the qualitative pattern --- Std-RBM collapse, CB-RBM stability --- is unchanged.

\begin{table}[t]
\centering
\small
\caption{Drift-induced FPR at $T{=}20$ under five drift regimes (\%).}
\label{tab:drift_variants}
\begin{tabular}{lcccc}
\toprule
& \multicolumn{2}{c}{XBlock} & \multicolumn{2}{c}{Reddit} \\
\cmidrule(lr){2-3} \cmidrule(lr){4-5}
Regime & CB-RBM & Std-RBM & CB-RBM & Std-RBM \\
\midrule
Gaussian                       & $8.4$  & $96.4$ & $9.0$  & $73.8$ \\
Correlated $\rho{=}0.3$        & $12.8$ & $95.8$ & $2.8$  & $74.0$ \\
Correlated $\rho{=}0.6$        & $15.4$ & $97.2$ & $0.2$  & $69.0$ \\
Heavy-tail $\mathrm{df}{=}10$  & $10.8$ & $95.2$ & $9.0$  & $71.2$ \\
Heavy-tail $\mathrm{df}{=}3$   & $7.6$  & $73.8$ & $6.6$  & $55.2$ \\
\bottomrule
\end{tabular}
\end{table}

CB-RBM stays at $\leq 16\%$ FPR across every regime and dataset, while Std-RBM exceeds $55\%$ in every regime and $>95\%$ in four of the five XBlock regimes. Two observations are consistent with Theorem~\ref{thm:drift}. First, the Wasserstein--Lipschitz bound is \emph{distribution-agnostic}: it depends only on $W_1(\mathcal{P}_t, \mathcal{P}_{t-1}) \le \delta_{\mathrm{drift}}$, not on the tails or correlation structure of the drift increment, consistent with CB-RBM behaving similarly across all five variants (the slightly lower df=3 row reflects Student-t's variance-matched mass concentrating near zero, so most nodes drift less than under Gaussian rather than more). Second, the two datasets respond oppositely to correlated drift by the same mechanism. Drift aligned with the learned $(\bv,\bc)$ correlation lies in a near-flat valley of the CB surface --- the softplus differences in Eq.~\eqref{eq:cb} nearly cancel --- so FPR drops; drift off this axis avoids the valley and FPR rises. Reddit's strong $\rho_0{=}0.68$ aligns closely with the equicorrelated $\rho{=}0.6$ increment, driving FPR toward zero ($9.0\!\to\!0.2\%$); XBlock's weaker $\rho_0{=}0.30$ lies further off-axis, raising FPR modestly ($8.4\!\to\!15.4\%$, still far below Std-RBM at every regime). This is consistent with the ``conservative bound'' discussion of Section~\ref{sec:discussion}: real drift's CB-impact depends on alignment with the learned correlation, while the worst-case Theorem~\ref{thm:drift} bound holds in either direction.

\paragraph{RMS-based numerical comparison.} Table~\ref{tab:lipschitz} grounds the slack-factor numbers reported in \S\ref{sec:discussion}. $L$ is the Lipschitz expression of Lemma~\ref{lem:lip} evaluated at the RMS value of $M_c$ (Appendix~\ref{app:datasets}); $\delta_{\mathrm{drift}}$ is the per-step Wasserstein-1 budget; observed $|\Delta\CB|$ is the realized mean-CB shift on legitimate nodes after $T$ Gaussian-drift steps. The slack factor (bound$/$observed) grows with $T$ on both datasets and is consistent with the conservative-bound argument: the worst-case $L\cdot T \cdot\delta_{\mathrm{drift}}$ allows every coordinate to move adversarially, while real drift moves $(\bv,\bc)$ jointly along the legitimate-correlation axis where the softplus differences nearly cancel.

\begin{table}[t]
\centering
\small
\caption{Wasserstein--Lipschitz bound vs.\ observed CB shift on legitimate nodes (Gaussian drift). $L$ is the Lipschitz constant of $\CB$ evaluated at the RMS value of $M_c$ (Appendix~\ref{app:datasets}); $\delta_{\mathrm{drift}}$ is the per-step $W_1$ budget; the RMS-based budget is $L\cdot T\cdot\delta_{\mathrm{drift}}$; slack is budget$/$observed.}
\label{tab:lipschitz}
\begin{tabular}{lcccccl}
\toprule
Dataset & $L$ & $\delta_{\mathrm{drift}}$ & $T$ & RMS-based budget & Observed $|\Delta\CB|$ & Slack \\
\midrule
\multirow{4}{*}{XBlock} & \multirow{4}{*}{$1189.5$} & \multirow{4}{*}{$1.23$}
                       & $5$  & $7{,}288$  & $23.3$ & $312\times$ \\
                       &     &     & $10$ & $14{,}575$ & $35.7$ & $408\times$ \\
                       &     &     & $15$ & $21{,}863$ & $46.1$ & $474\times$ \\
                       &     &     & $20$ & $29{,}151$ & $55.4$ & $\mathbf{526\times}$ \\
\midrule
\multirow{4}{*}{Reddit} & \multirow{4}{*}{$154.0$} & \multirow{4}{*}{$1.70$}
                       & $5$  & $1{,}313$  & $0.68$ & $1{,}921\times$ \\
                       &     &     & $10$ & $2{,}625$  & $1.03$ & $2{,}549\times$ \\
                       &     &     & $15$ & $3{,}938$  & $1.17$ & $3{,}373\times$ \\
                       &     &     & $20$ & $5{,}250$  & $1.16$ & $\mathbf{4{,}540\times}$ \\
\bottomrule
\end{tabular}
\end{table}
\subsection{Threshold-level drift}
\label{app:thrdrift}

Evaluating $L\,T\,\delta_{\mathrm{drift}}$ with the RMS-based $L$ gives $29{,}151$ on XBlock and $5{,}250$ on Reddit at $T{=}20$ (Table~\ref{tab:lipschitz}). The threshold is instead set by the $95$th percentile of the legitimate scores, which we track in a separate three-seed run with the model and the threshold frozen at $t{=}0$ (Table~\ref{tab:thrdrift}). At $T{=}20$ this percentile shifts by $42.7$ on XBlock and $3.4$ on Reddit, about a thousandth of that budget, and mean FPR at the frozen threshold rises from $5\%$ to $6.1\%$ and $7.1\%$. Using the same RMS-based value of $L$, the horizon expression of Corollary~\ref{cor:horizon} evaluates to $0.32$ steps on XBlock and $0.12$ on Reddit, while mean FPR at the frozen $95$th-percentile threshold stays within $2.1$ points of the target at every reported step through $T{=}20$.

\begin{table}[t]
  \caption{Shift of the $95$th-percentile threshold and FPR (\%) at the threshold frozen at $t{=}0$. $\pm$ is the standard deviation over 3 drift seeds.}
  \label{tab:thrdrift}
  \centering
  \small
  \begin{tabular}{@{}rcccc@{}}
    \toprule
    & \multicolumn{2}{c}{XBlock} & \multicolumn{2}{c}{Reddit} \\
    \cmidrule(lr){2-3} \cmidrule(lr){4-5}
    $T$ & Threshold shift & FPR & Threshold shift & FPR \\
    \midrule
    $1$ & $5.6 \pm 1.9$ & $5.1 \pm 0.1$ & $0.18 \pm 0.01$ & $5.1 \pm 0.0$ \\
    $5$ & $14.1 \pm 4.2$ & $5.4 \pm 0.1$ & $0.84 \pm 0.23$ & $5.5 \pm 0.1$ \\
    $10$ & $27.3 \pm 5.8$ & $5.6 \pm 0.1$ & $1.67 \pm 0.31$ & $6.0 \pm 0.1$ \\
    $15$ & $34.2 \pm 6.6$ & $5.8 \pm 0.1$ & $2.55 \pm 0.27$ & $6.4 \pm 0.2$ \\
    $20$ & $42.7 \pm 5.9$ & $6.1 \pm 0.1$ & $3.41 \pm 0.27$ & $7.1 \pm 0.3$ \\
    \bottomrule
  \end{tabular}
\end{table}

\subsection{Correlation-weakening drift}
\label{app:harddrift}

The drift of Figure~\ref{fig:drift_sim} moves features and context together. We weaken that link in three ways at $T{=}20$, with the model and the threshold frozen at $t{=}0$. Context lag recomputes each node's context from features $L$ steps behind its own. Private drift adds to each node an independent drift that its neighbors' context does not include, scaled by $p$ relative to the shared drift. Reversal negates each node's drift relative to its neighborhood on a fraction of the dimensions, so features and context anti-align there (Table~\ref{tab:harddrift}). Recalibrating the threshold at the $95$th percentile of the drifted pool's scores, the label-free refresh of Algorithm~\ref{alg:cbrbm}, returns FPR to $5\%$ by construction, and under reversal it keeps Isolated Injection TPR at $100\%$ on XBlock and at least $95.5\%$ on Reddit. Table~\ref{tab:remedy} applies the same recalibration to the eight methods of Table~\ref{tab:benchmark} under the co-evolving drift of Figure~\ref{fig:drift_sim} and scores the four attacks against the recalibrated threshold.

\begin{table}[t]
  \caption{Correlation-weakening drift at $T{=}20$, measured $\rho$ and legitimate FPR (\%) at the frozen threshold. Each block has its own co-evolving reference run.}
  \label{tab:harddrift}
  \centering
  \small
  \begin{tabular}{@{}lcccc@{}}
    \toprule
    & \multicolumn{2}{c}{XBlock} & \multicolumn{2}{c}{Reddit} \\
    \cmidrule(lr){2-3} \cmidrule(lr){4-5}
    Regime & $\rho$ & FPR & $\rho$ & FPR \\
    \midrule
    Co-evolving (reference) & $0.141$ & $5.4$ & $0.615$ & $7.5$ \\
    Context lag, $L{=}5$ & $0.141$ & $5.3$ & $0.588$ & $8.2$ \\
    Context lag, $L{=}10$ & $0.142$ & $5.3$ & $0.567$ & $9.4$ \\
    Context lag, $L{=}20$ & $0.144$ & $5.6$ & $0.509$ & $10.6$ \\
    Private drift, $p{=}0.3$ & $0.138$ & $5.4$ & $0.602$ & $8.0$ \\
    Private drift, $p{=}0.6$ & $0.125$ & $5.7$ & $0.568$ & $9.4$ \\
    Private drift, $p{=}1.0$ & $0.113$ & $6.2$ & $0.506$ & $13.2$ \\
    \midrule
    Co-evolving (reference) & $0.146$ & $6.1$ & $0.613$ & $7.8$ \\
    Reversed, $10\%$ of dims & $0.145$ & $6.1$ & $0.581$ & $8.1$ \\
    Reversed, $25\%$ of dims & $0.144$ & $6.2$ & $0.534$ & $8.5$ \\
    Reversed, $50\%$ of dims & $0.141$ & $6.1$ & $0.464$ & $10.1$ \\
    \bottomrule
  \end{tabular}
\end{table}

\begin{table}[t]
  \caption{Threshold recalibration for the eight methods of Table~\ref{tab:benchmark} under the co-evolving drift at $T{=}20$, Combined Score before and after, from this experiment's run. Best refreshed score in bold.}
  \label{tab:remedy}
  \centering
  \small
  \begin{tabular}{@{}lcccc@{}}
    \toprule
    & \multicolumn{2}{c}{XBlock} & \multicolumn{2}{c}{Reddit} \\
    \cmidrule(lr){2-3} \cmidrule(lr){4-5}
    Method & Frozen & Refreshed & Frozen & Refreshed \\
    \midrule
    CB-RBM & $0.92$ & $\mathbf{0.90}$ & $0.88$ & $\mathbf{0.90}$ \\
    Std-RBM & $0.003$ & $0.004$ & $0.016$ & $0.014$ \\
    GraphSAGE & $0.82$ & $0.82$ & $0.06$ & $0.06$ \\
    GAT & $0.60$ & $0.61$ & $0.11$ & $0.11$ \\
    GCN & $0.74$ & $0.72$ & $0.15$ & $0.15$ \\
    DOMINANT & $0.01$ & $0.02$ & $0.53$ & $0.64$ \\
    CoLA & $0.19$ & $0.07$ & $0.16$ & $0.13$ \\
    GAE & $0.32$ & $0.01$ & $0.05$ & $0.48$ \\
    \bottomrule
  \end{tabular}
\end{table}

\paragraph{Topology change.} We also change the edges at $T{=}20$, on top of the feature drift, with the model and the threshold frozen at $t{=}0$. Growth keeps every edge and adds $10\%$, $30\%$ or $50\%$ new triadic edges, the pattern of cumulative interaction graphs such as ours, where on-chain edges are permanent and Reddit interactions accumulate. It raises the legitimate FPR to $12.6$, $24.7$ and $37.3\%$ on XBlock and $12.9$, $24.6$ and $34.6\%$ on Reddit. The label-free threshold refresh restores $5.0\%$ on both, with Isolated Injection TPR at $100\%$ on XBlock and $95.5$, $93.0$ and $91.0\%$ on Reddit. Random rewiring of $10\%$ of edges, evaluated as a topology stress test, raises the FPR to $20.4\%$ and $31.7\%$, and the refresh restores $5.0\%$ with Isolated Injection TPR at $76.4\%$ and $89.5\%$.

\subsection{Real temporal validation (XBlock)}
\label{app:temporal_real}

To check that synthetic drift is not the only regime where CB-RBM holds up, we train on XBlock data through 2017 (a dataset with natively timestamped edges) and evaluate on five cumulative windows, the 2017 training window and 2018-Q1 through Q4. Between windows the Ethereum ecosystem evolves organically --- new wallets appear, transaction patterns shift, and each window adds roughly $200$K--$500$K new edges over the preceding one --- producing real natural drift rather than a simulated increment. Figure~\ref{fig:temporal} plots FPR and TPR at the calibrated $5\%$-FPR operating point for all eight methods across the five windows. Node features in each window use transactions up to its cutoff only, and the scaler, sign vector and threshold are fitted on the training window and then frozen. Rebuilding each window's context from pre-cutoff edges only gives FPR $5.0$, $3.8$, $3.4$, $3.2$ and $3.1\%$ across the five windows, against $5.0$, $3.5$, $3.2$, $3.2$ and $3.1\%$ for the construction used in Figure~\ref{fig:temporal}.

\begin{figure}[t]
\centering
\includegraphics[width=0.48\linewidth]{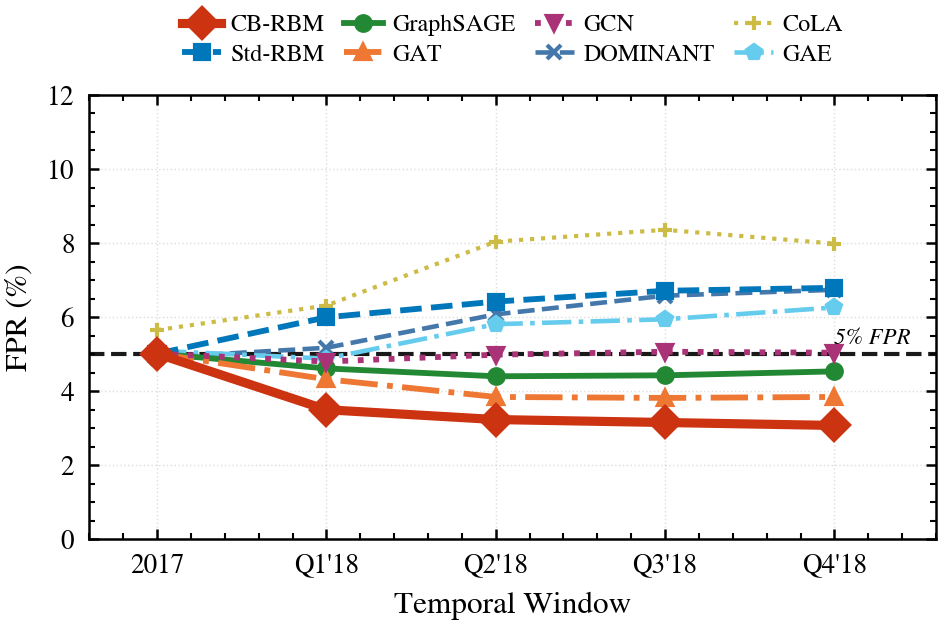}\hfill
\includegraphics[width=0.48\linewidth]{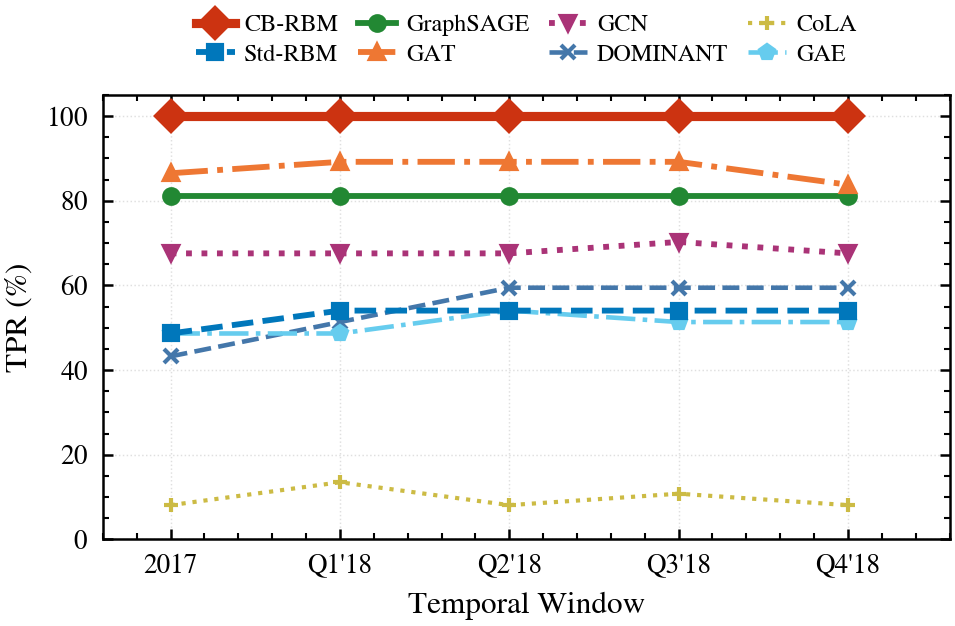}
\caption{XBlock real-temporal validation. (Left) FPR and (right) TPR at the calibrated $5\%$-FPR operating point across five windows (trained and calibrated on the 2017 window, evaluated on 2017--2018 windows).}
\label{fig:temporal}
\end{figure}

CB-RBM holds at TPR$=100\%$ across all five windows; the runner-up (GAT) holds around $84\%$--$89\%$. The feature distribution drifts monotonically --- $\|\boldsymbol{\mu}_t - \boldsymbol{\mu}_0\|$ grows from $0$ to $\approx 73$ over the windows --- without translating into an FPR increase or TPR drop for CB-RBM. The supervised GNN baselines are also stable at lower absolute levels ($\approx 70\%$--$89\%$), while Std-RBM and GAE hold flat in the $45$--$55\%$ band, DOMINANT trends slightly higher (up to $\approx 60\%$), and CoLA stays below $16\%$. Real temporal evolution therefore reproduces the pattern of the synthetic drift simulation. CB-RBM and the supervised GNNs remain near the calibration target, while the reconstruction and density baselines show increased false alarms.

\subsection{Detection latency}
\label{app:latency}

To complement the aggregate FPR view, Table~\ref{tab:latency} reports the trajectory step $T$ at which each detector first fires on a five-scenario suite: one benign control (natural drift, which should \emph{not} trigger an alarm) and the four adversarial scenarios of Table~\ref{tab:benchmark}. The setup evolves $200$ target nodes over $T{=}0,\ldots,50$ under each scenario; the \emph{first-fire} step is the earliest $T$ at which at least $90\%$ of those nodes exceed the detection threshold. ``Never'' means the $90\%$ condition is not triggered within $T{=}50$ steps.

\begin{table}[t]
\centering
\small
\caption{First-fire step $T$ (first $T$ with $\geq\!90\%$ of $200$ target nodes flagged). Lower is better for adversarial rows; ``never'' is correct for natural drift.}
\label{tab:latency}
\begin{tabular}{lcccc}
\toprule
& \multicolumn{2}{c}{XBlock} & \multicolumn{2}{c}{Reddit} \\
\cmidrule(lr){2-3} \cmidrule(lr){4-5}
Scenario & CB-RBM & Std-RBM & CB-RBM & Std-RBM \\
\midrule
Natural drift (benign)   & never & $15$   & never & $23$   \\
\midrule
Sybil                    & $0$   & never  & $0$   & never  \\
Relation Camouflage      & $0$   & never  & $0$   & never  \\
Isolated Injection       & $18$  & never  & $24$  & never  \\
Feature Camouflage       & $24$  & never  & $26$  & never  \\
\bottomrule
\end{tabular}
\end{table}

    The pattern is the dual of the drift-stability view. Std-RBM false-alarms on natural drift at $T{=}15$ (XBlock) / $T{=}23$ (Reddit) and is blind to every adversarial trajectory: the free-energy channel alone cannot distinguish the attacks from legitimate drift once context is excluded. CB-RBM shows the opposite pattern, staying silent under natural drift and firing on every adversarial trajectory. Sybil and Relation Camouflage are fixed-state attacks (the clique injection or hub rewire is present from $T{=}0$), and CB-RBM fires immediately. Isolated Injection and Feature Camouflage are evolving trajectories: context interpolates from legitimate at $T{=}0$ toward random by $T{=}50$, so CB-RBM (correctly) does not fire while the context still resembles legitimate input, and the $T{=}18$--$26$ first-fire step marks the crossing where the context has decayed enough to become distinguishable. Two-channel latency therefore separates cleanly: the feature-only model fires on the wrong trajectory, while the context-aware model fires on the right ones.

% ============================================================
%  Appendix G — Context Interpolation Sweep
% ============================================================
\paragraph{Growing Sybil.}\label{app:sybil_grow} The Sybil attack of Table~\ref{tab:benchmark} uses a clique of fixed size from the start. Here the clique size grows from $2$ to $50$ over $50$ steps. At each step the clique is rebuilt at its current size and the context of the $195$ XBlock and $200$ Reddit attack nodes connected to it is recomputed, while each detector keeps the threshold calibrated at $t{=}0$ (Table~\ref{tab:sybil_grow}). With two members CB-RBM already flags $92.3\%$ of the attack nodes on XBlock and $61.0\%$ on Reddit, crosses $90\%$ on Reddit by step $4$ (between the rows of the table) and flags $100\%$ on both datasets by step $10$. Std-RBM stays at $51.3\%$ and $9.5\%$ throughout.

\begin{table}[t]
  \caption{Fraction of attack nodes flagged (\%) as the Sybil clique grows from $2$ to $50$ members.}
  \label{tab:sybil_grow}
  \centering
  \small
  \begin{tabular}{@{}rrcccc@{}}
    \toprule
    & & \multicolumn{2}{c}{XBlock} & \multicolumn{2}{c}{Reddit} \\
    \cmidrule(lr){3-4} \cmidrule(lr){5-6}
    Step & Clique size & CB-RBM & Std-RBM & CB-RBM & Std-RBM \\
    \midrule
    $0$ & $2$ & $92.3$ & $51.3$ & $61.0$ & $9.5$ \\
    $10$ & $12$ & $100.0$ & $51.3$ & $100.0$ & $9.5$ \\
    $20$ & $21$ & $100.0$ & $51.3$ & $100.0$ & $9.5$ \\
    $30$ & $31$ & $100.0$ & $51.3$ & $100.0$ & $9.5$ \\
    $40$ & $40$ & $100.0$ & $51.3$ & $100.0$ & $9.5$ \\
    $50$ & $50$ & $100.0$ & $51.3$ & $100.0$ & $9.5$ \\
    \bottomrule
  \end{tabular}
\end{table}

\section{Context Interpolation Sweep}
\label{app:taxonomy}

We complement Table~\ref{tab:benchmark}'s discrete attacks with a \emph{context interpolation sweep}: a continuous-intensity extension of the camouflage concept introduced in the fraud-detection literature~\cite{hooi2016fraudar}. An attacker parameter $\alpha \in [0,1]$ interpolates between the more \emph{revealing} and the more \emph{camouflaging} context for the dataset:
\begin{equation}
c_\alpha \;=\; (1-\alpha)\,c_{\text{revealed}} + \alpha\,c_{\text{camo}}, \qquad \alpha \in [0,1],
\end{equation}
where $c_{\text{camo}}$ is whichever of $\{c_{\text{atk}},\,\bar{c}_{\text{legit}}\}$ has $\rho_1^{\mathrm{eff}}$ closer to $\rho_0$ on that dataset (the harder camouflage), and $c_{\text{revealed}}$ is the other. On XBlock $\rho_1^{\mathrm{natural}}=0.22<\rho_0$, so $c_{\text{camo}}=\bar{c}_{\text{legit}}$ and $c_{\text{revealed}}=c_{\text{atk}}$; on Reddit $\rho_1^{\mathrm{natural}}=0.71>\rho_0$, so the roles swap --- $c_{\text{camo}}=c_{\text{atk}}$ and $c_{\text{revealed}}=\bar{c}_{\text{legit}}$. The sweep varies the attacker's effective correlation $\rho_1^{\mathrm{eff}}(\alpha)$ continuously, and with it the measured gap $|\Delta\rho^{\mathrm{eff}}(\alpha)| = |\rho_0 - \rho_1^{\mathrm{eff}}(\alpha)|$, letting us read off the boundary at which the CB-RBM detection margin closes.

\begin{figure}[t]
\centering
\begin{subfigure}[t]{0.44\linewidth}
\centering
\includegraphics[width=\linewidth]{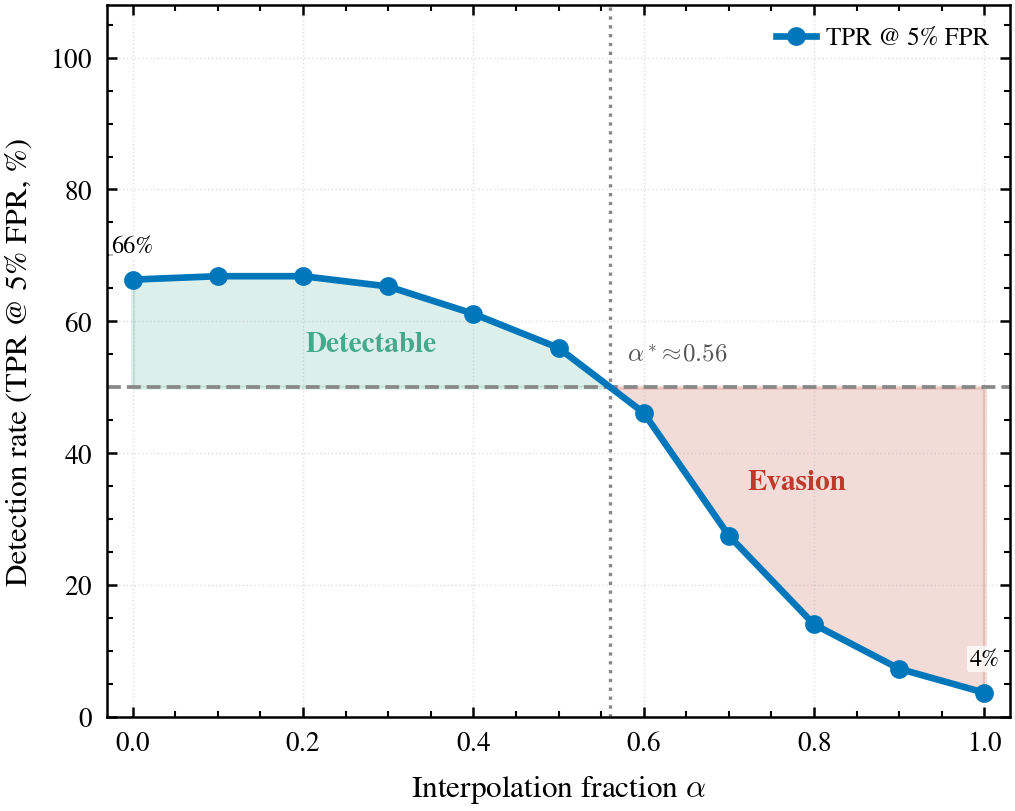}
\caption{XBlock: detection degrades as $\alpha \to 1$.}
\end{subfigure}\hfill
\begin{subfigure}[t]{0.44\linewidth}
\centering
\includegraphics[width=\linewidth]{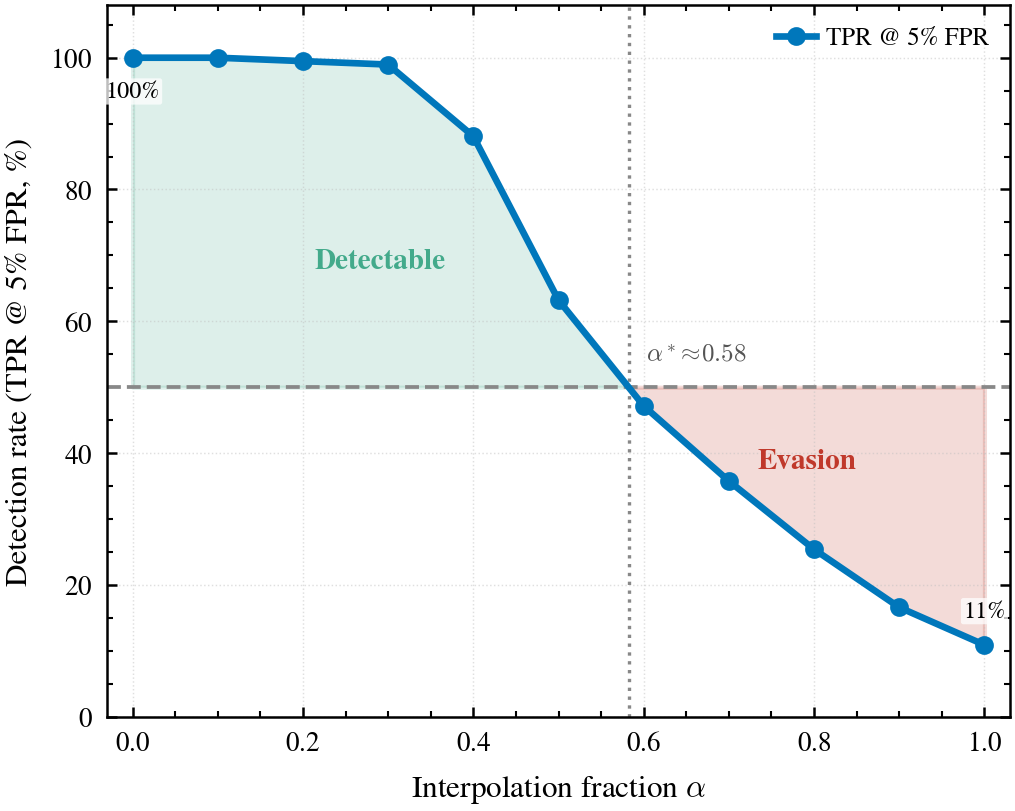}
\caption{Reddit: detection degrades as $\alpha \to 1$.}
\end{subfigure}
\caption{Context interpolation sweep, detection view. TPR at the calibrated $5\%$-FPR operating point vs $\alpha$; the $50\%$ crossing marks $\alpha^*$.}
\label{fig:alpha-det}
\end{figure}

\begin{figure}[t]
\centering
\begin{subfigure}[t]{0.45\linewidth}
\centering
\includegraphics[width=\linewidth]{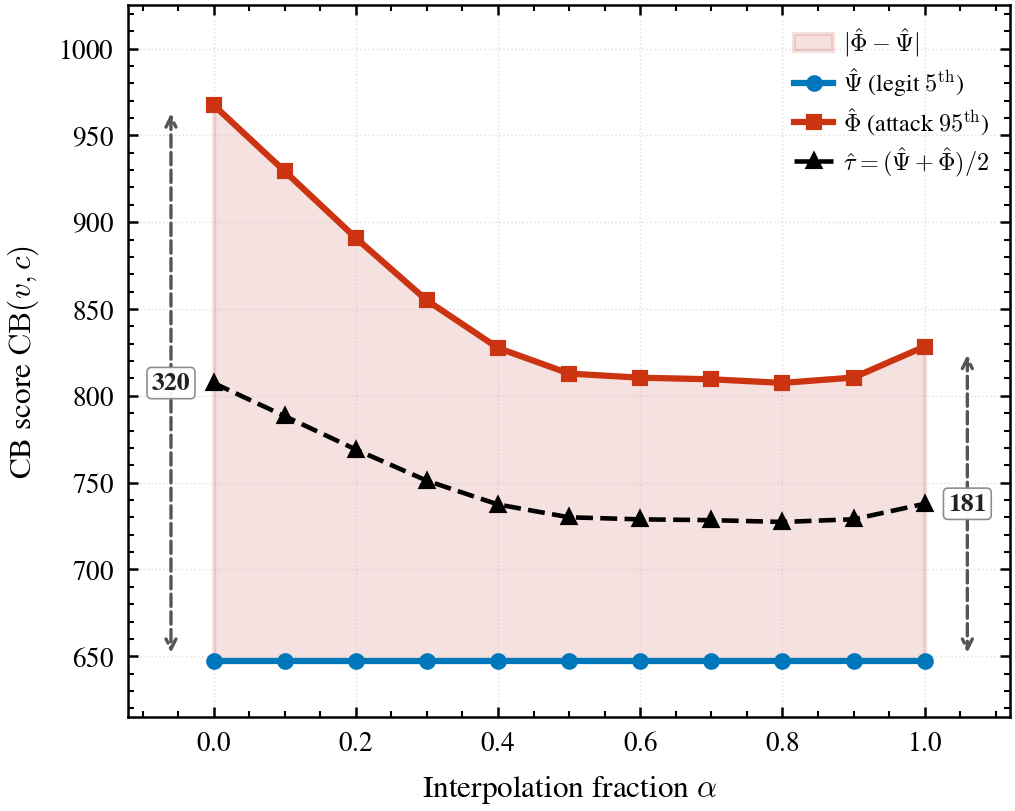}
\caption{XBlock: $\hat\Phi \to \hat\Psi$ as $\alpha \to 1$.}
\end{subfigure}\hfill
\begin{subfigure}[t]{0.45\linewidth}
\centering
\includegraphics[width=\linewidth]{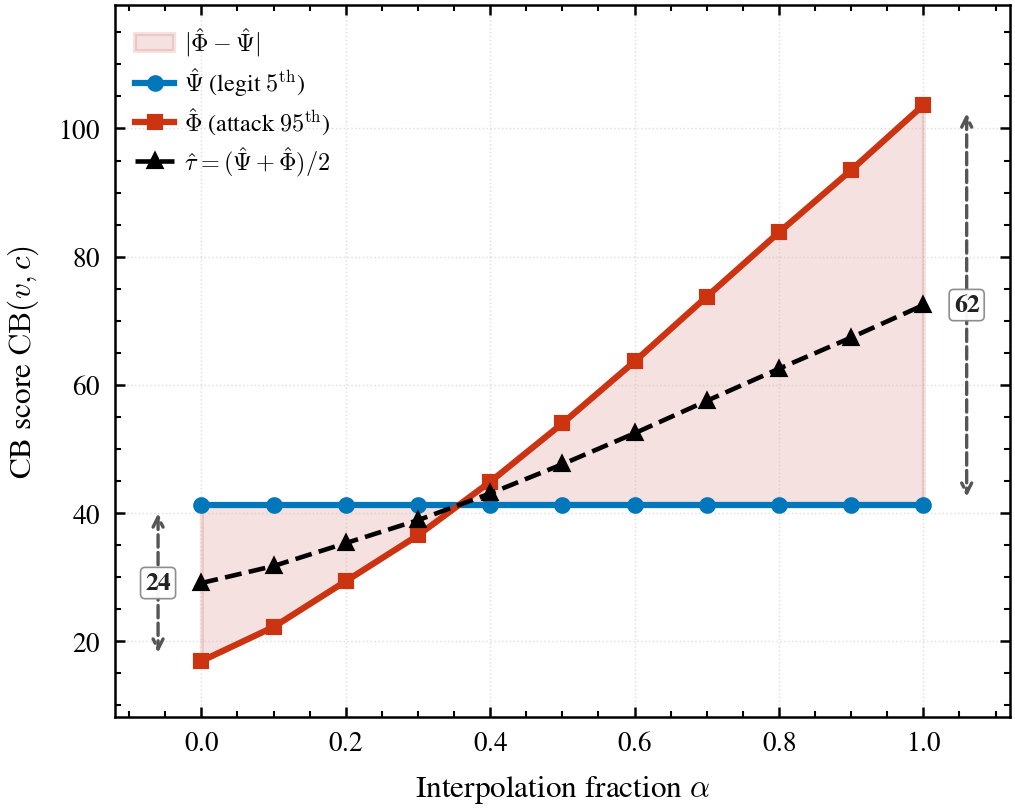}
\caption{Reddit: $\hat\Phi \to \hat\Psi$ as $\alpha \to 1$.}
\end{subfigure}
\caption{Context interpolation sweep, separation view. Tail-band quantities $\hat\Psi$ (legit $5^{\rm th}$-pctile), $\hat\Phi$ (attack $95^{\rm th}$-pctile), and empirical tail-band midpoint $\hat\tau=(\hat\Psi+\hat\Phi)/2$; shaded band is $|\hat\Phi-\hat\Psi|$, with magnitudes at $\alpha{=}0,1$ annotated.}
\label{fig:alpha-gap}
\end{figure}

In Figures~\ref{fig:alpha-det} and~\ref{fig:alpha-gap}, $\alpha{=}1$ denotes the hardest camouflage for that dataset (legit-mean context on XBlock, natural attack context on Reddit). This works because Relation Camouflage is a mirror operation in which the attacker picks whichever context distribution has $\rho_1^{\mathrm{eff}}$ closest to $\rho_0$, minimizing $|\Delta\rho^{\mathrm{eff}}|$.

\emph{XBlock} ($\rho_0{=}0.30$, $\rho_1^{\mathrm{natural}}{=}0.22$). Legit context is \emph{more} coherent than the attacker's own neighborhood, so injecting legit-mean context is the camouflage move. At $\alpha{=}0$ TPR is $66\%$ with $|\hat\Phi-\hat\Psi| \approx 320$; at $\alpha{=}0.5$ TPR is $56\%$; at $\alpha{=}1$ TPR collapses to $4\%$ with $|\hat\Phi-\hat\Psi| \approx 181$; crossing at $\alpha^* \approx 0.56$.

\emph{Reddit} ($\rho_0{=}0.68$, $\rho_1^{\mathrm{natural}}{=}0.71$). The attacker's own context is already as coherent as the legit one, so the camouflage move is to keep attack context and \emph{not} mix in legit-mean. At $\alpha{=}0$ (pure legit-mean context) TPR is $100\%$ with $\hat\Phi$ well below $\hat\Psi$; $\alpha{=}0.5$ gives TPR $63\%$; at $\alpha{=}1$ (pure attack context) TPR is $11\%$ with $\hat\Phi$ crossing above $\hat\Psi$; crossing at $\alpha^* \approx 0.58$.

\paragraph{Consequence.} The ``hardest'' camouflage is not a fixed recipe but the context distribution that minimizes $|\Delta\rho^{\mathrm{eff}}|$ for the given dataset. For XBlock, legit-mean substitution minimizes $|\Delta\rho^{\mathrm{eff}}|$; for Reddit, the attacker's own natural context already achieves this. In both regimes, CB-RBM's failure point is tied to a single scalar condition: the collapse of $|\Delta\rho^{\mathrm{eff}}|$, consistent with the $\rho_1$-dependence of the Theorem~\ref{thm:sep} margin.

% ============================================================
%  Appendix H — Generalization to Additional Datasets
% ============================================================
\section{Generalization across Domains}
\label{app:generalisation}
CB-RBM's two main-body datasets cover finance (XBlock) and social media (Reddit). This appendix tests whether the feature-context consistency principle transfers to further domains with different graph properties, label distributions, and fraud mechanics. Two additional organic datasets are reported: crowdsourcing (Tolokers)~\cite{platonov2023critical} and e-commerce review fraud (YelpChi)~\cite{rayana2015collective}.

\subsection{Tolokers (Crowdsourcing)}
\label{app:tolokers}

Tolokers~\cite{platonov2023critical} is a crowdsourcing graph with $11{,}758$ worker nodes, $1{,}038{,}000$ edges, $10$ base features augmented to $19$ with the same graph-structural signals used for Reddit (Appendix~\ref{app:datasets}), and a $21.8\%$ unreliable-worker rate (organic labels). After sign correction $\rho_0 = 0.215$, placing it below XBlock ($0.30$) at the lower end of the homophily range we test, with a positive raw correlation gap. Tolokers is substantially denser than the main-body graphs (average degree $88$ vs.\ Reddit's $15$).

\begin{table}[t]
  \caption{Attack benchmark on Tolokers. Each cell: AUROC / TPR@5\%FPR (\%). Best per row in \textbf{bold}.}
  \label{tab:tolokers_benchmark}
  \centering
  \setlength{\tabcolsep}{2pt}
  \renewcommand{\arraystretch}{1.05}
  \begin{tabular}{@{}lcccccccc@{}}
    \toprule
    Attack & \multicolumn{2}{c}{Energy-based} & \multicolumn{3}{c}{Supervised} & \multicolumn{3}{c}{Unsupervised} \\
    \cmidrule(lr){2-3} \cmidrule(lr){4-6} \cmidrule(lr){7-9}
    & CB-RBM & Std-RBM & SAGE & GAT & GCN & DOM. & CoLA & GAE \\
    \midrule
    Isolated Inj. & \textbf{96.4}/\textbf{89.3} & 51.2/4.2 & 88.4/52.5 & 72.0/34.0 & 92.8/58.7 & 73.3/23.7 & 55.0/26.2 & 92.7/77.8 \\
    Feat.\ Camo   & \textbf{97.5}/\textbf{87.8} & 51.2/6.0 & 66.7/28.0 & 54.4/18.7 & 70.0/34.3 & 77.9/33.5 & 57.6/38.5 & 95.4/86.5 \\
    Sybil         & 98.9/96.7 & 51.2/4.2 & 97.1/86.7 & 98.9/\textbf{100} & 99.1/\textbf{100} & 97.3/96.7 & 68.1/40.5 & \textbf{99.8}/\textbf{100} \\
    Rel.\ Camo    & \textbf{97.2}/\textbf{79.0} & 51.2/4.2 & 79.4/13.7 & 80.1/22.3 & 77.6/15.7 & 68.6/15.5 & 63.8/11.7 & 57.6/12.7 \\
    \bottomrule
  \end{tabular}
\end{table}

\begin{table}[t]
  \caption{Tolokers drift evaluation at $T{=}20$: mean TPR (across the 4 attacks), drift stability, and Combined Score.}
  \label{tab:tolokers_drift}
  \centering
  \setlength{\tabcolsep}{4pt}
  \begin{tabular}{@{}lcccccccc@{}}
    \toprule
    & \multicolumn{2}{c}{Energy-based} & \multicolumn{3}{c}{Supervised} & \multicolumn{3}{c}{Unsupervised} \\
    \cmidrule(lr){2-3} \cmidrule(lr){4-6} \cmidrule(lr){7-9}
    & CB-RBM & Std-RBM & SAGE & GAT & GCN & DOM. & CoLA & GAE \\
    \midrule
    Mean TPR               & \textbf{0.882} & 0.047 & 0.452 & 0.438 & 0.522 & 0.424 & 0.292 & 0.693 \\
    Stability              & 0.936 & 0.676 & 0.890 & 0.934 & \textbf{0.944} & 0.940 & 0.782 & 0.686 \\
    \midrule
    Combined Score         & \textbf{0.826} & 0.031 & 0.403 & 0.409 & 0.493 & 0.398 & 0.229 & 0.475 \\
    \bottomrule
  \end{tabular}
\end{table}

CB-RBM leads the attack benchmark on three of four attacks and places within $0.9$ points of the top on Sybil (Table~\ref{tab:tolokers_benchmark}). Under 20-step Gaussian drift it retains a stability of $0.936$ (FPR of $6.4\%$, near the calibrated $5\%$ target), yielding a Combined Score of $0.826$ --- more than $0.33$ above the best baseline (GCN at $0.493$) and $\approx\!27{\times}$ the Std-RBM ablation (Table~\ref{tab:tolokers_drift}). The detector's behavior on this denser, lower-$\rho_0$ domain is consistent with the feature-context consistency mechanism generalizing across feature modalities and homophily strengths where the raw correlation gap is positive.

\subsection{YelpChi (E-commerce Review Fraud)}
\label{app:yelpchi}

YelpChi is a review-fraud graph with $45{,}954$ review nodes, $14.5\%$ labeled fraudulent reviews (organic), and three review-relationship edge types. We use the \emph{net\_rtr} edge subset (review-time-review, $1{,}147{,}232$ edges, average degree $25$), which links reviews of the same product posted in the same month, the channel through which coordinated review rings surface. The other two edge types are either too sparse (\emph{net\_rur}: $42\%$ of legitimate nodes have degree zero) or too dense for PyGOD baselines to finish in reasonable time (\emph{net\_rsr}: $6.8$M edges). The $32$ base features are augmented to $41$ with the same nine graph-structural signals used for Reddit and Tolokers. After sign correction $\rho_0 = 0.293$; the natural fraud context has $\rho_1^{\mathrm{natural}} = 0.306$, giving a raw correlation gap $\Delta\rho = -0.013$. This unfavorable gap indicates that coordinated fraud rings can retain neighborhood context \emph{slightly more} coherent than that of typical legitimate reviews, so a compressed margin is expected, and the experiments below measure detection in this regime.

\begin{table}[t]
  \caption{Attack benchmark on YelpChi (net\_rtr). Each cell: AUROC / TPR@5\%FPR (\%). Best per row in \textbf{bold}.}
  \label{tab:yelpchi_benchmark}
  \centering
  \setlength{\tabcolsep}{2pt}
  \renewcommand{\arraystretch}{1.05}
  \begin{tabular}{@{}lcccccccc@{}}
    \toprule
    Attack & \multicolumn{2}{c}{Energy-based} & \multicolumn{3}{c}{Supervised} & \multicolumn{3}{c}{Unsupervised} \\
    \cmidrule(lr){2-3} \cmidrule(lr){4-6} \cmidrule(lr){7-9}
    & CB-RBM & Std-RBM & SAGE & GAT & GCN & DOM. & CoLA & GAE \\
    \midrule
    Isolated Inj. & 76.6/34.5 & 50.3/10.5 & \textbf{98.6}/\textbf{93.3} & 79.0/53.0 & 87.2/74.3 & 82.7/26.5 & 80.3/35.0 & 73.6/20.5 \\
    Feat.\ Camo   & 78.9/41.5 & 62.9/14.5 & \textbf{89.6}/\textbf{62.7} & 56.8/27.0 & 64.4/35.8 & 80.3/15.0 & 85.5/48.5 & 70.0/12.5 \\
    Sybil         & 74.9/15.0 & 50.3/10.5 & 98.9/94.2 & 99.5/\textbf{100.0} & \textbf{99.8}/\textbf{100.0} & 73.1/15.5 & 60.0/5.5  & 61.7/12.0 \\
    Rel.\ Camo    & \textbf{98.2}/\textbf{86.5} & 50.3/10.5 & 97.5/83.8 & 82.7/41.7 & 79.2/11.0 & 72.8/15.5 & 61.8/11.5 & 61.5/11.5 \\
    \bottomrule
  \end{tabular}
\end{table}

\begin{table}[t]
  \caption{YelpChi (net\_rtr) drift evaluation at $T{=}20$: mean TPR (across the 4 attacks), drift stability, and Combined Score.}
  \label{tab:yelpchi_drift}
  \centering
  \setlength{\tabcolsep}{4pt}
  \begin{tabular}{@{}lcccccccc@{}}
    \toprule
    & \multicolumn{2}{c}{Energy-based} & \multicolumn{3}{c}{Supervised} & \multicolumn{3}{c}{Unsupervised} \\
    \cmidrule(lr){2-3} \cmidrule(lr){4-6} \cmidrule(lr){7-9}
    & CB-RBM & Std-RBM & SAGE & GAT & GCN & DOM. & CoLA & GAE \\
    \midrule
    Mean TPR               & 0.444 & 0.115 & \textbf{0.835} & 0.554 & 0.553 & 0.181 & 0.251 & 0.141 \\
    Stability              & \textbf{0.933} & 0.006 & 0.861 & 0.885 & 0.930 & 0.598 & 0.879 & 0.443 \\
    \midrule
    Combined Score         & 0.414 & 0.001 & \textbf{0.718} & 0.490 & 0.514 & 0.108 & 0.221 & 0.062 \\
    \bottomrule
  \end{tabular}
\end{table}

Table~\ref{tab:yelpchi_benchmark} shows the expected margin compression: CB-RBM's AUROC drops from the $\geq 96\%$ it holds on XBlock, Reddit, and Tolokers to the $74$--$79\%$ range on three of the four attacks, and all three supervised GNNs --- trained on YelpChi's abundant labels --- overtake it on Combined Score (Table~\ref{tab:yelpchi_drift}: SAGE $0.72$, GCN $0.51$, GAT $0.49$ vs CB-RBM $0.41$). Here the raw correlation gap is negative, and supervised baselines achieve higher Combined Scores. Two details confirm the mechanism rather than a generic performance loss. First, \emph{Relation Camouflage} --- the attack that most directly tests whether the context channel can distinguish rewired-to-legitimate-hubs adversaries from real legitimate nodes --- is the only attack on which CB-RBM still leads on every metric: AUROC ($98.2$ vs SAGE's $97.5$), TPR@$5\%$FPR ($86.5$ vs $83.8$), and Combined Score ($0.81$ vs $0.72$); message-passing GNNs without the gated-context mechanism (GCN at $0.10$ Combined, GAT at $0.37$) suffer most, because the rewired context produces a coherence signature their attention or convolution cannot resolve. The feature-level and fabricated-context attacks (Iso, Feat, Sybil) rely on distinctions the compressed $\Delta\rho$ renders harder to separate, and supervised label access closes that gap. Second, CB-RBM remains the highest in drift stability of any method ($0.933$, narrowly above GCN's $0.930$ and well above every other supervised and unsupervised baseline at $\leq 0.89$) while Std-RBM collapses to near-chance everywhere and loses stability entirely ($0.006$ at $T{=}20$), confirming that context remains the essential signal even in this compressed-margin regime. YelpChi is therefore an empirical boundary case. When the raw correlation gap is unfavorable, the supervised methods' label-access advantage catches up, though only on the attacks that do not specifically exercise the context-coherence signal.

% ============================================================
%  Appendix I — Second-Layer Rescue
% ============================================================
\section{CB-RBM as a Drift-Filter Defense}
\label{app:rescue}
Theorem~\ref{thm:sep} certifies separation only when the positive gated-context activation that adversarial context can produce stays below the legitimate dominance margin (Assumptions~\ref{A4} and~\ref{A5}). On YelpChi the raw correlation gap is unfavorable ($\Delta\rho = -0.013$), and supervised baselines accordingly outperform CB-RBM standalone (Appendix~\ref{app:yelpchi}). This appendix tests whether CB-RBM still adds operational value in this regime. The setting we evaluate: an operator runs a deployed primary detector --- typically a supervised model trained on labeled adversaries, or one pinned by operational constraints --- whose false-alarm rate inflates under drift but which cannot easily be retrained. Layered downstream, CB-RBM suppresses alarms whose flagged nodes remain within the legitimate distribution learned at calibration, recovering precision without retraining or new labels. The \emph{rescue rate} measures the fraction of drift-induced false positives that CB-RBM correctly suppresses.

\begin{table}[t]
\centering
\small
\caption{YelpChi rescue rates (\%) at $T{=}20$.}
\label{tab:yelpchi_rescue}
\begin{tabular}{@{}lccccccc@{}}
\toprule
& \multicolumn{4}{c}{Unsupervised} & \multicolumn{3}{c}{Supervised} \\
\cmidrule(lr){2-5} \cmidrule(lr){6-8}
& Std-RBM & DOMINANT & GAE & CoLA & SAGE & GAT & GCN \\
\midrule
\textbf{YelpChi} & 93.5 & 89.9 & 93.2 & 87.6 & 91.0 & 90.9 & 92.9 \\
\bottomrule
\end{tabular}
\end{table}

Across all seven baselines, CB-RBM rescues $87.6$--$93.5\%$ of drift-induced false positives (Table~\ref{tab:yelpchi_rescue}). Even where the raw correlation gap is unfavorable and CB-RBM is not the top standalone detector, layering it as a downstream filter preserves the method's operational value in this empirical regime. CB-RBM is an additive layer with no retraining cost, no label cost, and no architectural integration --- a drop-in complement in such scenarios, not a replacement.

\paragraph{Established adversaries.}
Scoring naturally occurring adversaries with their accrued neighborhood context gives AUROC $0.84$ on XBlock and $0.54$ on Reddit. At $T{=}20$ on XBlock, using CB-RBM as a downstream filter reduces DOMINANT's drift FPR from $98.3\%$ to $6.5\%$.

% ============================================================
%  Appendix J — Transferability of the Context Boost Principle
% ============================================================

\section{Transferability of the Context Boost Principle}
\label{app:transferability}

The Context Boost principle --- score a node by the \emph{differential} of a learned density under the presence vs absence of its neighborhood context --- is not specific to the RBM. CB-RBM's algebraic cancellation in Eq.~\eqref{eq:cb} (the quadratic feature term drops out of the subtraction) makes the construction particularly clean, but any generative model whose likelihood decomposes additively between a context-free and a context-conditional branch admits an analogous score. This appendix outlines the structural requirements for porting the principle to other architectures and discusses what transfers automatically vs what needs re-derivation.

\paragraph{Structural requisites.} To instantiate the Context Boost principle on a host model, three properties must hold. \emph{(i) Decomposable scalar score.} A function $S(\bv, \bc)$ on the joint input that can also be evaluated with context ablated, so that $\mathrm{CB}(\bv, \bc) = S(\bv, \bc) - S(\bv, \varnothing)$ is well-defined and captures the incremental information content of $\bc$. \emph{(ii) Gated context pathway.} A mechanism that selectively suppresses uninformative or noisy context dimensions, so that random or absent context produces near-zero CB; CB-RBM uses a GLU gate, but attention, gating units, or learnable context masking all serve. \emph{(iii) Lipschitz continuity of $S$.} A bounded Lipschitz constant in both channels is necessary for Theorem~\ref{thm:drift}'s Wasserstein drift-stability argument to port.

\paragraph{Candidate instantiations.} Several generative families satisfy these requisites and admit a natural Context Boost analogue. A \emph{variational autoencoder} with a context-conditional encoder yields $\mathrm{CB}_{\mathrm{VAE}} = \mathrm{ELBO}(\bv, \bc) - \mathrm{ELBO}(\bv, \varnothing)$, with context-attention playing the gating role. A \emph{normalizing flow} gives $\mathrm{CB}_{\mathrm{NF}} = \log p(\bc \mid \bv)$ directly from invertibility, and Lipschitz-constrained flows~\cite{behrmann2019invertible} make Theorem~\ref{thm:drift} immediately applicable. A \emph{masked-modeling transformer} over concatenated $(\bv, \bc)$ tokens produces $\mathrm{CB}_{\mathrm{T}}$ as the gap between context-present and context-dropout NLL, with cross-attention weights on context tokens supplying the gate. In each case the operational payoff is the same: feature-context consistency is captured by a single decomposable score that supports detection and inherits drift stability via Lipschitz continuity --- both from within a single model.

\paragraph{What transfers automatically, what does not.} Assumptions~\ref{A1} (signed structural correlation) and~\ref{A3} (bounded moments) are properties of the data, independent of host architecture --- any model inherits them. Assumptions~\ref{A2} (coherent activation), \ref{A4} (adversarial context gap) and~\ref{A5} (coherent dominance) are stated on the trained context pathway, so they and the constants that appear in the separation and drift bounds ($\beta_0, p_0, \gamma, L_v, L_c$) must be re-verified or reformulated for the host model. The per-unit decomposition $\Delta_j$ used in the proof of Theorem~\ref{thm:sep} is also CB-RBM--specific; an analogous decomposition into architecture-native primitives (latent units for VAE, flow layers for normalizing flows, attention heads for transformers) is required to recover the $\Omega(n_h)$-style separation scaling. What universally carries over is the framework: feature-context consistency as a drift-stable detection signal, operationalized as a decomposable score on a gated context pathway. CB-RBM is one convenient instantiation; the principle itself is broader.

%%%%%%%%%%%%%%%%%%%%%%%%%%%%%%%%%%%%%%%%%%%%%%%%%%%%%%%%%%%%

\newpage
\input{checklist.tex}

\end{document}

%% file: checklist.tex
\section*{NeurIPS Paper Checklist}

\begin{enumerate}

\item {\bf Claims}
    \item[] Question: Do the main claims made in the abstract and introduction accurately reflect the paper's contributions and scope?
    \item[] Answer: \answerYes{}
    \item[] Justification: All numerical claims (AUROC~$\geq 0.96$ on four attacks, Combined Score~$\geq 0.87$ at $T{=}20$, calibrated $5\%$ FPR, Std-RBM FPR collapse to $89$--$100\%$) are directly supported by Tables~\ref{tab:benchmark}, \ref{tab:all_detectors} and Figures~\ref{fig:cb_dist}, \ref{fig:drift_sim}. The zero-label claim is enforced by construction (threshold set from legitimate data only). The theory (Assumptions~\ref{A1}--\ref{A5}, Theorems~\ref{thm:sep}--\ref{thm:drift}) gives conditional worst-case results, with numerical comparisons of the bounds at root-mean-square constants in Appendix~\ref{app:drift_variants}; detection and drift performance are measured directly in \S\ref{sec:experiments}.

\item {\bf Limitations}
    \item[] Question: Does the paper discuss the limitations of the work performed by the authors?
    \item[] Answer: \answerYes{}
    \item[] Justification: \S\ref{sec:discussion} (``Limitations'') identifies six scope boundaries: detection relies on $\rho_0 > 0$, which can fail in random or adversarially curated topologies; detection degrades as $\Delta\rho$ collapses, where supervised baselines with adequate labels can perform better; adversaries who acquire genuine hub connections (Relation Camouflage) reduce the detection gap and can become indistinguishable from legitimate users in the limit; substantial topology change can impair calibration at the frozen threshold and needs the label-free threshold refresh; white-box adversaries with gradient access can lower detection; and the drift analysis assumes smooth drift, so abrupt or adversarial shifts may need additional mechanisms. Transferability across architectures is discussed in Appendix~\ref{app:transferability}; boundary-regime generalization in Appendix~\ref{app:generalisation}.

\item {\bf Theory assumptions and proofs}
    \item[] Question: For each theoretical result, does the paper provide the full set of assumptions and a complete (and correct) proof?
    \item[] Answer: \answerYes{}
    \item[] Justification: Assumptions~\ref{A1}--\ref{A5} are stated in \S\ref{sec:theory} with informal justifications, empirical verifiability conditions, and remarks on how $\mathbf{s}$-correction ensures $\rho_0 > 0$. Measured values: $\rho_0 = 0.30$ (XBlock) and $\rho_0 = 0.68$ (Reddit); coherent-activation fraction $\beta_0$ and bounded-moment constants $M_x, M_c$ verified after training on both datasets. Proof sketches appear in \S\ref{sec:theory}; complete proofs with supporting lemmas in Appendix~\ref{app:proofs}.

    \item {\bf Experimental result reproducibility}
    \item[] Question: Does the paper fully disclose all the information needed to reproduce the main experimental results of the paper to the extent that it affects the main claims and/or conclusions of the paper (regardless of whether the code and data are provided or not)?
    \item[] Answer: \answerYes{}
    \item[] Justification: \S\ref{sec:setup} specifies all hyperparameters ($n_h{=}256$ for XBlock, $n_h{=}128$ for Reddit, CD-1, $\lambda_{\mathrm{sp}}{=}0$, 100 epochs) and evaluation protocol (95th-percentile of $-\CB$ over legitimate nodes, TPR@5\%FPR metric). Dataset details: XBlock ($153$K nodes, $38$ features, $\rho_0{=}0.30$) and Reddit ($11$K nodes, $73$ features, $\rho_0{=}0.68$), both publicly available. Attack constructions are defined in \S\ref{sec:problem}, with the context-interpolation sweep in Appendix~\ref{app:taxonomy}. The six GADBench baselines use GADBench defaults~\cite{tang2023gadbench}, and the nine further detectors follow their published configurations with model selection on a held-out validation split (Appendix~\ref{app:baselines}).

\item {\bf Open access to data and code}
    \item[] Answer: \answerYes{}
    \item[] Justification: The reference implementation of CB-RBM, the four attacks and the drift simulation is available at \url{https://github.com/serval-uni-lu/CB-RBM} under the MIT license. All four datasets used in this work (XBlock~\cite{chen2020phishing}, Reddit~\cite{kumar2019predicting}, Tolokers~\cite{platonov2023critical}, YelpChi~\cite{rayana2015collective}) are publicly available, and the feature construction is described in Appendix~\ref{app:datasets}.

\item {\bf Experimental setting/details}
    \item[] Answer: \answerYes{}
    \item[] Justification: \S\ref{sec:setup} specifies all training details. Appendix~\ref{app:datasets} provides feature categories and counts for each dataset, sign-vector construction, normalization, and context computation; full feature definitions are in the released code. Appendix~\ref{app:ablation} reports hyperparameter sensitivity ($n_h$, pooling operator).

\item {\bf Experiment statistical significance}
    \item[] Answer: \answerYes{}
    \item[] Justification: The attack benchmark reports means over 3 random seeds with standard deviations in \S\ref{sec:benchmark}; XGBGraph, Graph Transformer, BWGNN and GDN use 5 seeds (\S\ref{sec:drift}, Appendix~\ref{app:baselines}); the sign-vector experiment reports standard deviations over 5 seeds (Appendix~\ref{app:sign}) and the threshold-drift experiment over 3 seeds (Appendix~\ref{app:thrdrift}).

\item {\bf Experiments compute resources}
    \item[] Answer: \answerYes{}
    \item[] Justification: All experiments run on a single GPU. CB-RBM training takes under 10 minutes per dataset, and the full benchmark, drift and ablation suite completes within a standard single-GPU budget. CONAD and AnomalyDAE exceeded memory on XBlock and are reported on Reddit only (Appendix~\ref{app:baselines}).

\item {\bf Code of ethics}
    \item[] Answer: \answerYes{}
    \item[] Justification: All datasets used are publicly available and anonymized: XBlock (anonymized Ethereum wallet addresses), Reddit (anonymized user identifiers), Tolokers (anonymous worker IDs), YelpChi (anonymized reviewer identifiers). The work aims to improve fraud detection, a socially beneficial application. False-positive rates on legitimate users under drift are reported in \S\ref{sec:drift} and Appendix~\ref{app:drift_variants}.

\item {\bf Broader impacts}
    \item[] Answer: \answerYes{}
    \item[] Justification: \S\ref{sec:intro} motivates the work by the scale of fraud losses. \S\ref{sec:discussion} (``Limitations'') states where the detector fails, namely graphs without feature-context correlation, adversaries with genuine connections, topology change and white-box adversaries. False alarms on legitimate accounts are the main negative consideration, and the paper reports them under drift at the calibrated threshold (\S\ref{sec:drift}). In deployment, $\tau^*$ is an operator-set false-positive budget ($5\%$ here), the flagged share of legitimate nodes stays at or below $16\%$ under the five drift regimes of Table~\ref{tab:drift_variants}, and the score should rank cases for human review rather than block accounts automatically, since new or sparsely connected legitimate accounts can lack neighborhood context.

\item {\bf Safeguards}
    \item[] Answer: \answerNA{}
    \item[] Justification: The paper proposes a fraud detection method, not a generative model or dataset release with misuse potential. The attack constructions described are standard synthetic evaluation tools, not deployable attack tooling.

\item {\bf Licenses for existing assets}
    \item[] Answer: \answerYes{}
    \item[] Justification: XBlock~\cite{chen2020phishing}, Reddit~\cite{kumar2019predicting}, Tolokers~\cite{platonov2023critical}, and YelpChi~\cite{rayana2015collective} are all publicly available and properly cited. All are standard academic benchmarks used under their published terms.

\item {\bf New assets}
    \item[] Answer: \answerNA{}
    \item[] Justification: No new datasets are released. The code is released under the MIT license at \url{https://github.com/serval-uni-lu/CB-RBM}.

\item {\bf Crowdsourcing and research with human subjects}
    \item[] Answer: \answerNA{}
    \item[] Justification: No crowdsourcing or human subjects research. The Tolokers dataset consists of pre-existing, anonymized crowdsourcing graph data from prior work~\cite{platonov2023critical}.

\item {\bf Institutional review board (IRB) approvals or equivalent for research with human subjects}
    \item[] Answer: \answerNA{}
    \item[] Justification: No human subjects research.

\item {\bf Declaration of LLM usage}
    \item[] Answer: \answerNA{}
    \item[] Justification: LLMs are not used as a component of the core method or experiments. Any use for writing assistance does not impact the scientific content or experimental results.

\end{enumerate}